\def\bi{\begin{itemize}}
\def\ei{\end{itemize}}
\def\beas{\begin{eqnarray*}}
\def\eeas{\end{eqnarray*}}
\def\bea{\begin{eqnarray}}
\def\eea{\end{eqnarray}}
\def\beq{\begin{equation}}
\def\eeq{\end{equation}}
\def\bit{\begin{itemize}}
\def\eit{\end{itemize}}
\def\ben{\begin{enumerate}}
\def\een{\end{enumerate}}
\def\BA{\begin{array}}
\def\EA{\end{array}}
\def\ve{\mathbf{\varepsilon}}
\newcommand{\prox}{ { \rm prox}}
\newcommand{\rb}{\mathbb{R}}
\newcommand{\BlackBox}{\rule{1.5ex}{1.5ex}}  
\newenvironment{proof}{\par\noindent{\bf Proof\ }}{\hfill\BlackBox\\[2mm]}
\newtheorem{lemma}{Lemma}
\newtheorem{proposition}{Proposition}
\newcommand{\BEAS}{\begin{eqnarray*}}
\newcommand{\EEAS}{\end{eqnarray*}}
\newcommand{\BEA}{\begin{eqnarray}}
\newcommand{\EEA}{\end{eqnarray}}
\newcommand{\BEQ}{\begin{equation}}
\newcommand{\EEQ}{\end{equation}}
\newcommand{\BIT}{\begin{itemize}}
\newcommand{\EIT}{\end{itemize}}
\newcommand{\BNUM}{\begin{enumerate}}
\newcommand{\ENUM}{\end{enumerate}}
\title{Convergence Rates of Inexact Proximal-Gradient Methods for Convex Optimization}
\author{
Mark Schmidt\\
\texttt{mark.schmidt@inria.fr}
\and
Nicolas Le Roux\\
\texttt{nicolas@le-roux.name}
\and
Francis Bach\\
\texttt{francis.bach@ens.fr}
\and
\\
INRIA - SIERRA Project - Team\\
Laboratoire d'Informatique de l'\'Ecole Normale Sup\'erieure\\
Paris, France
}
\date{\today}
\begin{document}

\maketitle

\begin{abstract}
We consider the problem of optimizing the sum of a smooth convex function and a non-smooth convex function using proximal-gradient methods, where an error is present in the calculation of the gradient of the smooth term or in the proximity operator with respect to the non-smooth term. 
We show that both the basic proximal-gradient method and the accelerated proximal-gradient method achieve the same convergence rate as in the error-free case, provided that the errors decrease at appropriate rates.
Using these rates, we perform as well as or better than a carefully chosen fixed error level on a set of structured sparsity problems.
\end{abstract}

\section{Introduction}

In recent years the importance of taking advantage of the structure of convex optimization problems has become a topic of intense research in the machine learning community. This is particularly true of techniques for non-smooth optimization, where taking advantage of the structure of non-smooth terms seems to be crucial to obtaining good performance.  Proximal-gradient methods and \emph{accelerated} proximal-gradient methods~\cite{beck2009fast,nesterov2007gradient} are among the most important methods for taking advantage of the structure of many of the non-smooth optimization problems that arise in practice.  In particular, these methods address composite optimization problems of the form
\begin{equation}
\label{eq:1}
\minimize{x\in\Real^d}\quad f(x) \defd g(x) + h(x),
\end{equation}
where $g$ and $h$ are convex functions but only $g$ is smooth.  One of the
most well-studied instances of this type of problem is $\ell_1$-regularized
least squares~\cite{tibshirani1996regression,chen2001atomic},
\[
\minimize{x\in\Real^d}\; \frac{1}{2}\norm{Ax-b}^2 + \lambda\norm{x}_1,
\]
where we use $\|\cdot\|$ to denote the standard $\ell_2$-norm.

Proximal-gradient methods are an appealing approach for solving these types of non-smooth optimization
problems because of their fast theoretical convergence rates and strong practical performance.
While classical subgradient methods only achieve an error level on the objective function of $O(1/\sqrt{k})$ after $k$ iterations, proximal-gradient methods have an error of $O(1/k)$ while \emph{accelerated} proximal-gradient methods futher reduce this to $O(1/k^2)$~\cite{beck2009fast,nesterov2007gradient}.  That
is, accelerated proximal-gradient methods for \emph{non-smooth} convex optimization
achieve the same optimal convergence rate that accelerated gradient methods
achieve for \emph{smooth} optimization.

Each iteration of a proximal-gradient method
requires the calculation of the proximity operator,
\begin{equation}
\label{eq:prox}
\prox_L(y) = \argmin_{x\in\Real^d}\; \frac{L}{2}\norm{x-y}^2 +
h(x),
\end{equation}
where $L$ is the Lipschitz constant of the gradient of $g$.
We can efficiently compute an analytic solution to this problem for several
notable choices of $h$, including the case of $\ell_1$-regularization and disjoint
group $\ell_1$-regularization~\cite{wright2009sparse,back_convex_sparsity_11}.  
However, in many scenarios the proximity operator may not have an analytic solution,
or it may be very expensive to compute this solution exactly.
 This includes important problems such as
total-variation regularization and its generalizations like the graph-guided fused-LASSO~\cite{fadili-tip-10-tv,chen2010graph},
nuclear-norm regularization and other regularizers on the singular values of
matrices~\cite{cai2008singular,ma2009fixed}, and different formulations of
overlapping group $\ell_1$-regularization with general
groups~\cite{jacob2009group,jenatton2010proximal}.  
Despite the difficulty in computing the exact proximity operator for these
regularizers, efficient methods have been developed to compute \emph{approximate}
proximity operators in all of these cases; accelerated projected gradient and Newton-like methods that work with a smooth dual problem have been used to compute approximate proximity operators in the context of total-variation
regularization~\cite{fadili-tip-10-tv,barbero2011Newton}, Krylov subspace methods and low-rank representations have been used to compute approximate proximity operators in the context of nuclear-norm
regularization~\cite{cai2008singular,ma2009fixed}, and variants of Dykstra's algorithm (and related dual methods) have been used to compute approximate proximity operators in the context of overlapping group
$\ell_1$-regularization~\cite{jenatton2010proximal,liu2010fast,schmidt2010convex}.  

It is known that proximal-gradient methods that use an approximate proximity operator converge under only weak assumptions~\cite{patriksson1999nonlinear,combettes2004solving}; we briefly review this and other related work in the next section. 
However, despite the many recent works showing impressive empirical performance of (accelerated) proximal-gradient methods that use an approximate proximity operator~\cite{fadili-tip-10-tv,cai2008singular,ma2009fixed,barbero2011Newton,liu2010fast,schmidt2010convex}, 
up until recently there was no theoretical analysis on how the error in the calculation of the proximity operator affects the convergence \emph{rate} of proximal-gradient methods.
In this work, we show in several contexts that, provided the error in the proximity operator calculation is controlled in an appropriate way, inexact proximal-gradient strategies achieve the \emph{same} convergence rates as the corresponding exact methods. In particular, in Section~\ref{sec:rates} we first consider convex objectives and analyze the inexact proximal-gradient (Proposition~\ref{prop:convex}) and accelerated proximal-gradient (Proposition~\ref{prop:accelerated_convex}) methods. 
We then analyze these two algorithms for strongly convex objectives (Proposition~\ref{prop:strongly_convex} and Proposition~\ref{prop:accelerated_strongly_convex}). 
Note that, in these analyses, we also consider the possibility that there is an error in the calculation of the gradient of $g$.  We then present an experimental comparison of various inexact proximal-gradient strategies in the context of solving a structured sparsity problem (Section \ref{sec:experiments}).

\section{Related Work}

The algorithm we shall focus on in this paper is the proximal-gradient
method
\beq
x_{k} = \prox_L\left[y_{k-1} - (1/L) (g'(y_{k-1}) + e_{k})\right] \; ,
\label{eq:2}
\eeq
where $e_k$ is the error in the calculation of the gradient and the proximity problem~(\ref{eq:prox}) is solved inexactly so that $x_k$ has an error of $\ve_k$ in terms of the proximal objective function~\eqref{eq:prox}.
In the basic proximal-gradient method we choose $y_k = x_k$, while in the accelerated proximal-gradient method we choose
\[
y_k = x_k + \beta_k(x_k - x_{k-1}),
\]
where the sequence $\{\beta_k\}$ is chosen to accelerate the convergence rate.

There is a substantial amount of work on methods that use an exact proximity operator but have an error in
the gradient calculation, corresponding to the special case where $\ve_k = 0$ but $e_k$ is non-zero.
For example, when the $e_k$ are independent, zero-mean, and finite-variance random variables, then proximal-gradient
methods achieve the (optimal) error level of $O(1/\sqrt{k})$~\cite{duchi2009efficient,langford2009sparse}. This is different than the scenario we analyze in this paper since we do \emph{not} assume unbiased nor independent errors but instead consider a sequence of errors converging to 0.  This leads to faster convergence rates and makes our analysis applicable to the case of deterministic and even adversarial errors. 

Several authors have recently analyzed the case of a fixed deterministic error in the
gradient, and shown that accelerated gradient methods
achieve the optimal convergence rate up to some accuracy that depends on the
fixed error level~\cite{d2005smooth,baes2009estimate,devolder2010first}, while the earlier work of~\cite{nedic2000convergence} analyzes the gradient method in the context of a fixed error level.  This contrasts with our analysis where by allowing the error to change at every iteration we can achieve convergence to the optimal solution. Also, we can tolerate a large error in early iterations when we are far from the solution, which may lead to substantial computational gains. Other authors have
analyzed the convergence rate of the gradient and projected-gradient methods
with a decreasing sequence of
errors~\cite{luo1993error,friedlander2011hybrid} but this analysis 
does not consider the important class of accelerated gradient methods.  In contrast, the analysis of~\cite{devolder2010first} allows a decreasing sequence of errors 
(though convergence rates in this context are not explicitly mentioned) and considers the accelerated projected-gradient method.  However, the authors of this work only consider the case of an exact projection step and they assume the availability of an oracle that yields global lower and upper bounds on the function.  This non-intuitive oracle leads to a novel analysis of smoothing methods, but also to slower convergence rates than proximal-gradient methods.  The analysis of~\cite{baes2009estimate} considers errors in both the gradient and projection operators for accelerated projected-gradient methods but requires that the domain of the function is compact. None of these works consider proximal-gradient methods.

In the context of \emph{proximal-point} algorithms, there is a substantial literature on using inexact proximity operators with a decreasing sequence of errors, dating back to the seminal work of Rockafellar~\cite{rockafellar1976monotone}.  Accelerated proximal-point methods with a decreasing sequence of errors have also been examined, beginning with~\cite{guler1992new}.  However, unlike proximal-gradient methods where the proximity operator is only computed with respect to the non-smooth function $h$, proximal-point methods require the calculation of the proximity operator with respect to the full objective function.  In the context of composite optimization problems of the form~\eqref{eq:1}, this requires the calculation of the proximity operator with respect to $g+h$.  Since it ignores the structure of the problem, this proximity operator may be as difficult to compute (even approximately) as the minimizer of the original problem.

Convergence of inexact proximal-gradient methods can be established with only weak assumptions on the method used to approximately solve~\eqref{eq:prox}.  For example, we can establish that inexact proximal-gradient methods converge under some closedness assumptions on the mapping induced by the approximate proximity operator, and the assumption that the algorithm used to compute the inexact proximity operator achieves sufficient descent on problem~\eqref{eq:prox} compared to the previous iteration $x_{k-1}$~\cite{patriksson1999nonlinear}.  Convergence of inexact proximal-gradient methods can also be established under the assumption that the norms of the errors are summable~\cite{combettes2004solving}.  However, these prior works did not consider the \emph{rate} of convergence of inexact proximal-gradient methods, nor did they consider accelerated proximal-gradient methods.  
Indeed, the authors of~\cite{fadili-tip-10-tv} chose to use the non-accelerated variant of the proximal-gradient algorithm since even convergence of the accelerated proximal-gradient method had not been established under an inexact proximity operator.

While preparing the final version of this work,~\cite{villa2011accelerated} independently gave an analysis of the accelerated proximal-gradient method with an inexact proximity operator and a decreasing sequence of errors (assuming an exact gradient).  Further, their analysis leads to a weaker dependence on the errors than in our Proposition~\ref{prop:accelerated_convex}.  However, while we only assume that the proximal problem can be solved up to a certain accuracy, they make the much stronger assumption that the inexact proximity operator yields an $\ve_k$-subdifferential of $h$~\cite[Definition 2.1]{villa2011accelerated}.  Our analysis can be modified to give an improved dependence on the errors under this stronger assumption. In particular, the terms in $\sqrt{\ve_i}$ disappear from the expressions of $A_k$, $\widetilde{A}_k$ and $\widehat{A}_k$. In the case of Propositions~\ref{prop:convex} and~\ref{prop:accelerated_convex}, this leads to the optimal convergence rate with a slower decay of $\ve_i$. More details may be found after Lemma~\ref{lemma:subdifferential} in the Appendix.  More recently,~\cite{jianginexact} gave an alternative analysis of an accelerated proximal-gradient method with an inexact proximity operator and a decreasing sequence of errors (assuming an exact gradient), but under a non-intuitive assumption on the relationship between the approximate solution of the proximal problem and the $\ve_k$-subdifferential of $h$.

\section{Notation and Assumptions}

In this work, we assume that the smooth function $g$ in~\eqref{eq:1} is convex and differentiable, and that its gradient $g'$ is Lipschitz-continuous with constant $L$, meaning that for all $x$ and $y$ in $\Real^d$ we have
\[
\norm{g'(x)-g'(y)} \leqslant L\norm{x-y} \; .
\]
This is a standard assumption in differentiable optimization, see~\cite[\S2.1.1]{nesterov2004introductory}.  If $g$ is twice-differentiable, this corresponds to the assumption that the eigenvalues of its Hessian are bounded above by $L$.
In Propositions~\ref{prop:strongly_convex} and~\ref{prop:accelerated_strongly_convex} only, we will also assume that $g$ is $\mu$-strongly convex (see~\cite[\S2.1.3]{nesterov2004introductory}), meaning that for all $x$ and $y$ in $\Real^d$ we have
\[
g(y) \geqslant g(x) + \langle g'(x),y-x \rangle + \frac{\mu}{2}||y-x||^2.
\]
However, apart from Propositions~\ref{prop:strongly_convex} and~\ref{prop:accelerated_strongly_convex}, we only assume that this holds with $\mu=0$, which is equivalent to convexity of $g$.

In contrast to these assumptions on $g$, we will only assume that $h$ in~\eqref{eq:1} is a lower semi-continuous proper convex function (see~\cite[\S1.2]{bertsekas2009convex}), but will not assume that $h$ is differentiable or Lipschitz-continuous.  This allows $h$ to be any real-valued convex function, but also allows for the possibility that $h$ is an extended real-valued convex function.  For example, $h$ could be the indicator function of a convex set, and in this case the proximity operator becomes the projection operator.

We will use $x_k$ to denote the parameter vector at iteration $k$, and $x^*$ to denote a minimizer of $f$.  
We assume that such an $x^*$ exists, but do not assume that it is unique.
 We use $e_k$
to denote the error in the calculation of the gradient at iteration $k$, and we use $\ve_k$ to denote the error in the proximal objective function achieved by $x_{k}$, meaning
that
\begin{equation}
\label{eq:3}
\frac{L}{2}\norm{x_{k}-y}^2 + h(x_{k}) \leqslant \ve_k + \min_{x\in\Real^d}\; \bigg\{\frac{L}{2}\norm{x-y}^2 +
h(x)\bigg\},
\end{equation}
where $y = y_{k-1} - (1/L)(g'(y_{k-1}) + e_k))$.  
Note that the proximal optimization problem~\eqref{eq:prox} is strongly convex and in practice we are often able to obtain such bounds via a duality gap (e.g., see~\cite{jenatton2010proximal} for the case of overlapping group $\ell_1$-regularization).

\section{Convergence Rates of Inexact Proximal-Gradient Methods}
\label{sec:rates}
In this section we present the analysis of the convergence rates of inexact proximal-gradient methods as a function of the sequences of solution accuracies to the proximal problems $\{\ve_k\}$, and the sequences of magnitudes of the errors in the gradient calculations $\{\|e_k\|\}$. We shall use (\textbf{H}) to denote the set of four assumptions which will be made for each proposition:
\bit
\item $g$ is convex and has $L$-Lipschitz-continuous gradient;
\item $h$ is a lower semi-continuous proper convex function;
\item The function $f = g +h$ attains its minimum at a certain $x^\ast \in \rb^n$;
\item $x_k$ is an $\ve_k$-optimal solution to the proximal problem~\eqref{eq:prox} in the sense of~\eqref{eq:3}.
\eit

We first consider the basic proximal-gradient method in the convex case:
\begin{proposition}[Basic proximal-gradient method - Convexity]
\label{prop:convex}
Assume (\textbf{H}) and that we iterate recursion~\eqref{eq:2} with $y_k = x_k$. Then, for all $k \geqslant 1$, we have 
   \BEQ
   f\left(\frac{1}{k}\sum_{i=1}^k x_i \right) - f(x^\ast) \leqslant
   \frac{L}{2k}
\left( \| x_0 -x^\ast\| + 2A_k  + \sqrt{2B_k}  \right) ^2 \; ,
   \EEQ
with
   \[
   A_k = \sum_{i=1}^k \left(\frac{\|  e_i \|}{L} +   \sqrt{\frac{2\varepsilon_i}{L}}\right) \;, \quad B_k = \sum_{i=1}^k \frac{\varepsilon_i}{L} \; .
\]

\end{proposition}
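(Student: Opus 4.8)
The plan is to reduce everything to a single telescoping one-step inequality and then to close the resulting recursion with an elementary lemma on nonnegative sequences. Throughout, write $z_k = x_{k-1} - (1/L)(g'(x_{k-1}) + e_k)$ for the perturbed gradient step (recall $y_{k-1}=x_{k-1}$ in the basic method), and let $p_k = \prox_L(z_k)$ be the \emph{exact} minimizer of the proximal objective $\psi_k(x) = \frac{L}{2}\|x-z_k\|^2 + h(x)$, so that the available iterate $x_k$ is only an $\varepsilon_k$-approximate minimizer of $\psi_k$ in the sense of \eqref{eq:3}. The first step is to quantify how far $x_k$ is from $p_k$: since the quadratic term makes $\psi_k$ strongly convex with modulus $L$ and $p_k$ is its minimizer, strong convexity gives $\frac{L}{2}\|x_k-p_k\|^2 \le \psi_k(x_k) - \psi_k(p_k) \le \varepsilon_k$, so that $\|x_k - p_k\| \le \sqrt{2\varepsilon_k/L}$. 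This single estimate is the source of every $\sqrt{2\varepsilon_i/L}$ appearing in $A_k$.

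Next I would derive the usual exact-style descent inequality, but stated at the exact point $p_k$. Combining the descent lemma for the $L$-smooth $g$ (between $x_{k-1}$ and $p_k$), convexity of $g$ at an arbitrary test point $x$, and the subgradient optimality condition $L(z_k - p_k) \in \partial h(p_k)$, the gradient terms cancel and a completion of squares yields, for every $x$,
\[
f(p_k) - f(x) \le \frac{L}{2}\|x - x_{k-1}\|^2 - \frac{L}{2}\|x - p_k\|^2 + \langle e_k,\, x - p_k\rangle .
\]
This is the clean telescoping inequality; notice that the recoverable $-\frac{L}{2}\|x-p_k\|^2$ term comes precisely from the strong convexity that the exact minimizer enjoys but that the function-value accuracy \eqref{eq:3} alone does not provide.

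The essential difficulty is that this inequality is expressed through the exact point $p_k$, whereas the proposition concerns the realizable iterate $x_k$; bridging this gap is where the inexactness genuinely costs something. Using the Step-1 bound together with $|\langle e_k, x-p_k\rangle| \le \|e_k\|(\|x-x_k\| + \sqrt{2\varepsilon_k/L})$ and $\|x-p_k\|^2 \ge \|x-x_k\|^2 - 2\|x-x_k\|\sqrt{2\varepsilon_k/L}$, I would rewrite the inequality in terms of $x_k$ and $\|x-x_k\|$, at the price of additive terms of order $\varepsilon_k$, of $\|e_k\|\sqrt{\varepsilon_k}$, and of the cross term $\sqrt{\varepsilon_k}\,\|x-x_k\|$. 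One then obtains a one-step inequality for $f(x_k)-f(x)$ that still telescopes in $\|x-x_{i}\|^2$ up to these controlled errors.

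Finally I would set $x = x^*$ and use $f(x_k) \ge f(x^*)$ to turn this into a recursion for $d_k = \|x_k - x^*\|$, in which $d_k^2$ is bounded by $d_{k-1}^2$ plus a multiple of $(\|e_k\|/L + \sqrt{2\varepsilon_k/L})\,d_k$ and terms in $\varepsilon_k/L$. Summing and telescoping the $d_{i-1}^2 - d_i^2$ part leaves an inequality of the shape $u_k^2 \le S_k + \sum_{i=1}^k \lambda_i u_i$ with $u_i = d_i$, with $\lambda_i$ proportional to $\|e_i\|/L + \sqrt{2\varepsilon_i/L}$, and with $S_k$ controlled by $\|x_0-x^*\|^2$ and $B_k$; the elementary nonnegative-sequence lemma (bounding such $u_k$ by $\sum_i \lambda_i + \sqrt{S_k}$) then yields the uniform estimate $\|x_i - x^*\| \le \|x_0 - x^*\| + 2A_k + \sqrt{2B_k}$. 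Substituting this uniform bound back into the summed one-step inequality, dividing by $k$, and invoking convexity of $f$ through Jensen's inequality to pass from $\frac{1}{k}\sum_{i=1}^k f(x_i)$ to $f(\frac1k\sum_{i=1}^k x_i)$ produces the claimed rate. I expect the main obstacles to be the careful bookkeeping in the bridging step, so that $A_k$ and $B_k$ emerge with exactly the stated constants, and the verification of the hypotheses of the sequence lemma, in particular that the aggregated constant terms assemble into a nondecreasing $S_k$ and that the errors multiplying the still-unbounded $d_i$ are handled self-consistently.
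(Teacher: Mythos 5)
Your overall architecture coincides with the paper's: a one-step inequality that telescopes in $\frac{L}{2}\|x_i-x^*\|^2$, summation, the elementary nonnegative-sequence lemma (the paper's Lemma~\ref{lemma:bound}) to obtain the uniform bound $\|x_i-x^*\|\leqslant \|x_0-x^*\|+2A_k+\sqrt{2B_k}$, back-substitution, and Jensen. Your Step-1 estimate $\|x_k-p_k\|\leqslant\sqrt{2\varepsilon_k/L}$ is correct and plays exactly the role of the vector $f_k$ in the paper's Lemma~\ref{lemma:subdifferential}, and your exact-point telescoping inequality is the standard three-point identity, also correct.

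The gap is in the bridge from $p_k$ to $x_k$. You derive the telescoping inequality with $f(p_k)$ on the left, and then claim that converting $\langle e_k,x-p_k\rangle$ and $\|x-p_k\|^2$ into quantities involving $x_k$ ``yields a one-step inequality for $f(x_k)-f(x)$.'' Those conversions only touch the right-hand side; nothing you list controls $f(x_k)-f(p_k)$ on the left, and no bound of the form $f(x_k)\leqslant f(p_k)+C\|x_k-p_k\|$ is available, since $h$ is only lower semi-continuous proper convex (it may be an indicator function, or have unbounded subgradients near $p_k$). The accuracy condition~\eqref{eq:3} controls $h(x_k)-h(p_k)$ only jointly with the quadratic terms, not through $\|x_k-p_k\|$ alone. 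The repair is to never place $f(p_k)$ on the left: bound $g(x_k)$ directly by the descent lemma between $x_{k-1}$ and $x_k$, and for $h$ combine the exact optimality condition $L(z_k-p_k)\in\partial h(p_k)$ with a second use of~\eqref{eq:3}; the quadratic terms recombine via the three-point identity into
$h(x_k)\leqslant h(x)+\varepsilon_k+L\langle x-x_k,\,x_k-z_k\rangle+\sqrt{2L\varepsilon_k}\,\|x-x_k\|$,
which is precisely what the paper extracts from the $\varepsilon_k$-subdifferential of $h$ at $x_k$ in Lemma~\ref{lemma:subdifferential}. With that substitution your plan goes through and reproduces the paper's proof with the stated constants.
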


The proof may be found in the Appendix.
Note that while we have stated the proposition in terms of the function value achieved by the average of the iterates, it trivially also holds for the iteration that achieves the lowest function value. 
This result implies that the well-known $O(1/k)$ convergence rate for the gradient method without errors \emph{still holds} when both $\{\|e_k\|\}$ and $\{\sqrt{\ve_k}\}$ are summable. A sufficient condition to achieve this is for $\|e_k\|$ and $\sqrt{\ve_k}$ to decrease as $O(1/k^{1+\delta})$ for any $\delta > 0$.  Note that a faster convergence of these two errors will not improve the convergence rate but will yield a better constant factor.

It is interesting to consider what happens if $\{\|e_k\|\}$ or $\{\sqrt{\ve_k}\}$ is not summable. For instance, if $\|e_k\|$ and $\sqrt{\ve_k}$ decrease as $O(1/k)$, then $A_k$ grows as $O(\log k)$ (note that $B_k$ is always smaller than $A_k$) and the convergence of the function values is in $O\left(\frac{\log^2 k}{k}\right)$.  Finally, a necessary condition to obtain convergence is that the partial sums $A_k$ and $B_k$ need to be in $o(\sqrt{k})$.

We now turn to the case of an \emph{accelerated} proximal-gradient method.  We focus on a basic variant of the algorithm where $\beta_k$ is set to~$(k-1)/(k+2)$~\cite[Eq.~(19) and (27)]{tseng2008accelerated}:
\begin{proposition}[Accelerated proximal-gradient method - Convexity]
\label{prop:accelerated_convex}
Assume~(\textbf{H}) and that we iterate recursion~\eqref{eq:2} with $y_k = x_k + \frac{k-1}{k+2}(x_k - x_{k-1})$.
Then, for all $k \geqslant 1$, we have
\BEQ
\label{eq:convex_accelerated_bound}
f(x_k)-f(x^*) \leqslant
 \frac{2L}{(k+1)^2}
\left(
\| x_0 -x^\ast\| + 2\widetilde{A}_k + \sqrt{2\widetilde{B}_k}  \right) ^2 ,
\EEQ
with
\[
\widetilde{A}_k = \sum_{i=1}^k i \left(\frac{\|  e_i \|}{L} +   \sqrt{\frac{2\varepsilon_i}{L}}\right) \; , \quad
    \widetilde{B}_k = \sum_{i=1}^k \frac{i^2 \varepsilon_i }{L} \; .
    \]
\end{proposition}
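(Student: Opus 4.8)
The plan is to combine the standard estimate-sequence analysis of accelerated proximal-gradient methods (in the form of~\cite{tseng2008accelerated}) with a perturbed one-step inequality that accounts for the two sources of error, and then to resolve the resulting self-referential bound with a Gronwall-type lemma. The engine of the whole argument is a single-iteration inequality. Using the approximate optimality condition~\eqref{eq:3}, I would first invoke the approximate-subdifferential estimate (Lemma~\ref{lemma:subdifferential}) to convert the $\varepsilon_k$-optimality of $x_k$ into a usable approximate subgradient inclusion for $h$ at $x_k$. Writing $\bar x_k$ for the exact proximal point, the $\varepsilon_k$-optimality together with strong convexity of the proximal objective gives $\|x_k-\bar x_k\|\le\sqrt{2\varepsilon_k/L}$, which is precisely the source of the $\sqrt{2\varepsilon_i/L}$ terms. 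Combining this with $L$-smoothness and convexity of $g$ (and absorbing the gradient error $e_k$ through an inner product $\langle e_k,\cdot\rangle$) yields, for every $z\in\Real^d$, an inequality of the form
\begin{equation*}
f(x_k) - f(z) \le \frac{L}{2}\Big(\|z-y_{k-1}\|^2 - \|z-x_k\|^2\Big) + \Big(\|e_k\| + L\sqrt{2\varepsilon_k/L}\Big)\|z - x_k\| + \varepsilon_k .
\end{equation*}

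Second, I would run Tseng's accelerated scheme with the momentum $\beta_k=(k-1)/(k+2)$, which corresponds to the weights $t_k=(k+1)/2$ satisfying $t_k^2-t_k\le t_{k-1}^2$. Introducing the auxiliary sequence $v_k$ with $y_k$ a convex combination of $x_k$ and $v_k$, I would apply the one-step inequality at the two points $z=x^\ast$ and $z=x_k$ (or $x_{k-1}$), form the convex combination dictated by $t_k$, and telescope. This produces a Lyapunov inequality of the form
\begin{equation*}
t_k^2\big(f(x_k)-f(x^\ast)\big) + \frac{L}{2}\|v_k - x^\ast\|^2 \le \frac{L}{2}\|x_0 - x^\ast\|^2 + \sum_{i=1}^k t_i\Big(\|e_i\| + \sqrt{2L\varepsilon_i}\Big)\|v_i - x^\ast\| + \sum_{i=1}^k t_i^2 \varepsilon_i ,
\end{equation*}
where the factor $t_i\sim i/2$ is exactly what turns the summands of $A_k$ and $B_k$ in Proposition~\ref{prop:convex} into the $i$- and $i^2$-weighted summands of $\widetilde{A}_k$ and $\widetilde{B}_k$.

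The last, and hardest, step is that the right-hand side still contains the unknown iterate distances $\|v_i-x^\ast\|$ multiplied by the error weights. Denoting $u_k=\|v_k-x^\ast\|$ and dropping the nonnegative $t_k^2(f(x_k)-f(x^\ast))$ term, the displayed inequality has the self-bounding form $\tfrac{L}{2}u_k^2\le S_k+\sum_{i=1}^k\lambda_i u_i$ with $S_k=\tfrac{L}{2}\|x_0-x^\ast\|^2+\sum_i t_i^2\varepsilon_i$ nondecreasing and $\lambda_i=t_i(\|e_i\|+\sqrt{2L\varepsilon_i})$. I would then apply the Gronwall-type lemma (of the form: if $u_k^2\le S_k+\sum_{i\le k}\lambda_i u_i$ with $S_k$ nondecreasing, then $u_k\le\tfrac12\sum_{i\le k}\lambda_i+(S_k+(\tfrac12\sum_{i\le k}\lambda_i)^2)^{1/2}$) to obtain an explicit bound on $u_k$, and feed it back to bound $t_k^2(f(x_k)-f(x^\ast))$. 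Completing the square and using $t_k=(k+1)/2$ then gives the claimed $\frac{2L}{(k+1)^2}(\|x_0-x^\ast\|+2\widetilde{A}_k+\sqrt{2\widetilde{B}_k})^2$. The main obstacle is precisely this self-referential dependence on $\|v_i-x^\ast\|$: the error terms cannot be bounded a priori, and the whole purpose of the Gronwall-type lemma is to turn the implicit quadratic inequality into the explicit squared expression appearing in the statement. Care is also needed to verify that the perturbed one-step inequality holds with exactly the constants that make the weighted sums collapse into $\widetilde{A}_k$ and $\widetilde{B}_k$.
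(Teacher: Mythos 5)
Your proposal is correct and follows essentially the same route as the paper's proof: the perturbed one-step inequality obtained from Lemma~\ref{lemma:subdifferential}, the Tseng-style telescoping with weights $1/\theta_k=(k+1)/2$ evaluated at the convex combination $z=\theta_k x^*+(1-\theta_k)x_{k-1}$, and the application of Lemma~\ref{lemma:bound} to resolve the self-referential dependence on $\|v_i-x^*\|$ before completing the square. The only (immaterial) differences are that you apply the one-step bound at two points and convex-combine rather than at the single combination point, and the paper additionally uses $(i+1)/2\leqslant i$ to arrive at the stated weights in $\widetilde{A}_k$ and $\widetilde{B}_k$.
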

In this case, we require the series $\{k\|e_k\|\}$ and $\{k\sqrt{\varepsilon_k}\}$ to be summable to achieve the optimal $O(1/k^2)$ rate, which is an (unsurprisingly) stronger constraint than in the basic case. A sufficient condition is for $\|e_k\|$ and $\sqrt{\varepsilon_k}$ to decrease as $O(1/k^{2+\delta})$ for any $\delta > 0$. Note that, as opposed to Proposition~\ref{prop:convex} that is stated for the average iterate, this bound is for the last iterate $x_k$. 

Again, it is interesting to see what happens when the summability assumption is not met. First, if $\|e_k\|$ or $\sqrt{\varepsilon_k}$ decreases at a rate of $O(1/k^2)$, then $k (\|e_k\| + \sqrt{e_k})$ decreases as $O(1/k)$ and $\widetilde{A}_k$ grows as $O(\log k)$ (note that $\widetilde{B}_k$ is always smaller than $\widetilde{A}_k$), yielding a convergence rate of $O\left(\frac{\log^2 k}{k^2}\right)$ for $f(x_k)-f(x^*)$. Also, and perhaps more interestingly, if $\|e_k\|$ or $\sqrt{\varepsilon_k}$ decreases at a rate of $O(1/k)$, Eq.~(\ref{eq:convex_accelerated_bound}) does not guarantee convergence of the function values. More generally, the form of $\widetilde{A}_k$ and $\widetilde{B}_k$ indicates that errors have a greater effect on the accelerated method than on the basic method. Hence, as also discussed in~\cite{devolder2010first}, unlike in the error-free case, the accelerated method may not necessarily be better than the basic method because it is more sensitive to errors in the computation.

In the case where $g$ is \emph{strongly} convex it is possible to obtain linear convergence rates that depend on the ratio
\[
\gamma = \mu/L,
\] 
as opposed to the sublinear convergence rates discussed above.  In particular, we obtain the following convergence rate on the iterates of the basic proximal-gradient method:
\begin{proposition}[Basic proximal-gradient method - Strong convexity]
\label{prop:strongly_convex}
Assume (\textbf{H}), that $g$ is $\mu$-strongly convex, and that we iterate recursion~\eqref{eq:2} with $y_k = x_k$.
Then, for all $k \geqslant 1$, we have:
\BEQ
\|x_k - x^\ast\| \leqslant \left(1 - \gamma\right)^k (\|x_0 - x^\ast\| + \bar{A}_k) \; ,
\EEQ
with
\[
\quad \bar{A}_k = \sum_{i=1}^k \left(1 - \gamma\right)^{-i}\left(\frac{\|  e_i \|}{L} +
\sqrt{\frac{2\varepsilon_i}{L}} \right)\; .
\]
\end{proposition}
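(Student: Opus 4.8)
The plan is to establish a single geometric contraction of the form
\[
\norm{x_k - x^\ast} \leqslant (1-\gamma)\norm{x_{k-1} - x^\ast} + \frac{\norm{e_k}}{L} + \sqrt{\frac{2\varepsilon_k}{L}}
\]
and then unroll it. Write $y = x_{k-1} - (1/L)(g'(x_{k-1}) + e_k)$ for the point whose inexact proximity operator defines $x_k$ through~\eqref{eq:2}, and let $\xbar_k = \prox_L(y)$ denote the \emph{exact} proximal point of $y$. The argument separates the two sources of error: the gradient perturbation $e_k$ and the strong-convexity contraction are handled through $\xbar_k$, while the proximal inaccuracy $\varepsilon_k$ is absorbed by comparing $x_k$ with $\xbar_k$.

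First I would control the proximal inaccuracy. The proximal objective $\frac{L}{2}\norm{x-y}^2 + h(x)$ is $L$-strongly convex in $x$ with minimizer $\xbar_k$, so its $\varepsilon_k$-optimality condition~\eqref{eq:3} for $x_k$ yields $\frac{L}{2}\norm{x_k - \xbar_k}^2 \leqslant \varepsilon_k$, hence $\norm{x_k - \xbar_k} \leqslant \sqrt{2\varepsilon_k/L}$. This is precisely the auxiliary bound on the proximal error used throughout the Appendix (and is the origin of the $\sqrt{2\varepsilon_i/L}$ terms appearing in every proposition). By the triangle inequality it then suffices to bound $\norm{\xbar_k - x^\ast}$.

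Next I would use that $x^\ast$ is a fixed point of the exact proximal-gradient map, $x^\ast = \prox_L(x^\ast - (1/L)g'(x^\ast))$, together with the nonexpansiveness of $\prox_L$. These give
\[
\norm{\xbar_k - x^\ast} \leqslant \Big\| (x_{k-1}-x^\ast) - \tfrac{1}{L}\big(g'(x_{k-1}) - g'(x^\ast)\big) \Big\| + \tfrac{1}{L}\norm{e_k}.
\]
The key technical fact is that, for a $\mu$-strongly convex $g$ with $L$-Lipschitz gradient, the forward gradient map $I - (1/L)g'$ is a contraction with factor $1-\gamma$, i.e. $\norm{(u-v) - (1/L)(g'(u)-g'(v))} \leqslant (1-\gamma)\norm{u-v}$. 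Applying this with $u = x_{k-1}$ and $v = x^\ast$, and combining with the proximal-error bound from the previous step, produces the desired one-step recursion. Finally, writing $a_k = \norm{x_k - x^\ast}$ and $b_k = \norm{e_k}/L + \sqrt{2\varepsilon_k/L}$, the recursion $a_k \leqslant (1-\gamma)a_{k-1} + b_k$ telescopes to $a_k \leqslant (1-\gamma)^k a_0 + \sum_{i=1}^k (1-\gamma)^{k-i} b_i = (1-\gamma)^k\big(a_0 + \sum_{i=1}^k (1-\gamma)^{-i} b_i\big)$, which is exactly $(1-\gamma)^k(\norm{x_0 - x^\ast} + \bar{A}_k)$.

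The main obstacle is the strong-convexity contraction of the gradient step. Obtaining the clean factor $1-\gamma$, rather than a weaker $\sqrt{1-\gamma}$ or $\sqrt{(L-\mu)/(L+\mu)}$ bound, requires expanding $\norm{v - (1/L)w}^2$ with $v = u - v'$ and $w = g'(u)-g'(v')$ and using the strong-convexity inequality $\langle w,v\rangle \geqslant \mu\norm{v}^2$ \emph{together with} the co-coercivity of the gradient of the smooth convex function $g$; neither inequality alone suffices. The remaining ingredients (the $L$-strong convexity of the proximal subproblem, nonexpansiveness of $\prox_L$, and the unrolling of the linear recursion) are routine.
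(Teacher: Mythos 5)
Your proposal is correct and follows essentially the same route as the paper: decompose $x_k$ into the exact proximal point plus an error of norm at most $\sqrt{2\varepsilon_k/L}$, use nonexpansiveness of $\prox_L$ and the fixed-point property of $x^\ast$, peel off $\|e_k\|/L$ by the triangle inequality, contract the forward gradient map by $(1-\gamma)$ via the combined strong-convexity/co-coercivity inequality (Theorem 2.1.12 of~\cite{nesterov2004introductory}), and unroll the resulting linear recursion. The only cosmetic difference is that you justify the bound $\|x_k - \prox_L(y)\| \leqslant \sqrt{2\varepsilon_k/L}$ directly from the $L$-strong convexity of the proximal subproblem, whereas the paper routes it through the $\varepsilon$-subdifferential construction of Lemma~\ref{lemma:subdifferential}; both yield the same $f_k$ decomposition.
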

A consequence of this proposition is that we obtain a linear rate of convergence even in the presence of errors, provided that $\|e_k\|$ and $\sqrt{\varepsilon_k}$ decrease linearly to 0. If they do so at a rate of $Q' < \left(1 - \gamma\right)$, then the convergence rate of $\|x_k - x^\ast\|$ is linear with constant $\left(1 - \gamma\right)$, as in the error-free algorithm. If we have $Q' > \left(1 - \gamma\right)$, then the convergence of $\|x_k - x^\ast\|$ is linear with constant $Q'$. If we have $Q' = \left(1 - \gamma\right)$, then $\|x_k - x^\ast\|$ converges to 0 as $O(k \left(1 - \gamma\right)^k) = o\left(\left[\left(1 - \gamma\right) + \delta'\right]^k\right)$ for all $\delta' > 0$.

Finally, we consider the accelerated proximal-gradient algorithm when $g$ is strongly convex.  We focus on a basic variant of the algorithm where $\beta_k$ is set to $(1-\sqrt{\gamma})/(1+\sqrt{\gamma})$~\cite[\S2.2.1]{nesterov2004introductory}:
\begin{proposition}[Accelerated proximal-gradient method - Strong convexity]
\label{prop:accelerated_strongly_convex}
Assume (\textbf{H}), that $g$ is $\mu$-strongly convex, and that we iterate recursion~\eqref{eq:2} with $y_k = x_k + \frac{1-\sqrt{\gamma}}{1+\sqrt{\gamma}}(x_k - x_{k-1})$. Then, for all $k \geqslant 1$, we have
\BEQ
\label{eq:strongly_convex_accelerated_bound}
f(x_k)-f(x^*) \leqslant \left(1 - \sqrt{\gamma}\right)^k\left(\sqrt{2(f(x_0) - f(x^*))} + \widehat{A}_k\sqrt{\frac{2}{\mu}} + \sqrt{\widehat{B}_k}\right)^2,
\EEQ
with 
\[
\widehat{A}_k = \sum_{i=1}^k \left(\|  e_i \| + \sqrt{2L\varepsilon_i}\right)\left(1 - \sqrt{\gamma}\right)^{-i/2} \; , \quad
   \widehat{B}_k =\sum_{i=1}^k \varepsilon_i\left(1 - \sqrt{\gamma}\right)^{-i} \; .
   \]
\end{proposition}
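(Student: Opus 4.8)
The plan is to mirror the template behind the three preceding propositions: convert the inexact proximal step into a single pointwise inequality, feed it into Nesterov's estimate-sequence (Lyapunov) argument for the constant-momentum strongly convex scheme, rescale to peel off the linear factor $(1-\sqrt{\gamma})^k$, and close with the master sequence lemma that turns a recursion of the form $u_k \le S_k + \sum_i \lambda_i \sqrt{u_i}$ (with $\{S_k\}$ nondecreasing and $\lambda_i \ge 0$) into an explicit bound on $\sqrt{u_k}$. The final statement's shape — a linear factor times the square of a sum in which the initial gap appears as $\sqrt{2(f(x_0)-f(x^*))}$, the cross terms enter linearly through $\widehat{A}_k$, and the residual $\varepsilon_i$ terms enter under a square root through $\sqrt{\widehat{B}_k}$ — is exactly the signature of applying that lemma to a rescaled Lyapunov quantity.

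First I would invoke Lemma~\ref{lemma:subdifferential}. Since $x_k$ is $\varepsilon_k$-optimal for the proximal problem centred at $y = y_{k-1} - (1/L)(g'(y_{k-1}) + e_k)$, the lemma supplies an approximate optimality condition for $h$ at $x_k$ whose residual is controlled by $\sqrt{2L\varepsilon_k}$ and $\varepsilon_k$. Combining this with the $L$-smoothness and $\mu$-strong convexity of $g$, and applying Cauchy--Schwarz to the gradient-error contribution $\langle e_k, \cdot \rangle$, yields for every $z \in \Real^d$ the exact strongly convex accelerated-gradient inequality perturbed by additive terms proportional to $\|e_k\|$, $\sqrt{\varepsilon_k}$, and $\varepsilon_k$. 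Evaluating this perturbed inequality at $z = x^*$ and at the estimate-sequence centre produces the two building blocks needed to propagate the potential.

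Next I would introduce the Lyapunov quantity natural to this scheme, combining the objective gap $f(x_k) - f(x^*)$ with a squared distance of the auxiliary point to $x^*$, and verify that in the error-free case it contracts by $(1-\sqrt{\gamma})$ per iteration. With errors present, the same manipulation gives $\phi_k \le (1-\sqrt{\gamma})\phi_{k-1} + (1-\sqrt{\gamma})(\textrm{error}_k)$, where $\textrm{error}_k$ collects the perturbations above. Dividing by $(1-\sqrt{\gamma})^k$ and setting $u_k = \phi_k/(1-\sqrt{\gamma})^k$ converts this into $u_k \le u_{k-1} + (\textrm{error}_k)(1-\sqrt{\gamma})^{-k}$; this rescaling is precisely what generates the weights in $\widehat{A}_k$ and $\widehat{B}_k$ — the factor $(1-\sqrt{\gamma})^{-i/2}$ multiplies the terms $\|e_i\| + \sqrt{2L\varepsilon_i}$ that behave like $\lambda_i\sqrt{u_i}$ (hence land in $\widehat{A}_k$), while the factor $(1-\sqrt{\gamma})^{-i}$ multiplies the purely additive $\varepsilon_i$ terms (hence land in $S_k = \widehat{B}_k$). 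Summing over $i$, applying the master sequence lemma to $\sqrt{u_k}$, then multiplying back by $(1-\sqrt{\gamma})^{k/2}$ and squaring, delivers the stated bound; the $\sqrt{2/\mu}$ factor arises from converting the distance-valued gradient-error terms into units of $\sqrt{u}$, and $\sqrt{2(f(x_0)-f(x^*))}$ is the square root of the initial potential.

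The main obstacle is the estimate-sequence bookkeeping for the accelerated strongly convex iteration. Unlike the basic scheme of Proposition~\ref{prop:strongly_convex}, where the update is a contraction directly on $\|x_k - x^*\|$, here the error terms couple the momentum point $y_{k-1}$ with the iterate $x_k$, and the auxiliary sequence must be chosen so that (i) the exact part contracts at the rate $(1-\sqrt{\gamma})$ and (ii) each error enters carrying exactly the power of $(1-\sqrt{\gamma})^{-1}$ that makes $\widehat{A}_k$ and $\widehat{B}_k$ emerge as stated. Keeping the cross terms (which behave as $\lambda_i\sqrt{u_i}$ and populate $\widehat{A}_k$) cleanly separated from the additive terms (which populate $\widehat{B}_k$) so that the master sequence lemma applies without loss is the delicate step; the remaining algebra is essentially that already carried out for the convex accelerated case of Proposition~\ref{prop:accelerated_convex}.
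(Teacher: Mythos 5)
Your plan follows the paper's own proof essentially step for step: Lemma~\ref{lemma:subdifferential} to obtain the perturbed optimality condition, the estimate-sequence potential $\delta_k = f(x_k)-f(x^*)+\frac{\mu}{2}\|v_k-x^*\|^2$ contracting by $(1-\sqrt{\gamma})$ up to the error term $E_k$, the rescaling by $(1-\sqrt{\gamma})^{-k}$ that produces the weights in $\widehat{A}_k$ and $\widehat{B}_k$, and Lemma~\ref{lemma:bound} to resolve the implicit recursion in $\|v_i-x^*\|$ before squaring. This is the correct approach and matches the paper's argument; the only (inconsequential) slip is the extra factor $(1-\sqrt{\gamma})$ you attach to the error term in the intermediate recursion, which your subsequent rescaled form $u_k \leqslant u_{k-1} + E_k(1-\sqrt{\gamma})^{-k}$ already corrects.
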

Note that while we have stated the result in terms of function values, we obtain an analogous result on the iterates because by strong convexity of $f$ we have
\[
\frac{\mu}{2}||x_k - x_*||^2 \leq f(x_k)-f(x_*).
\]
This proposition implies that we obtain a linear rate of convergence in the presence of errors provided that $||e_k||^2$ and $\ve_k$ decrease linearly to 0.  If they do so at a rate $Q' < (1-\sqrt{\gamma})$, then the constant is $(1-\sqrt{\gamma})$, while if $Q' > (1-\sqrt{\gamma})$ then the constant will be $Q'$. 
Thus, the accelerated inexact proximal-gradient method will have a faster convergence rate than the \emph{exact} basic proximal-gradient method provided that $Q' < (1-\gamma)$.  Oddly, in our analysis of the strongly convex case, the accelerated method is \emph{less sensitive} to errors than the basic method.  However, unlike the basic method, the accelerated method requires knowing $\mu$ in addition to $L$.  If $\mu$ is misspecified, then the convergence rate of the accelerated method may be slower than the basic method.

\section{Experiments}
\label{sec:experiments}

We tested the basic inexact proximal-gradient and accelerated proximal-gradient methods on the CUR-like factorization optimization problem introduced in~\cite{mairal2011convex} to approximate a given matrix $W$,
\[
\min_X \half\norm{W-WXW}_F^2 + \lambda_{\textrm{row}}\sum_{i=1}^{n_r}||X^i||_p + \lambda_{col}\sum_{j=1}^{n_c}||X_j||_p \; .
\]
Under an appropriate choice of $p$, this optimization problem yields a matrix $X$ with sparse rows \emph{and} sparse columns, meaning that entire rows and columns of the matrix $X$ are set to exactly zero.  In~\cite{mairal2011convex}, the authors used an accelerated proximal-gradient method and chose $p=\infty$ since under this choice the proximity operator can be computed exactly.  However, this has the undesirable effect that it also encourages all values in the same row (or column) to have the same magnitude.  The more natural choice of $p=2$ was not explored since in this case there is no known algorithm to exactly compute the proximity operator.  

Our experiments focused on the case of $p=2$.  In this case, it is possible to very quickly compute an approximate proximity operator using the block coordinate descent (BCD) algorithm presented in~\cite{jenatton2010proximal}, which is equivalent to the proximal variant of Dykstra's algorithm introduced by~\cite{bauschke2008dykstra}.  In our implementation of the BCD method, we alternate between computing the proximity operator with respect to the rows and to the columns.  Since the BCD method allows us to compute a duality gap when solving the proximal problem, we can run the method until the duality gap is below a given error threshold $\ve_k$ to find an $x_{k+1}$ satisfying~\eqref{eq:3}.

In our experiments, we used the four data sets examined by~\cite{mairal2011convex}\footnote{The datasets are freely available at \url{http://www.gems-system.org}.} and we choose $\lambda_{row} = .01$ and $\lambda_{col} = .01$, which yielded approximately 25--40\% non-zero entries in $X$ (depending on the data set).
Rather than assuming we are given the Lipschitz constant $L$, on the first iteration we set $L$ to $1$ and following~\cite{nesterov2007gradient} we double our estimate anytime $g(x_k) > g(y_{k-1}) + \langle g'(y_{k-1}),x_k-y_{k-1}\rangle + (L/2)||x_k-y_{k-1}||^2$.
We tested three different ways to terminate the approximate proximal problem, each parameterized by a parameter $\alpha$:
\begin{itemize}
\item $\varepsilon_k = 1/k^\alpha$: Running the BCD algorithm until the duality gap is below $1/k^\alpha$.
\item $\varepsilon_k = \alpha$: Running the BCD algorithm until the duality gap is below $\alpha$.
\item $n = \alpha$: Running the BCD algorithm for a fixed number of iterations $\alpha$.
\end{itemize}
Note that all three strategies lead to global convergence in the case of the basic proximal-gradient method, the first two give a convergence rate up to some fixed optimality tolerance, and in this paper we have shown that the first one (for large enough $\alpha$) yields a convergence rate for an arbitrary optimality tolerance.  Note that the iterates produced by the BCD iterations are \emph{sparse}, so we expected the algorithms to spend the majority of their time solving the proximity problem.  Thus, we used the function value against the number of BCD iterations as a measure of performance.  We plot the results after $500$ BCD iterations for the four data sets 
for the proximal-gradient method in Figure~\ref{fig:gradient}, and the accelerated proximal-gradient method in Figure~\ref{fig:nesterov}.  
In these plots, the first column varies $\alpha$ using the choice $\ve_k = 1/k^\alpha$, the second column varies $\alpha$ using the choice $\ve_k = \alpha$, and the third column varies $\alpha$ using the choice $n = \alpha$.  We also include one of the best methods from the first column in the second and third columns as a reference.

In the context of proximal-gradient methods the choice of $\ve_k = 1/k^3$, which is one choice that achieves the fastest convergence rate according to our analysis, gives the best performance across all four data sets.  However, in these plots we also see that reasonable performance can be achieved by any of the three strategies above provided that $\alpha$ is chosen carefully.  For example, choosing $n=3$ or choosing $\ve_k = 10^{-6}$ both give reasonable performance.  However, these are only empirical observations for these data sets and they may be ineffective for other data sets or if we change the number of iterations, while we have given theoretical justification for the choice $\ve_k = 1/k^3$.  

Similar trends are observed for the case of accelerated proximal-gradient methods, though the choice of $\ve_k = 1/k^3$ (which no longer achieves the fastest convergence rate according to our analysis) no longer dominates the other methods in the accelerated setting.  For the \emph{SRBCT} data set the choice $\ve_k = 1/k^4$, which is a choice that achieves the fastest convergence rate up to a poly-logarithmic factor, yields better performance than $\ve_k = 1/k^3$.  Interestingly, the only choice that yields the fastest possible convergence rate ($\ve_k = 1/k^5$) had reasonable performance but did not give the best performance on any data set. This seems to reflect the trade-off between performing \emph{inner} BCD iterations to achieve a small duality gap and performing \emph{outer} gradient iterations to decrease the value of $f$. Also, the constant terms which were not taken into account in the analysis do play an important role here, due to the relatively small number of \emph{outer} iterations performed.

\begin{figure}
\centering
\includegraphics[width=.32\columnwidth]{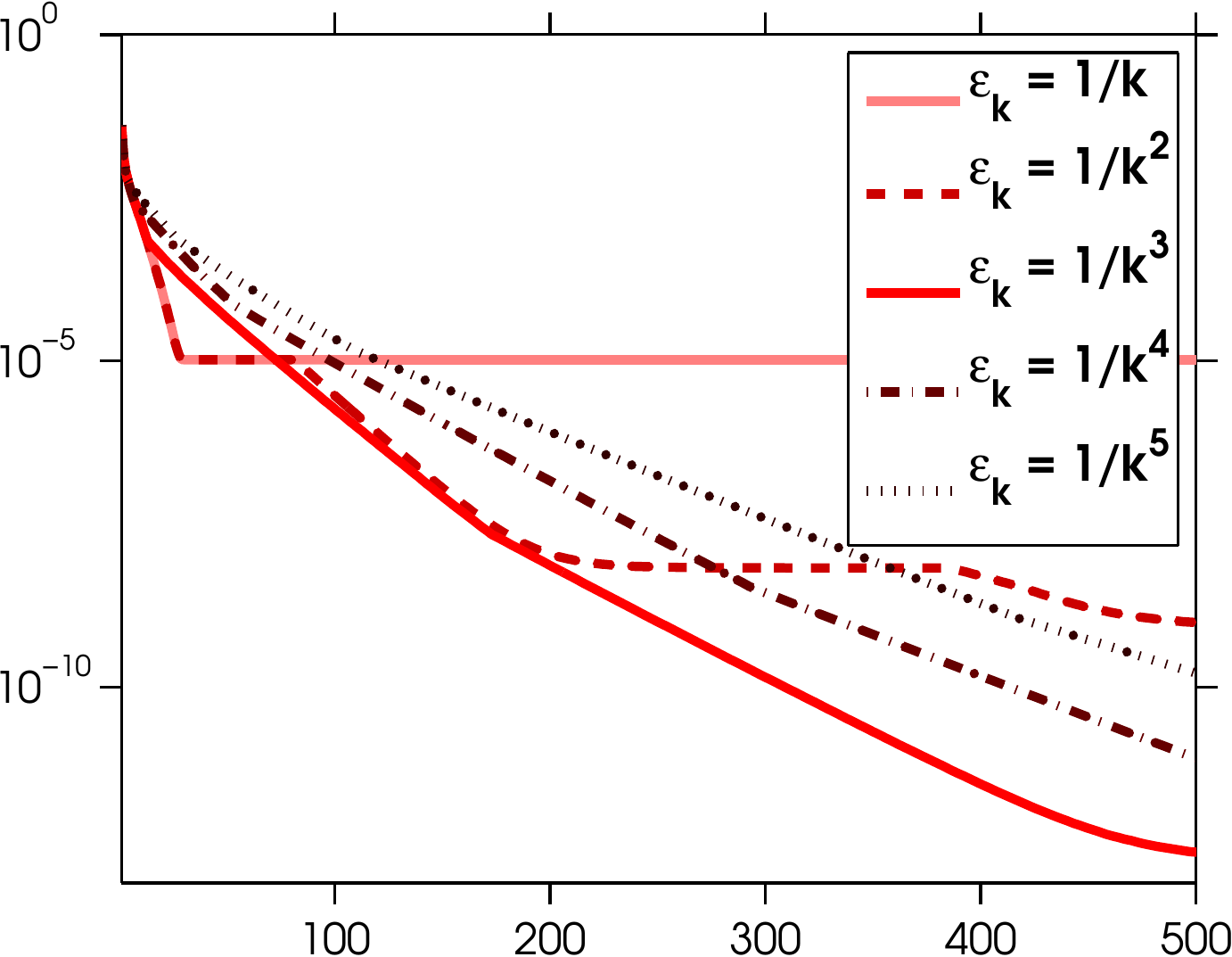}
\includegraphics[width=.32\columnwidth]{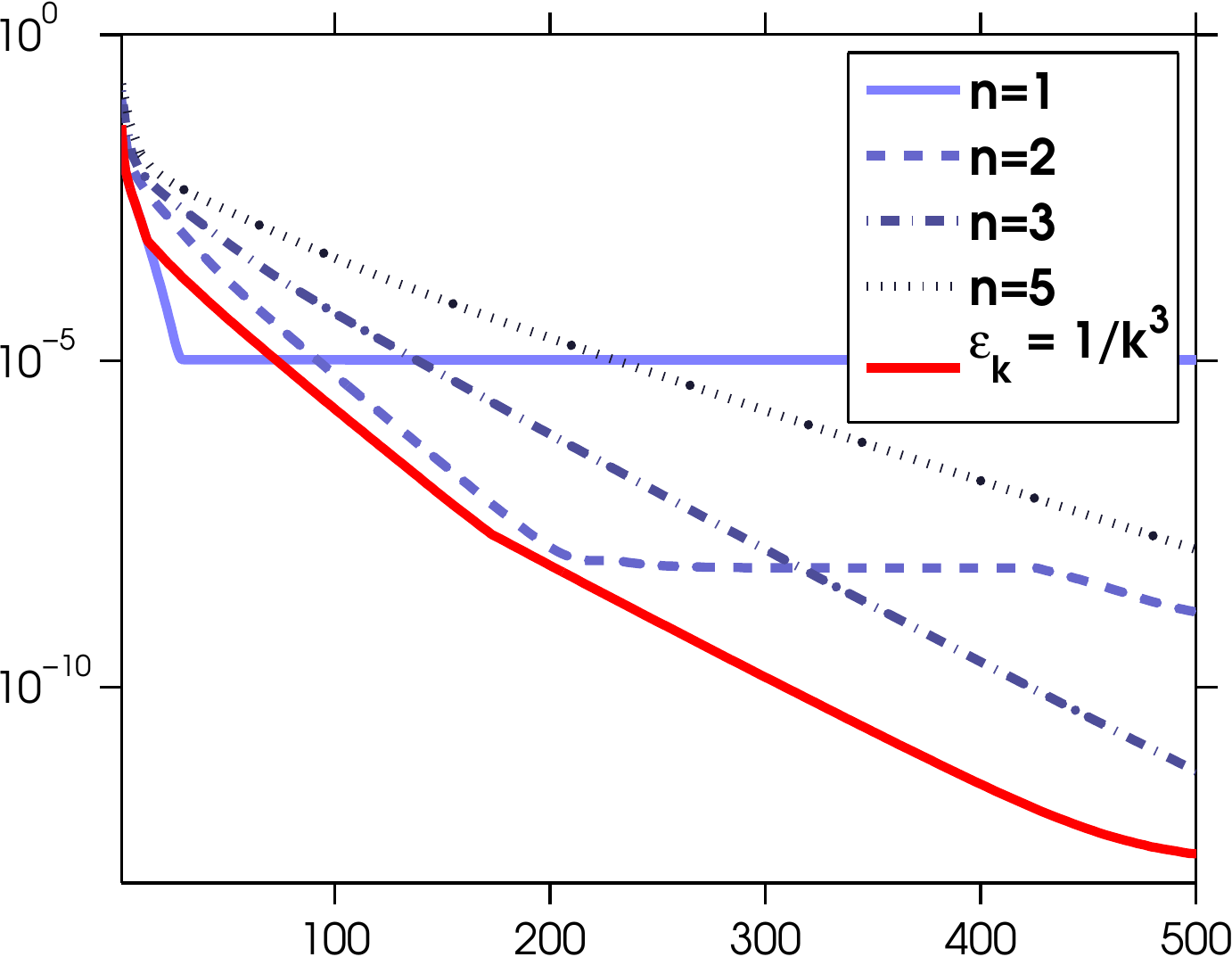}
\includegraphics[width=.32\columnwidth]{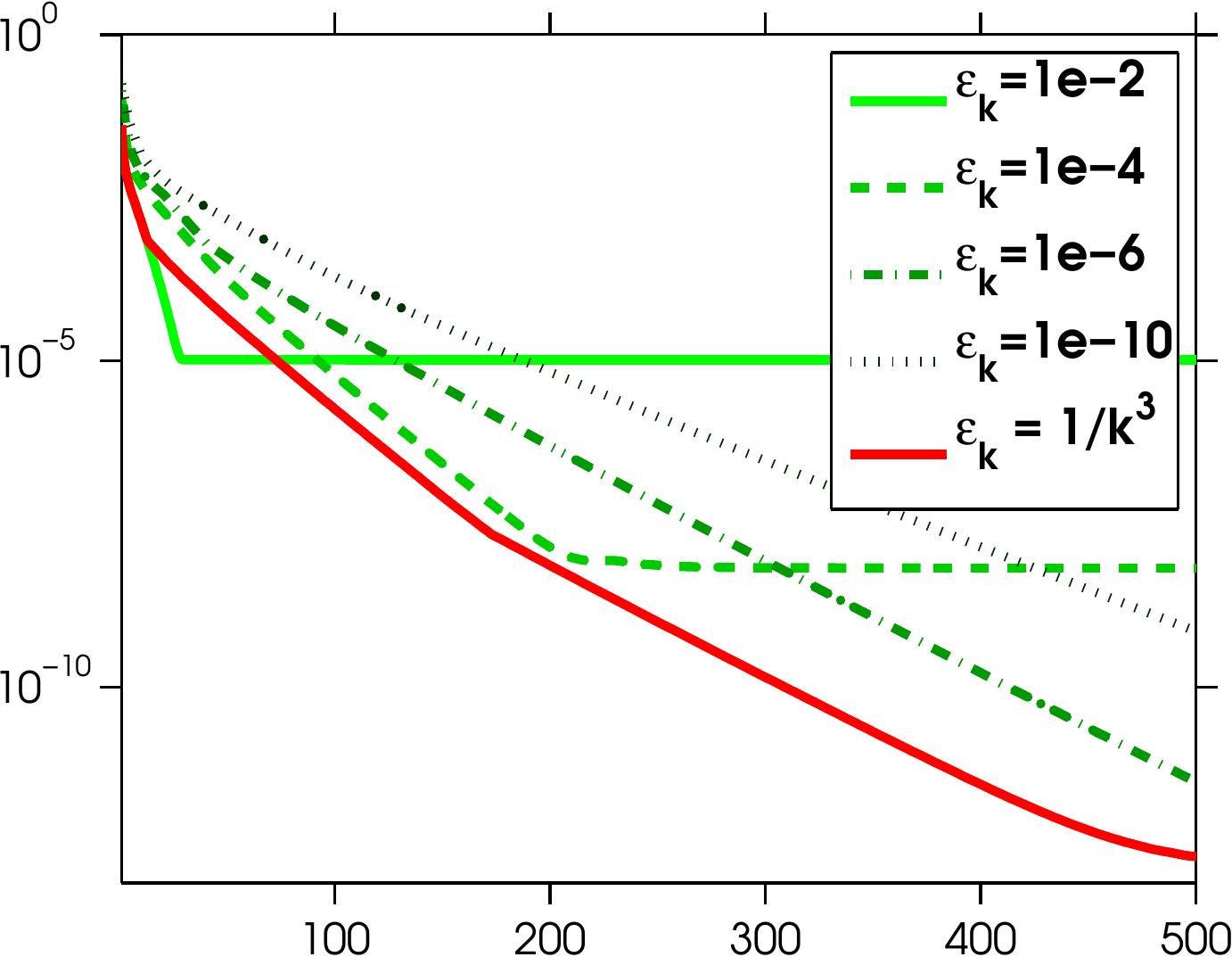}
\includegraphics[width=.32\columnwidth]{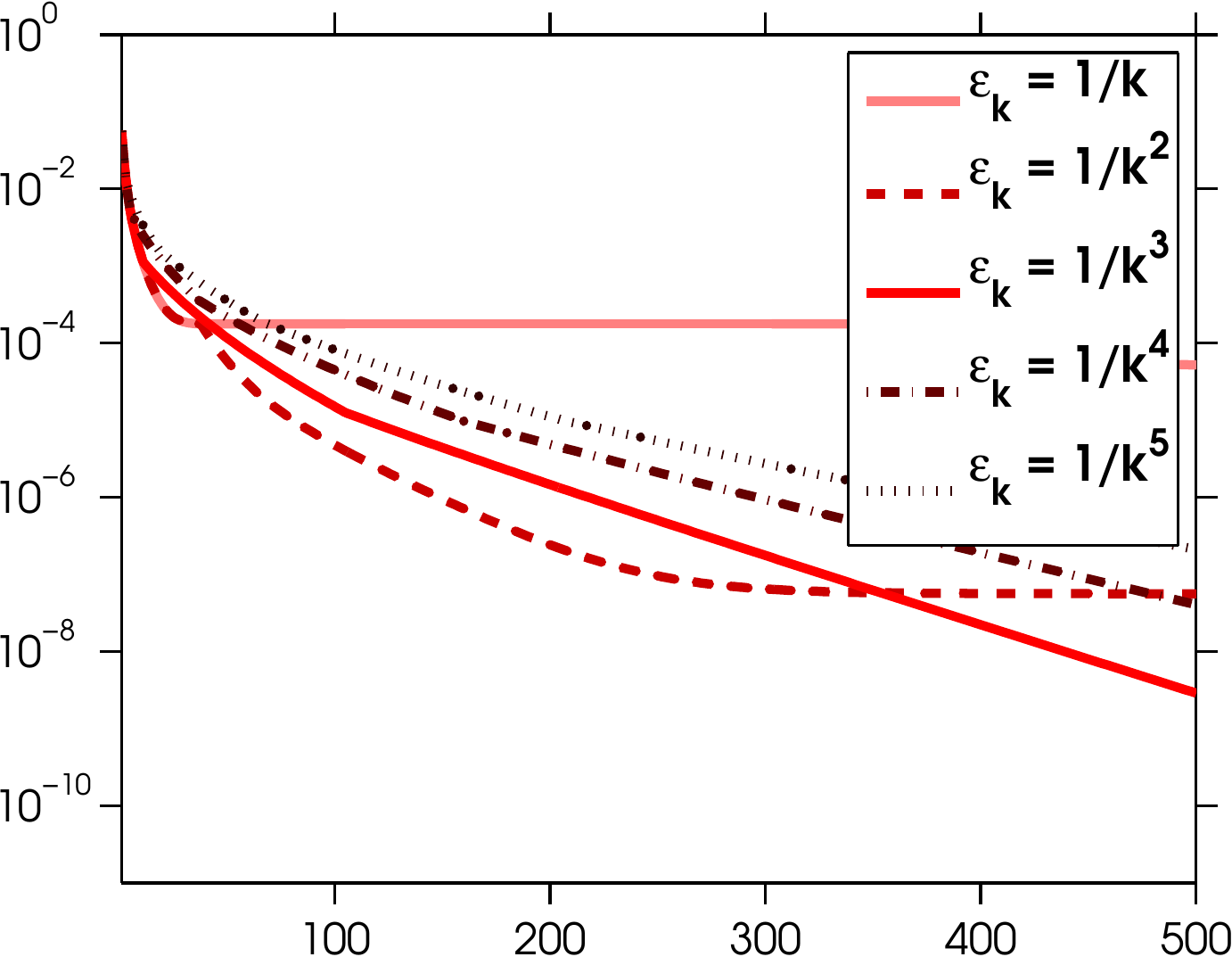}
\includegraphics[width=.32\columnwidth]{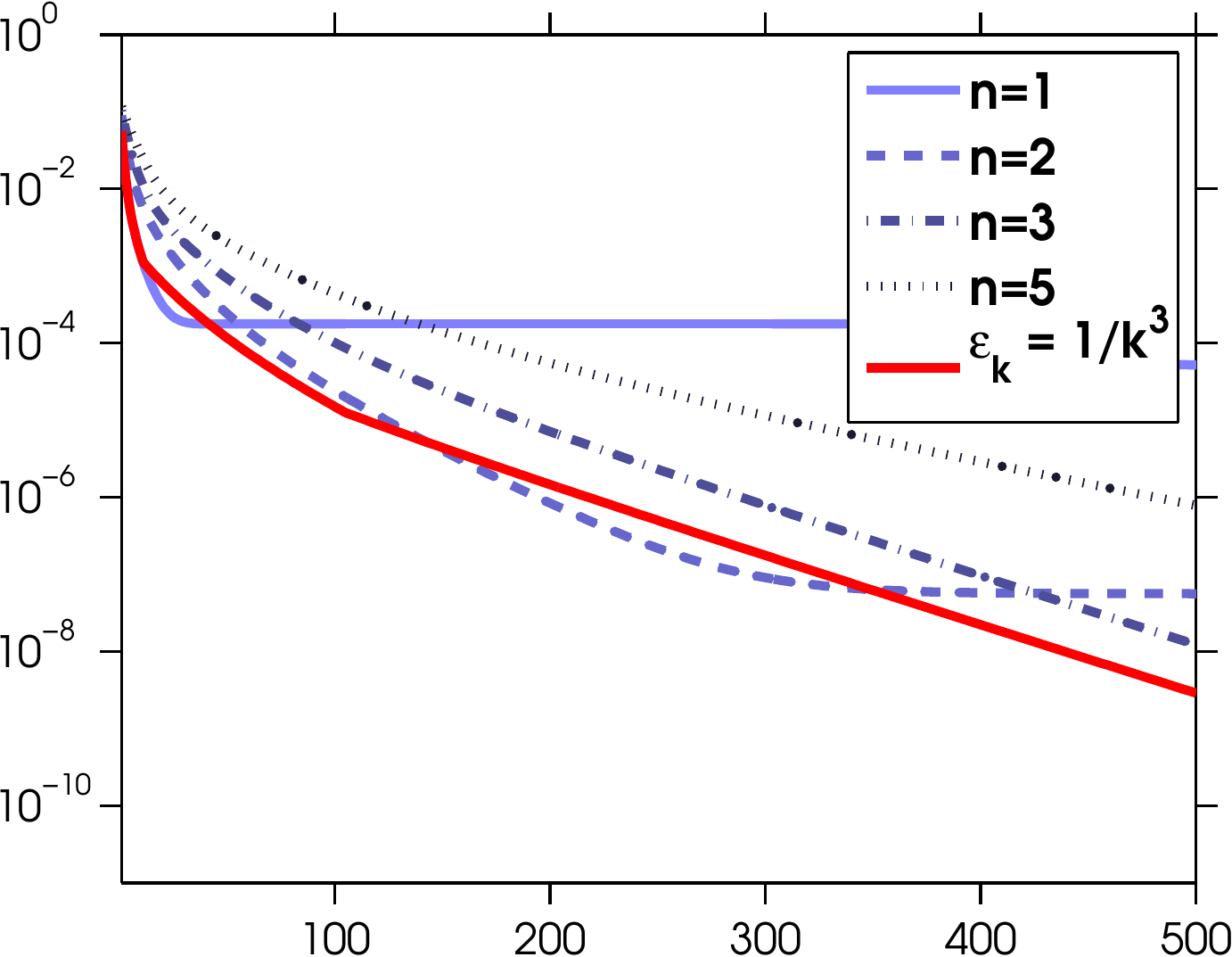}
\includegraphics[width=.32\columnwidth]{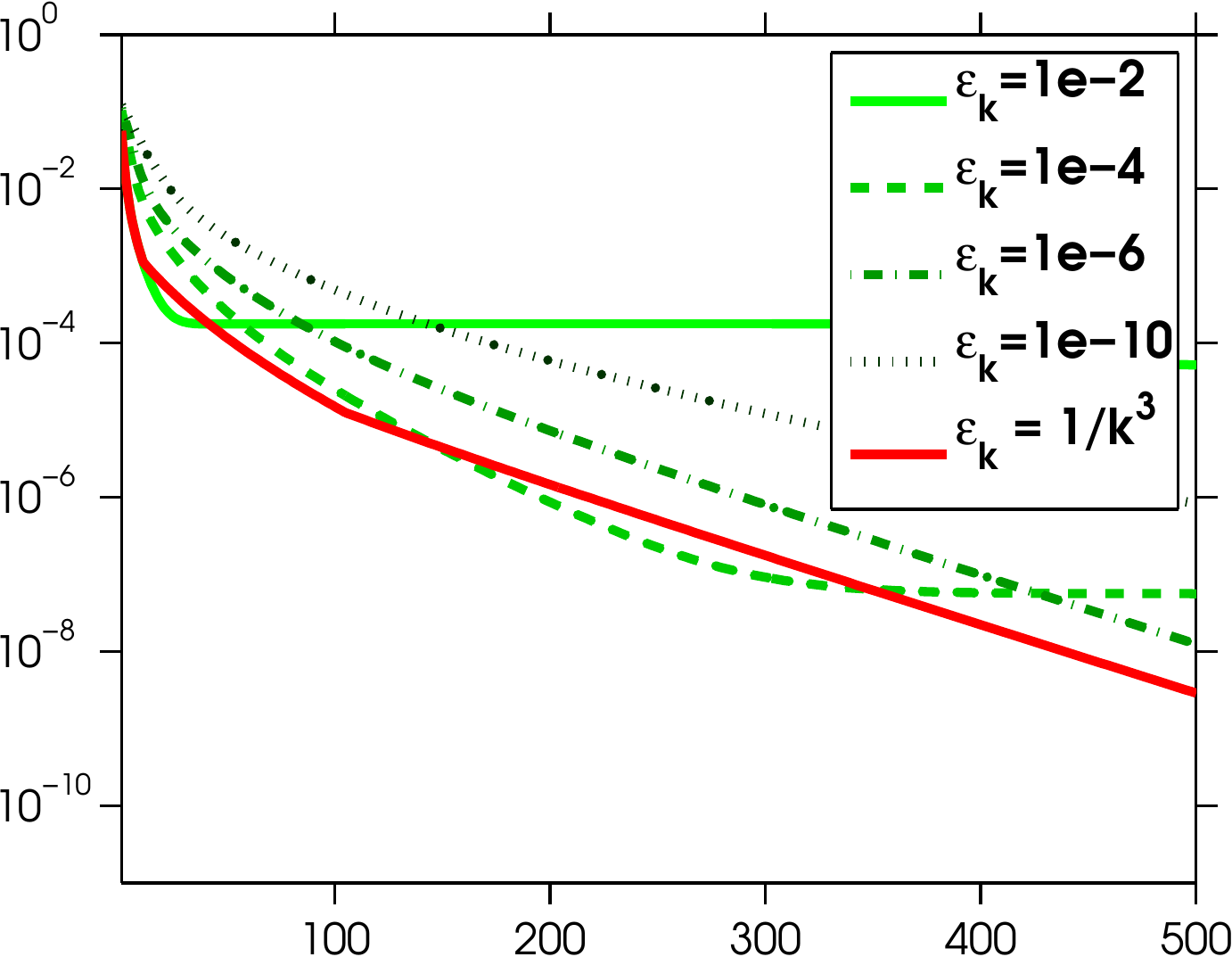}
\includegraphics[width=.32\columnwidth]{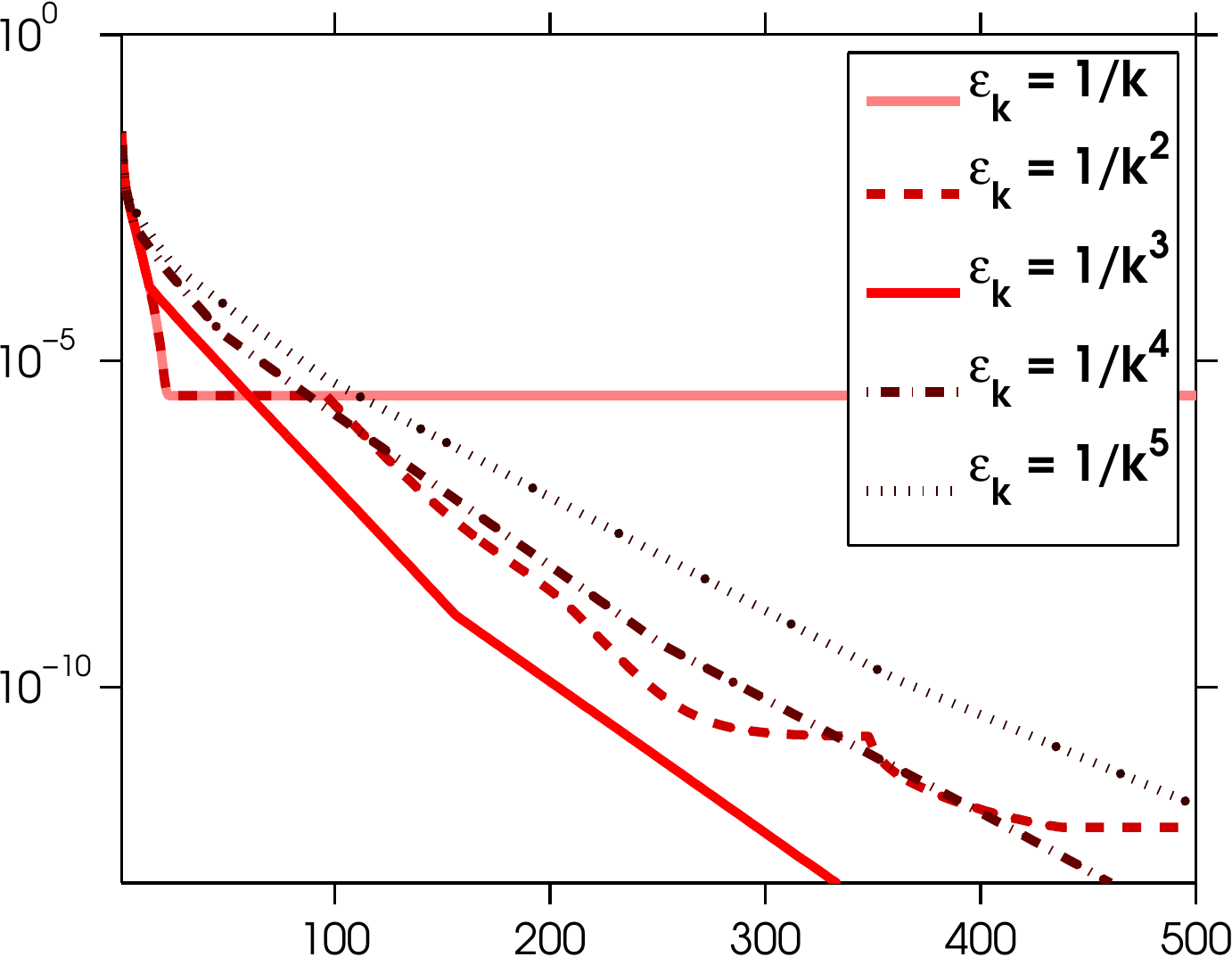}
\includegraphics[width=.32\columnwidth]{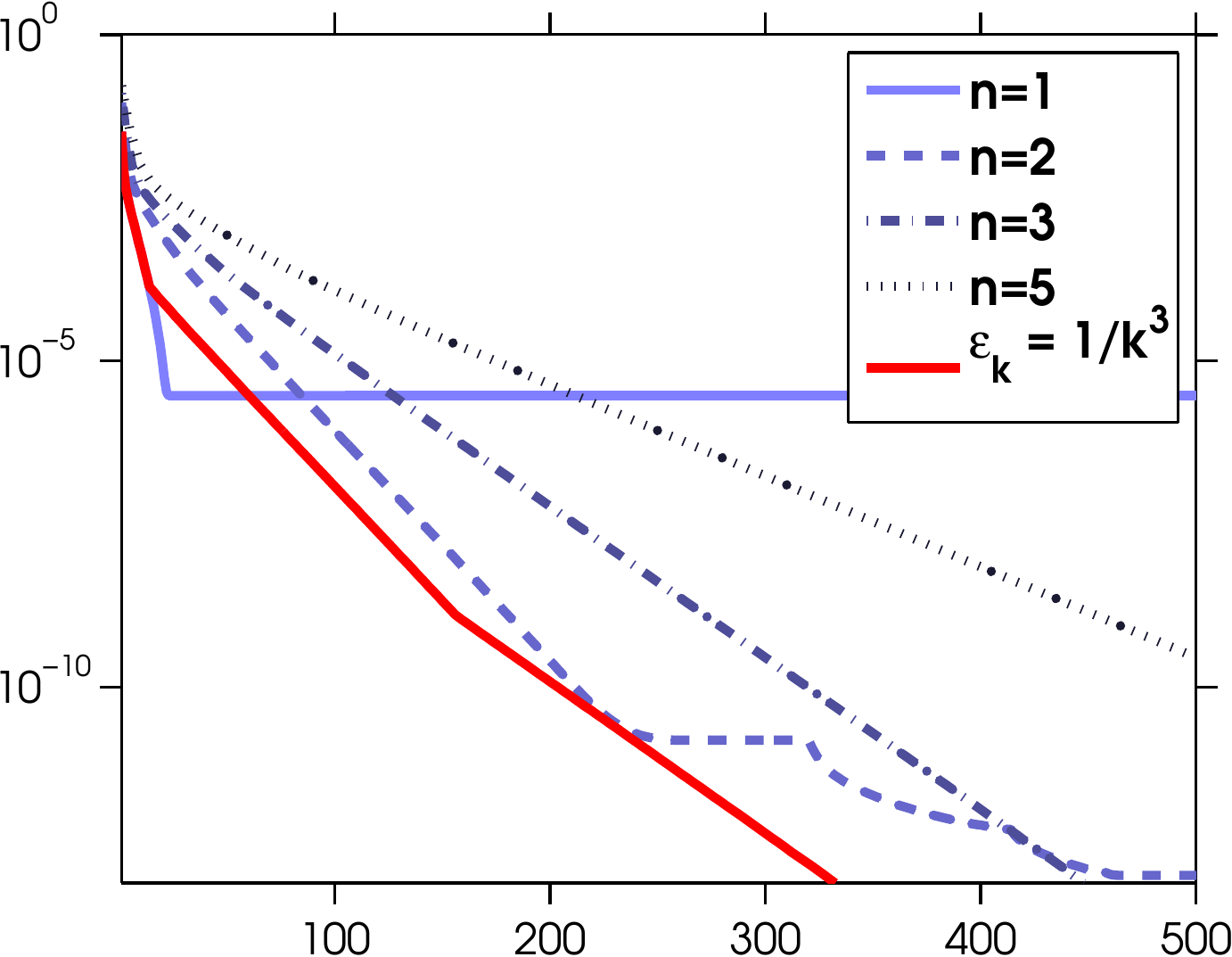}
\includegraphics[width=.32\columnwidth]{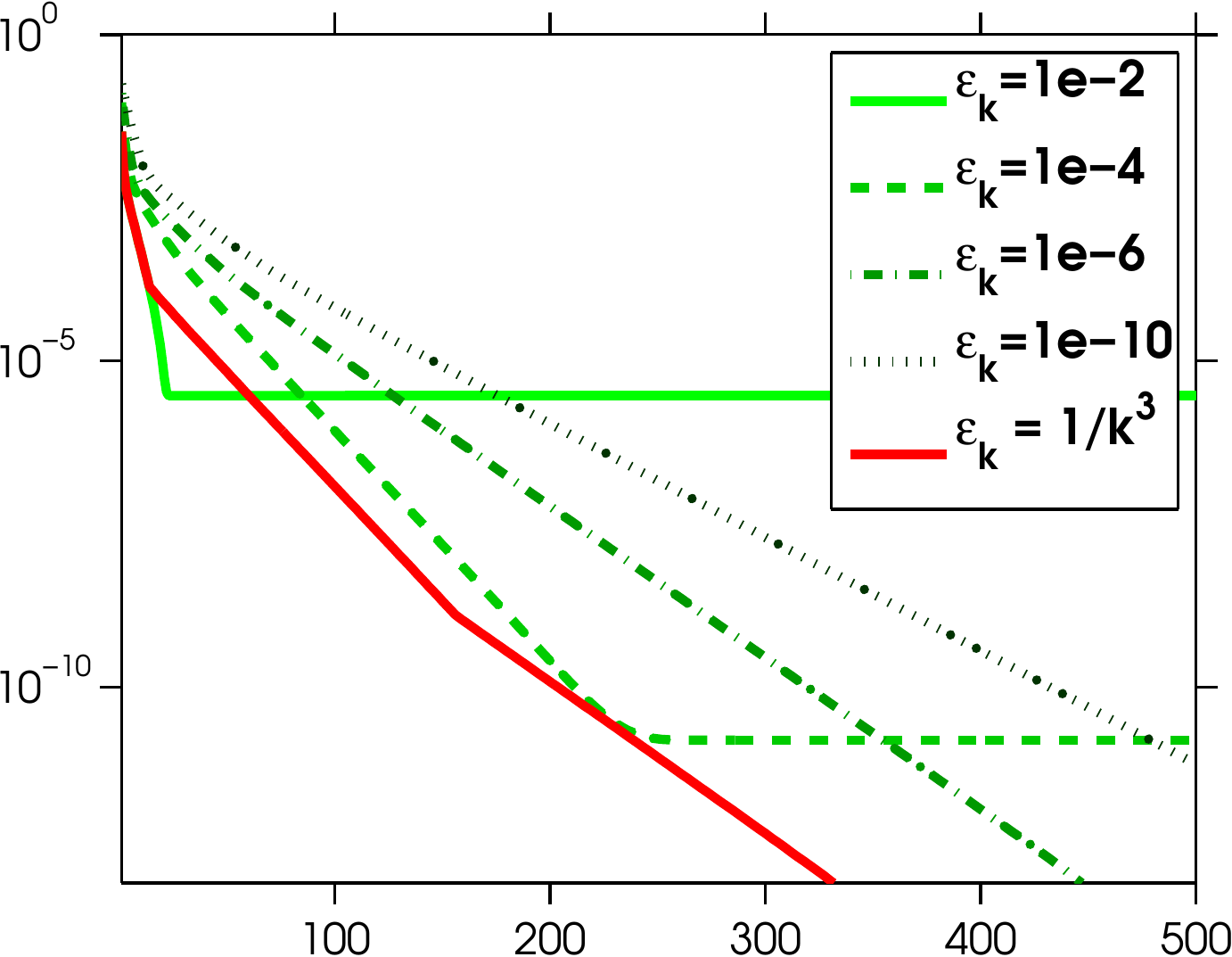}
\includegraphics[width=.32\columnwidth]{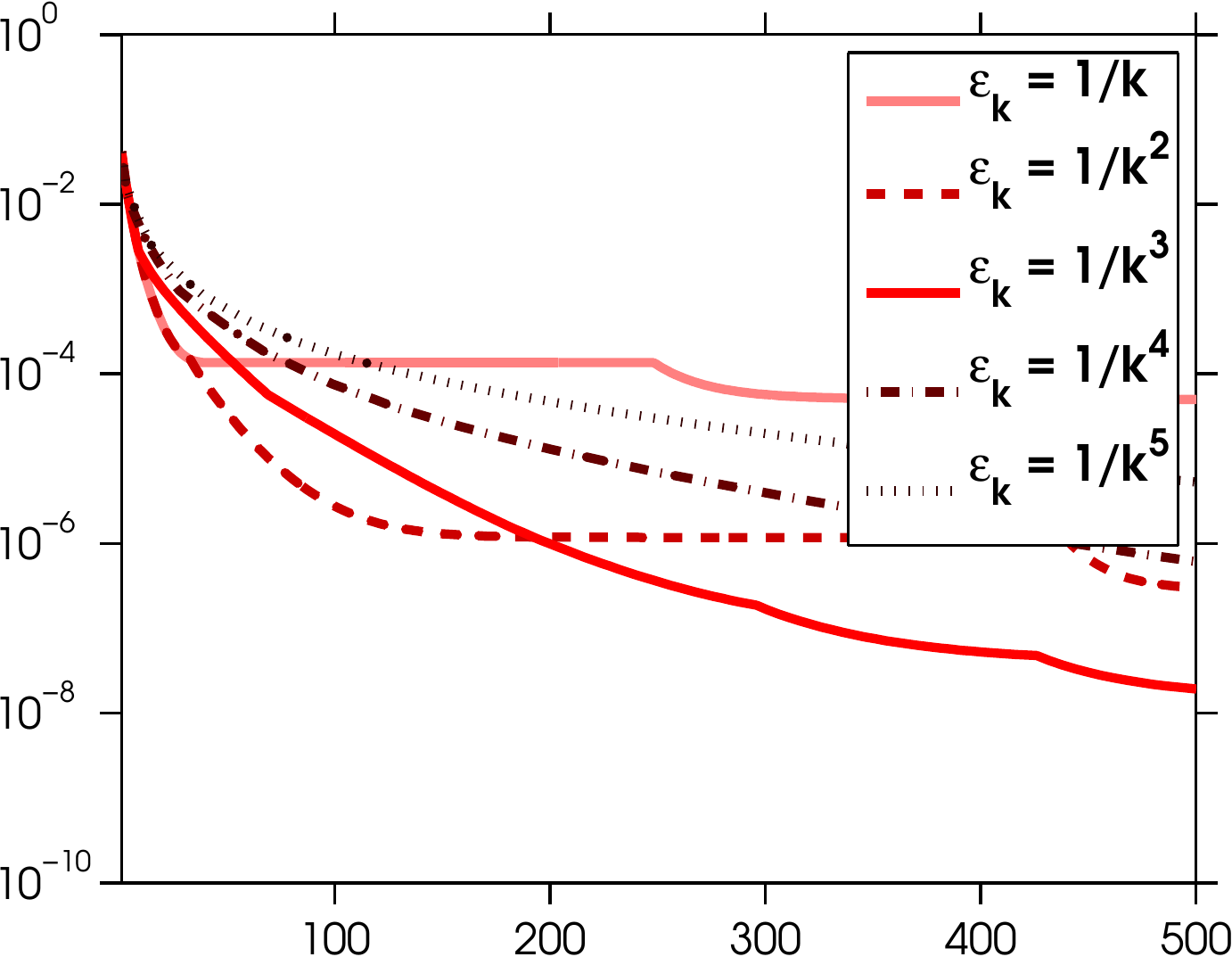}
\includegraphics[width=.32\columnwidth]{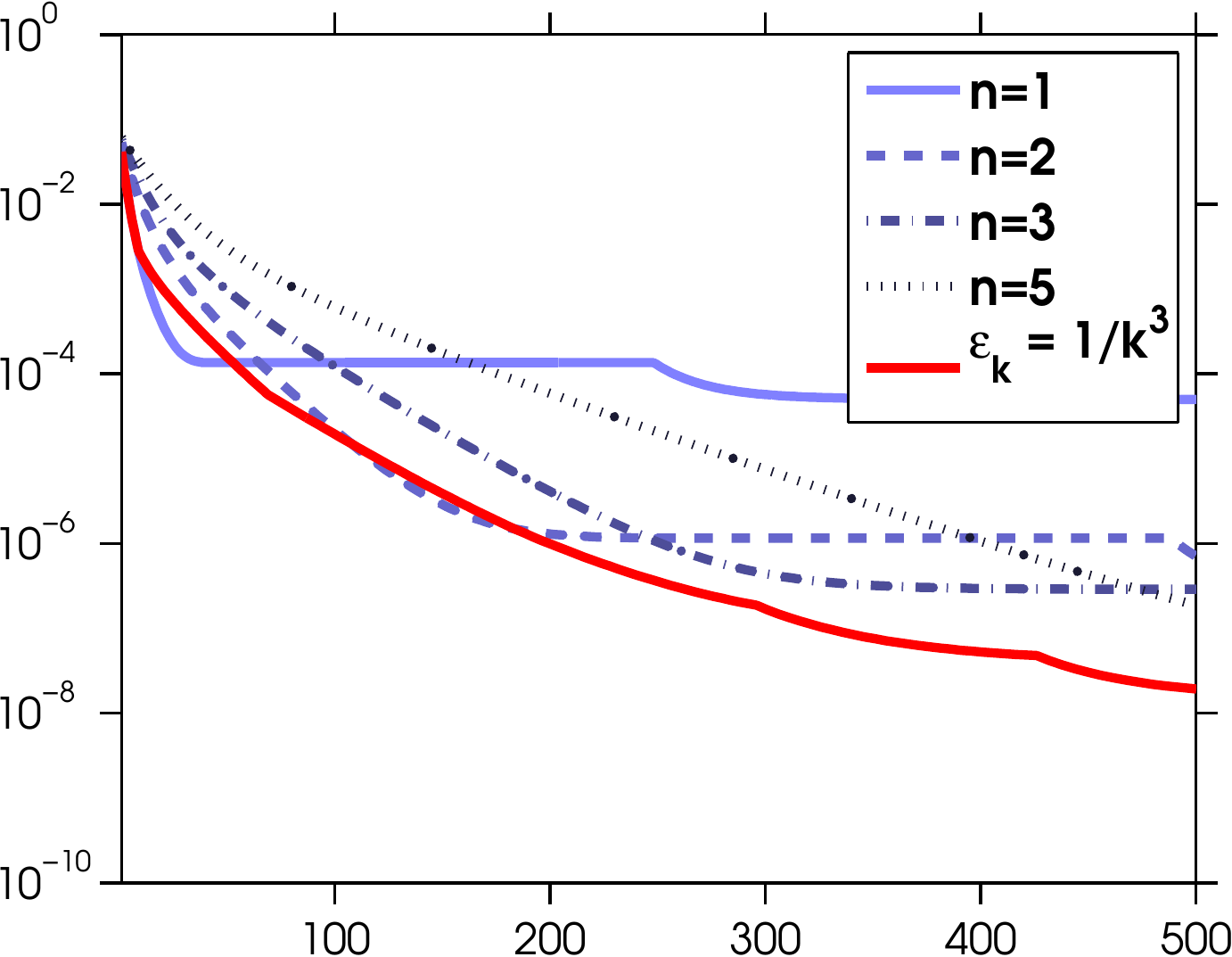}
\includegraphics[width=.32\columnwidth]{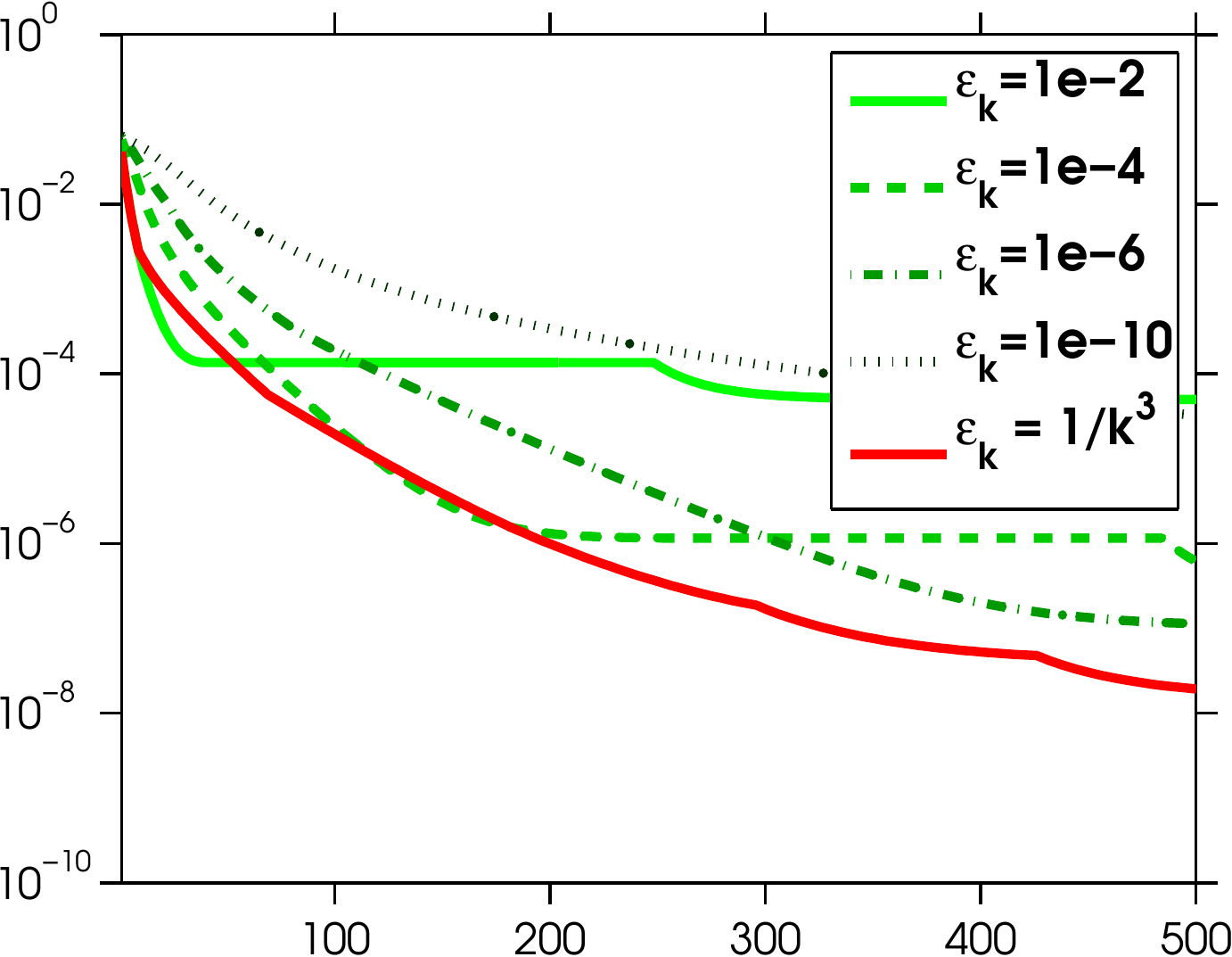}
\caption{Objective function against number of proximal iterations for the proximal-gradient method with different strategies for terminating the approximate proximity calculation.  From top to bottom we have the \emph{9\_Tumors}, \emph{Brain\_Tumor1}, \emph{Leukemia1}, and \emph{SRBCT} data sets.}
\label{fig:gradient}
\end{figure}

\begin{figure}
\centering
\includegraphics[width=.32\columnwidth]{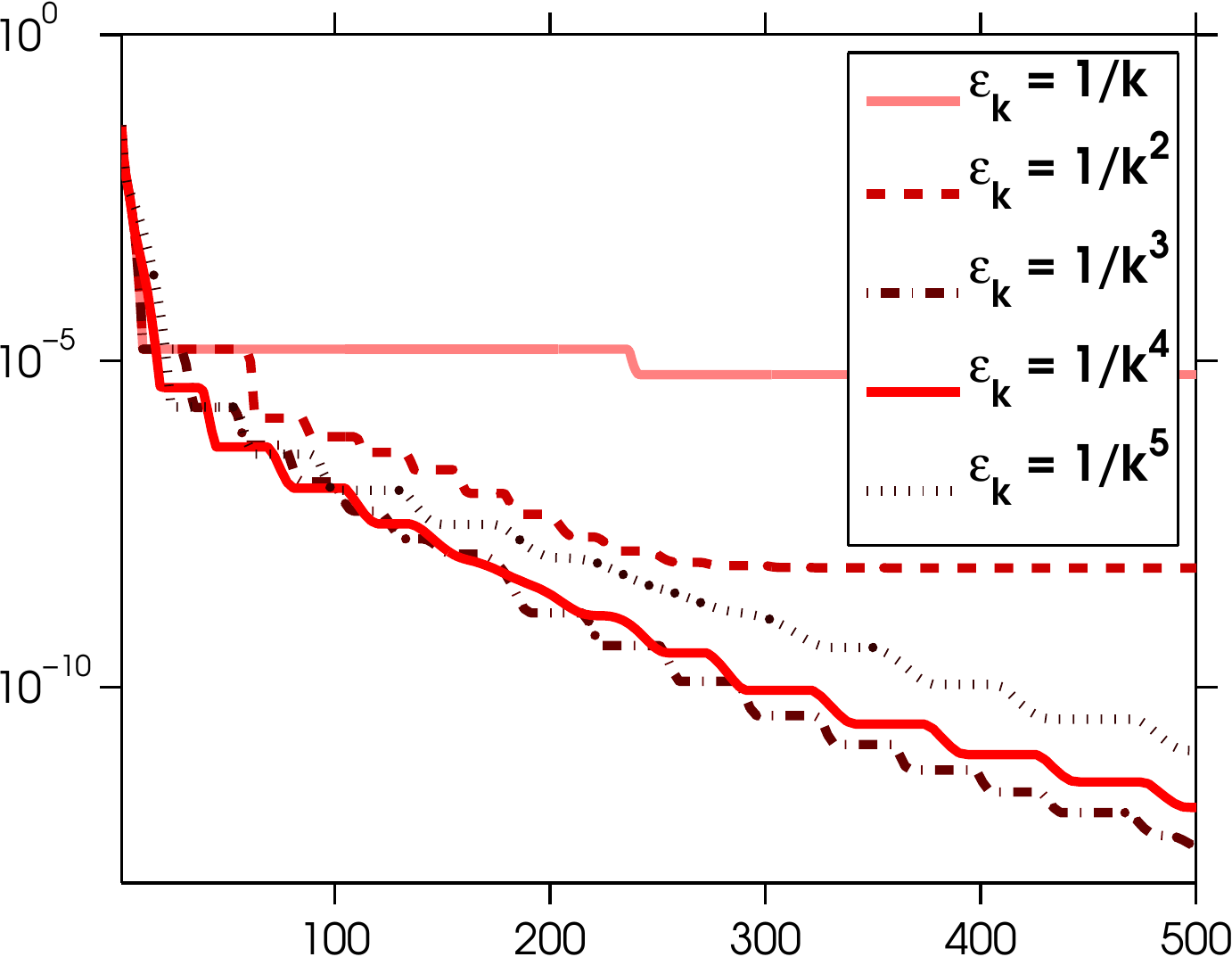}
\includegraphics[width=.32\columnwidth]{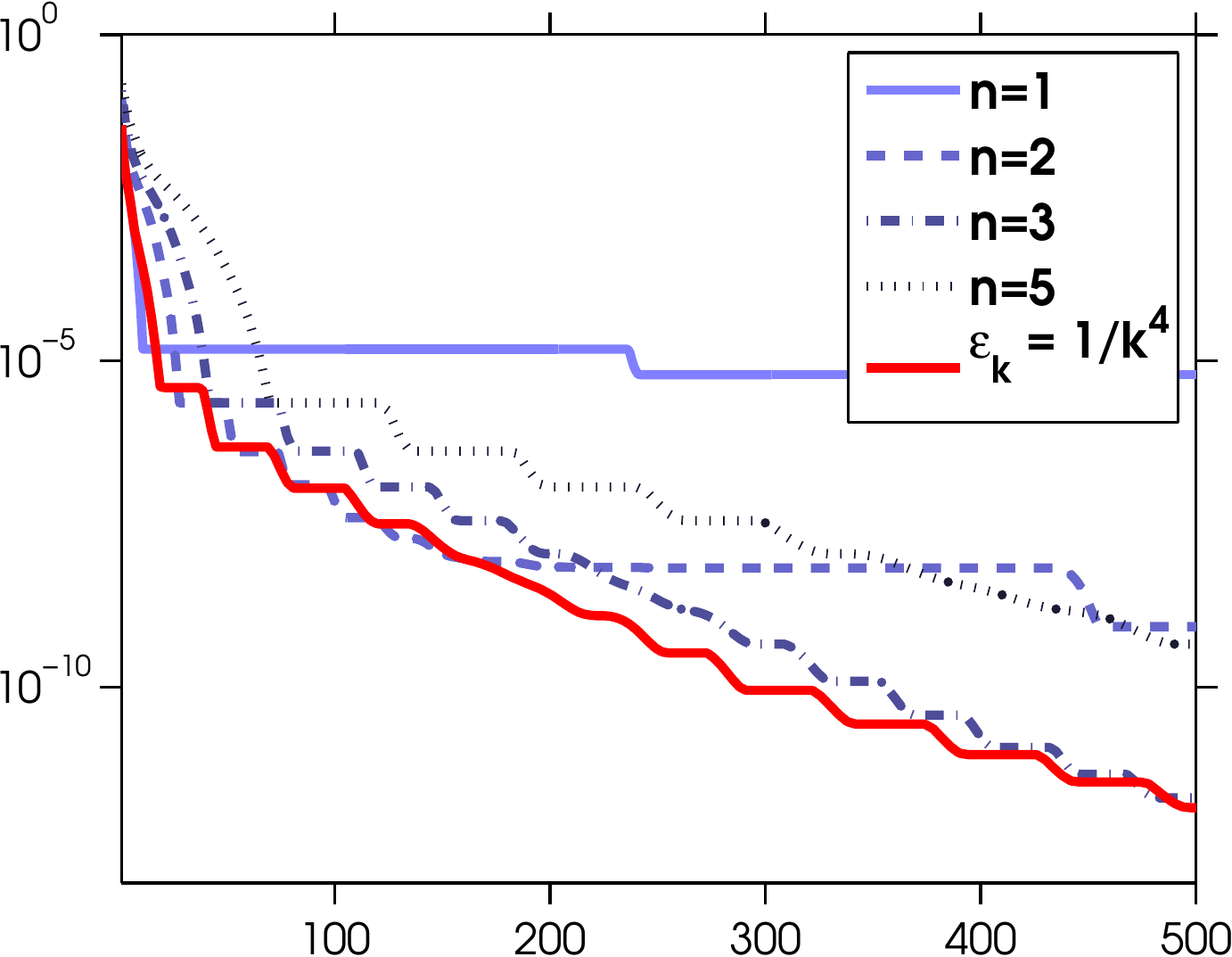}
\includegraphics[width=.32\columnwidth]{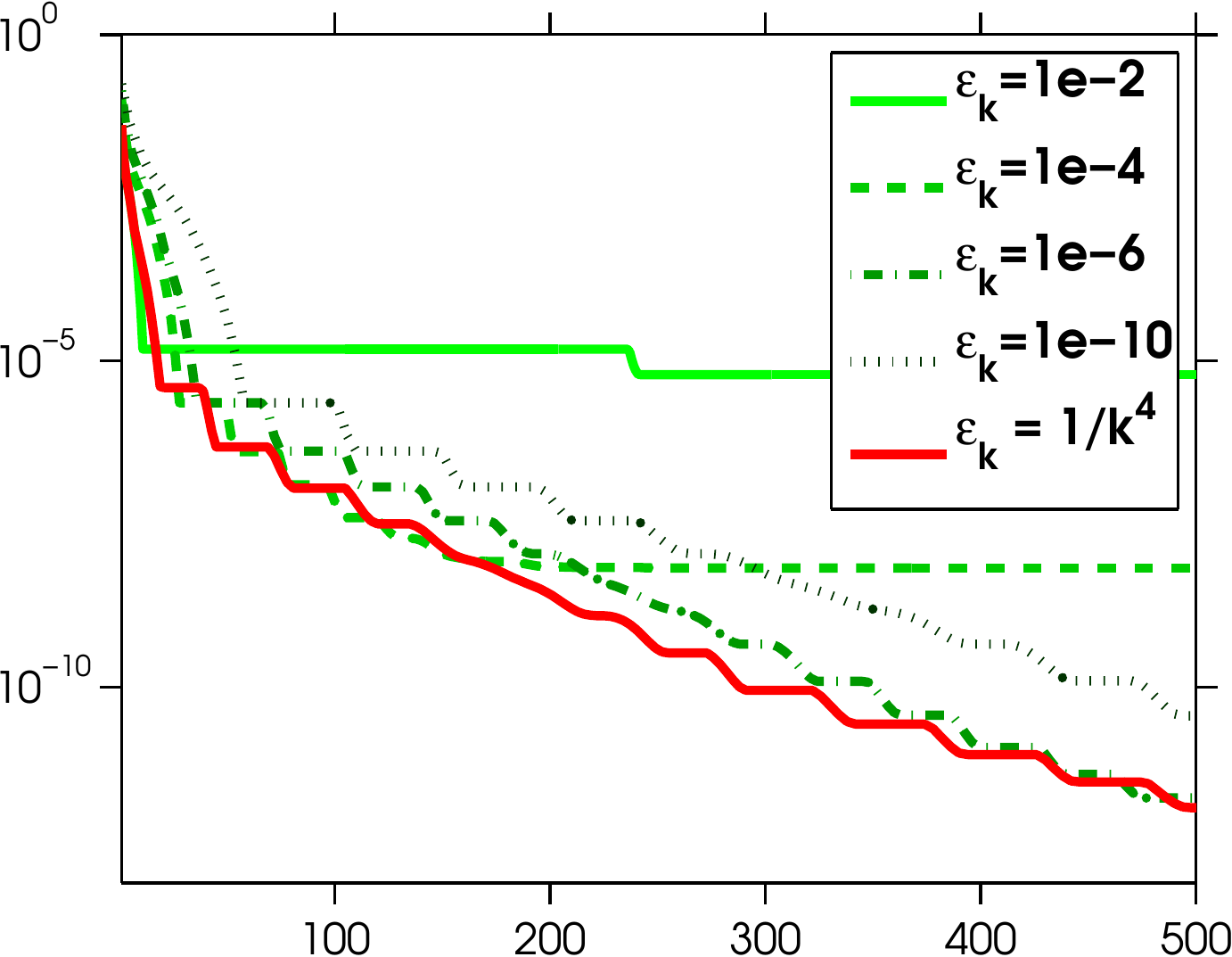}
\includegraphics[width=.32\columnwidth]{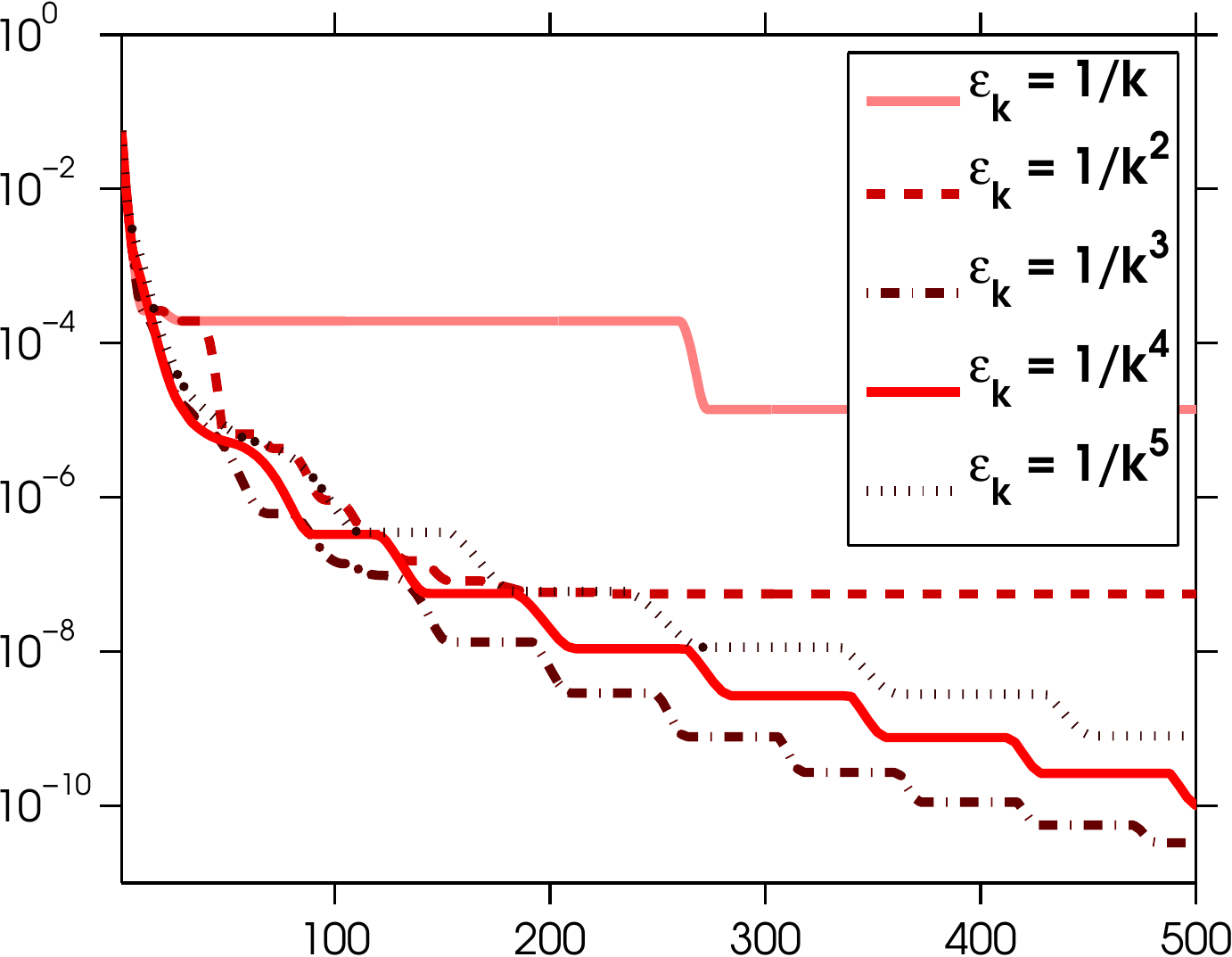}
\includegraphics[width=.32\columnwidth]{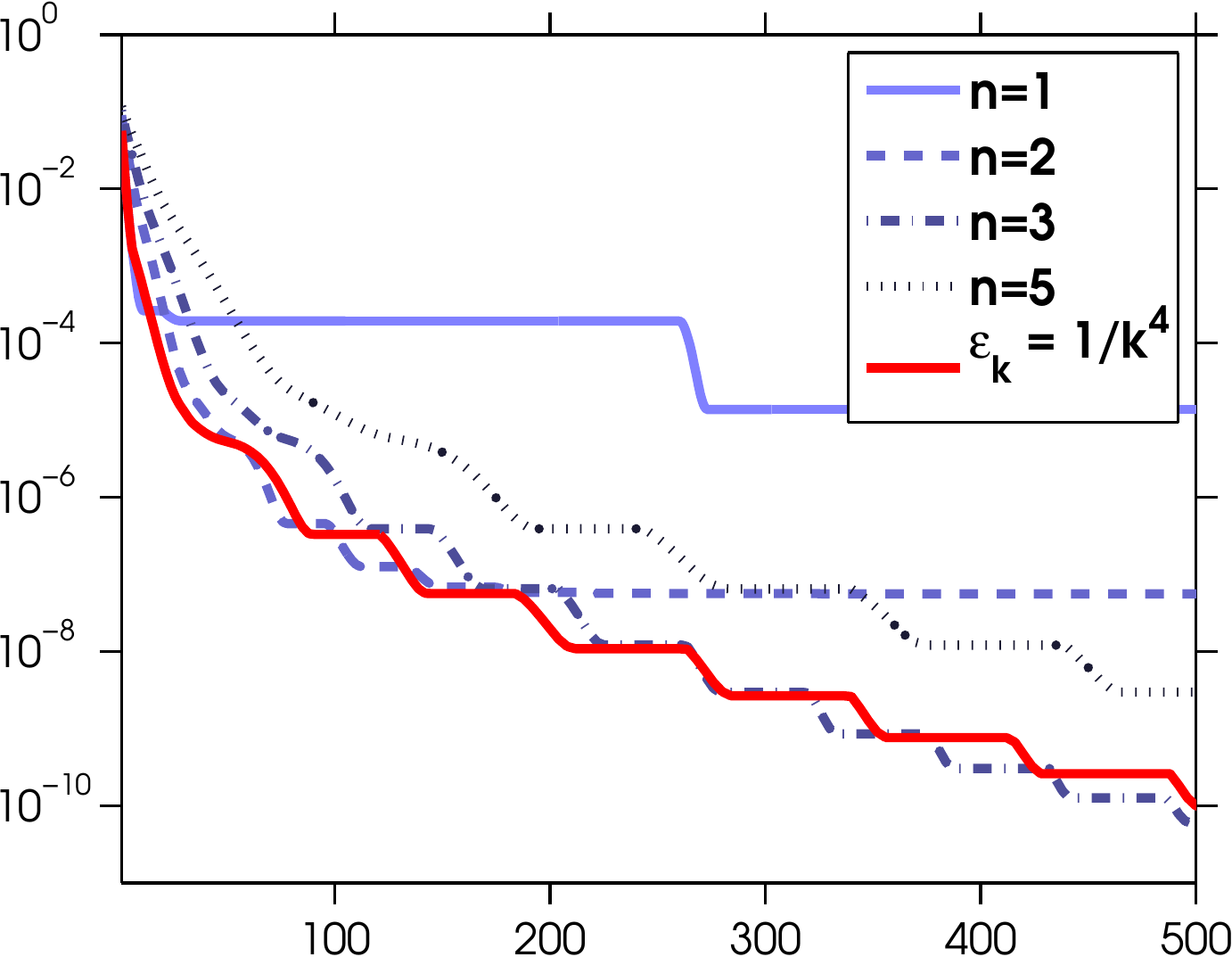}
\includegraphics[width=.32\columnwidth]{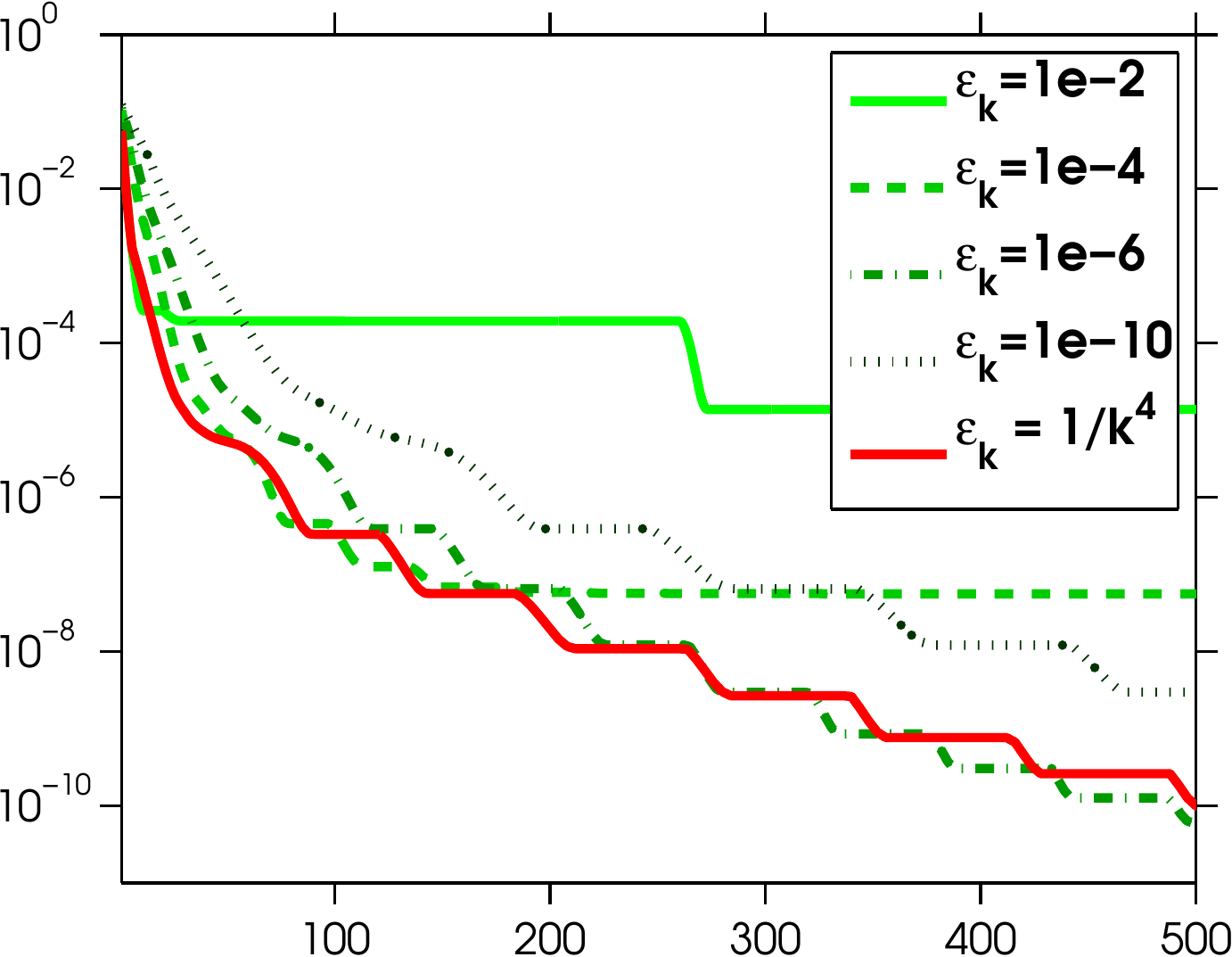}
\includegraphics[width=.32\columnwidth]{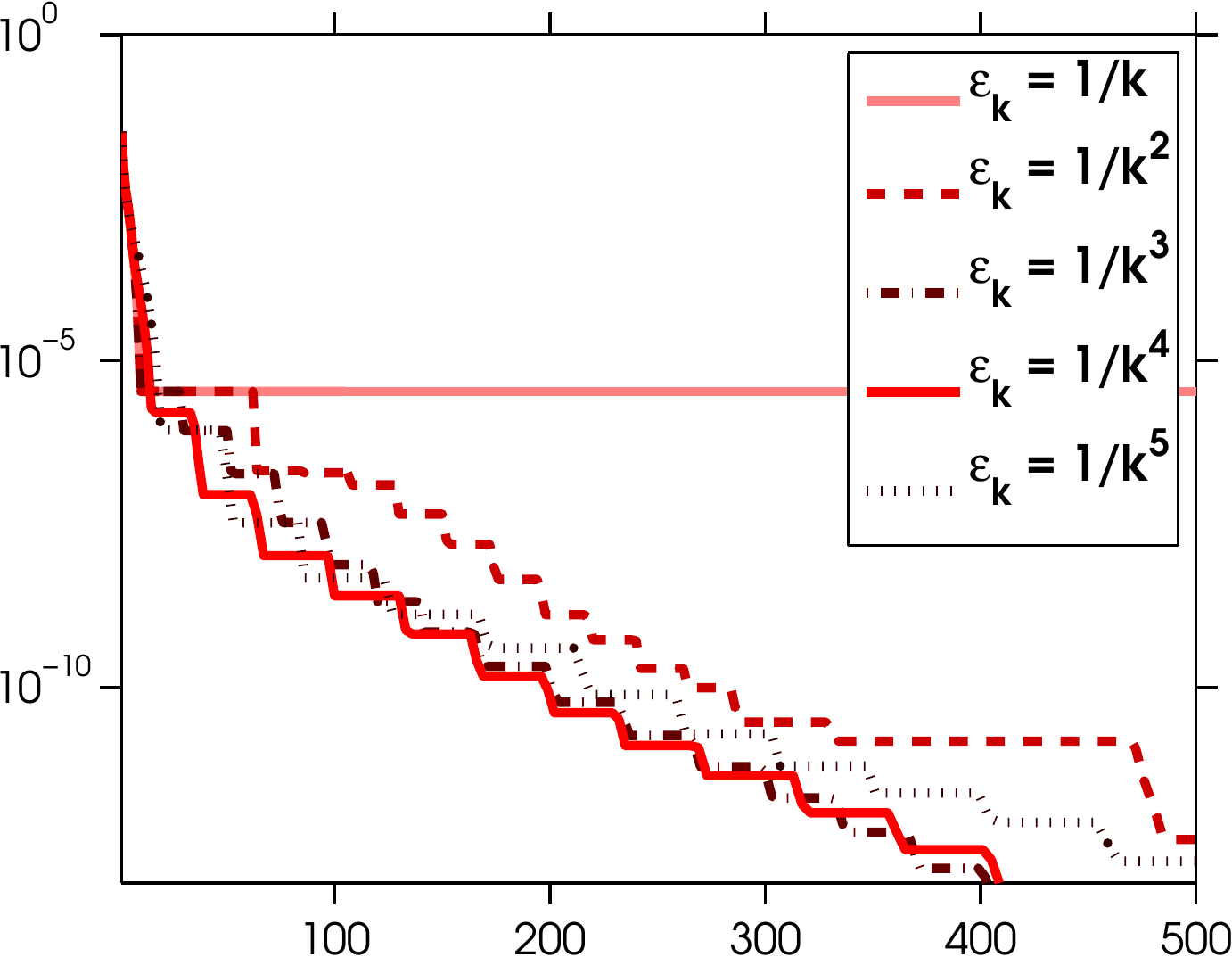}
\includegraphics[width=.32\columnwidth]{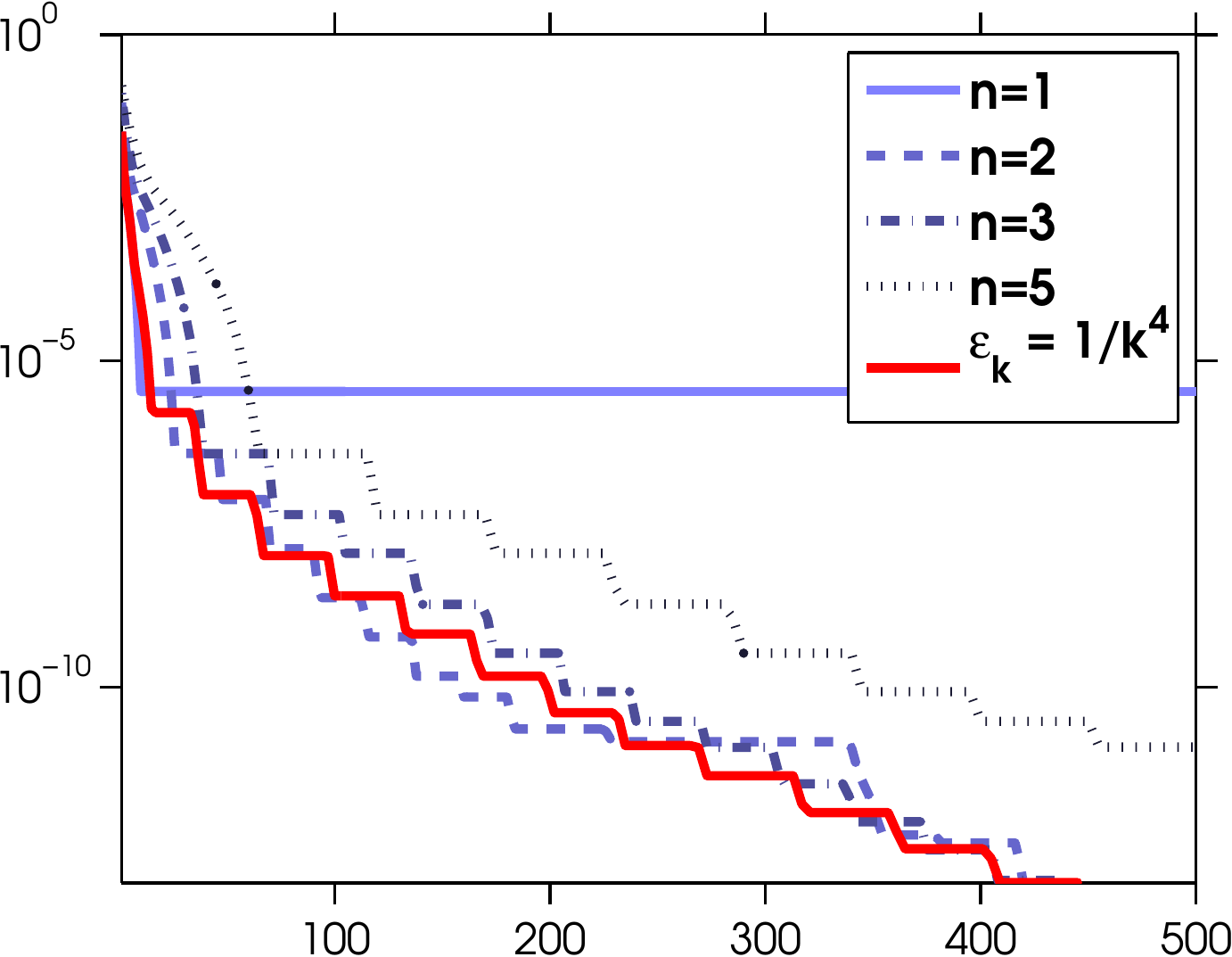}
\includegraphics[width=.32\columnwidth]{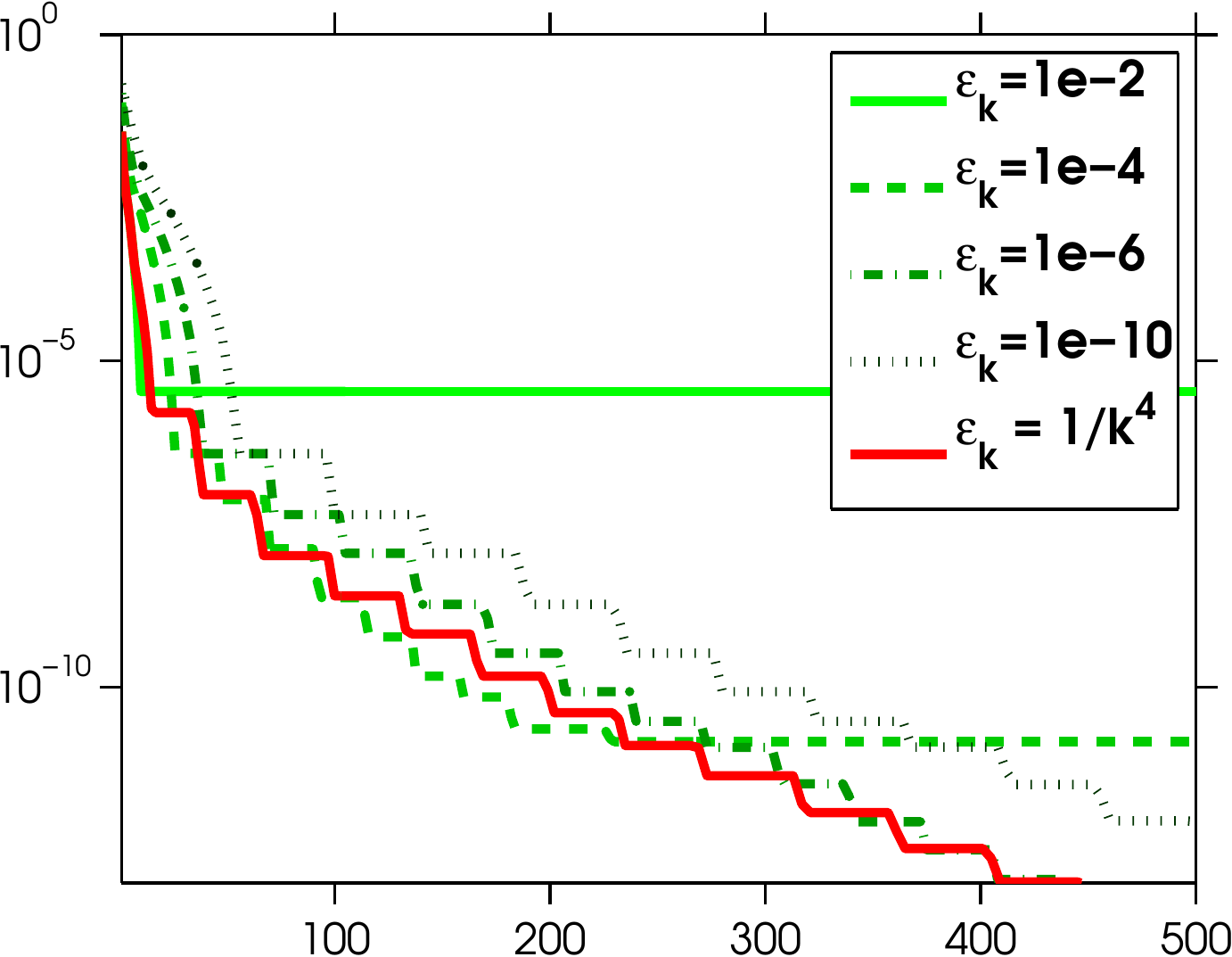}
\includegraphics[width=.32\columnwidth]{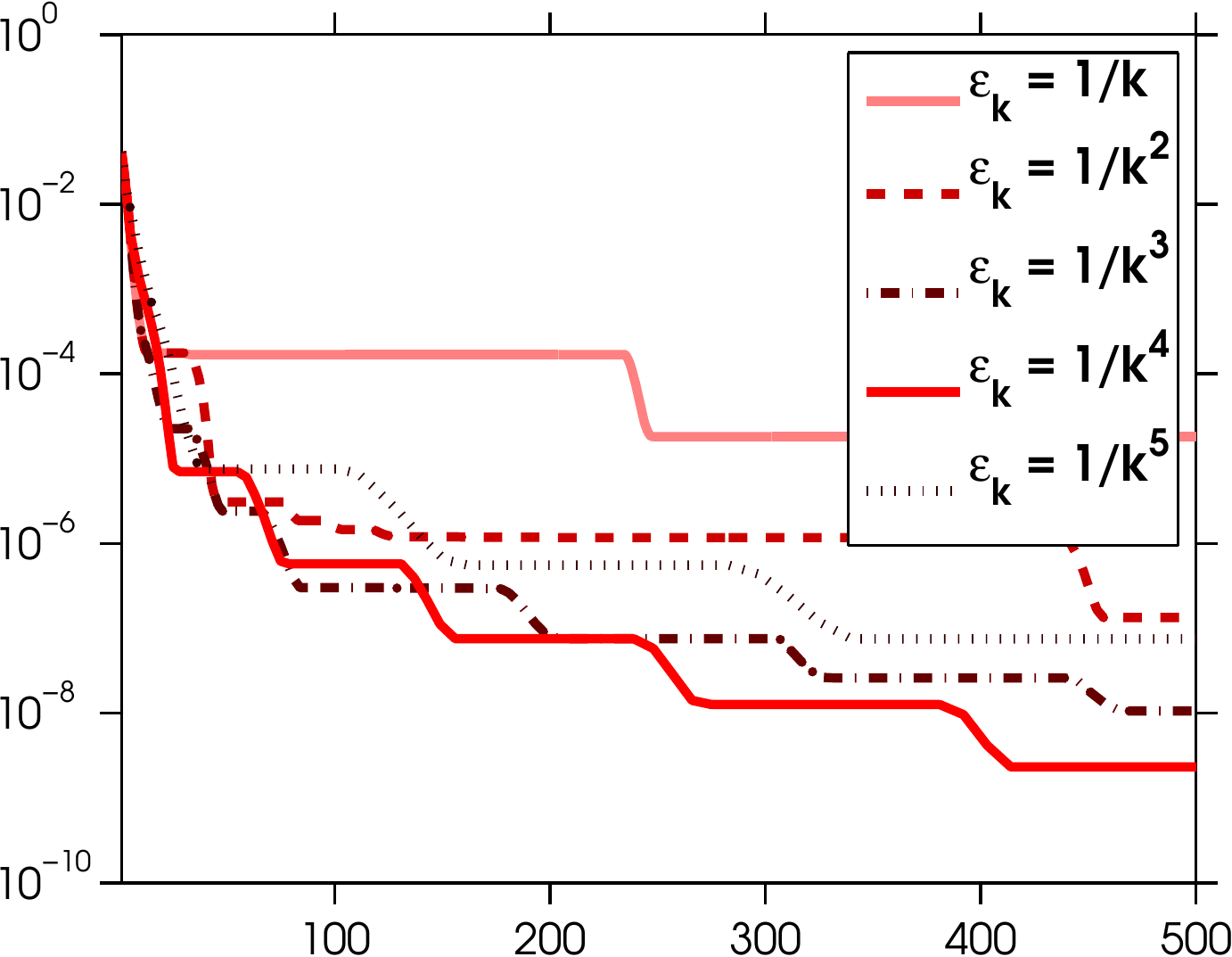}
\includegraphics[width=.32\columnwidth]{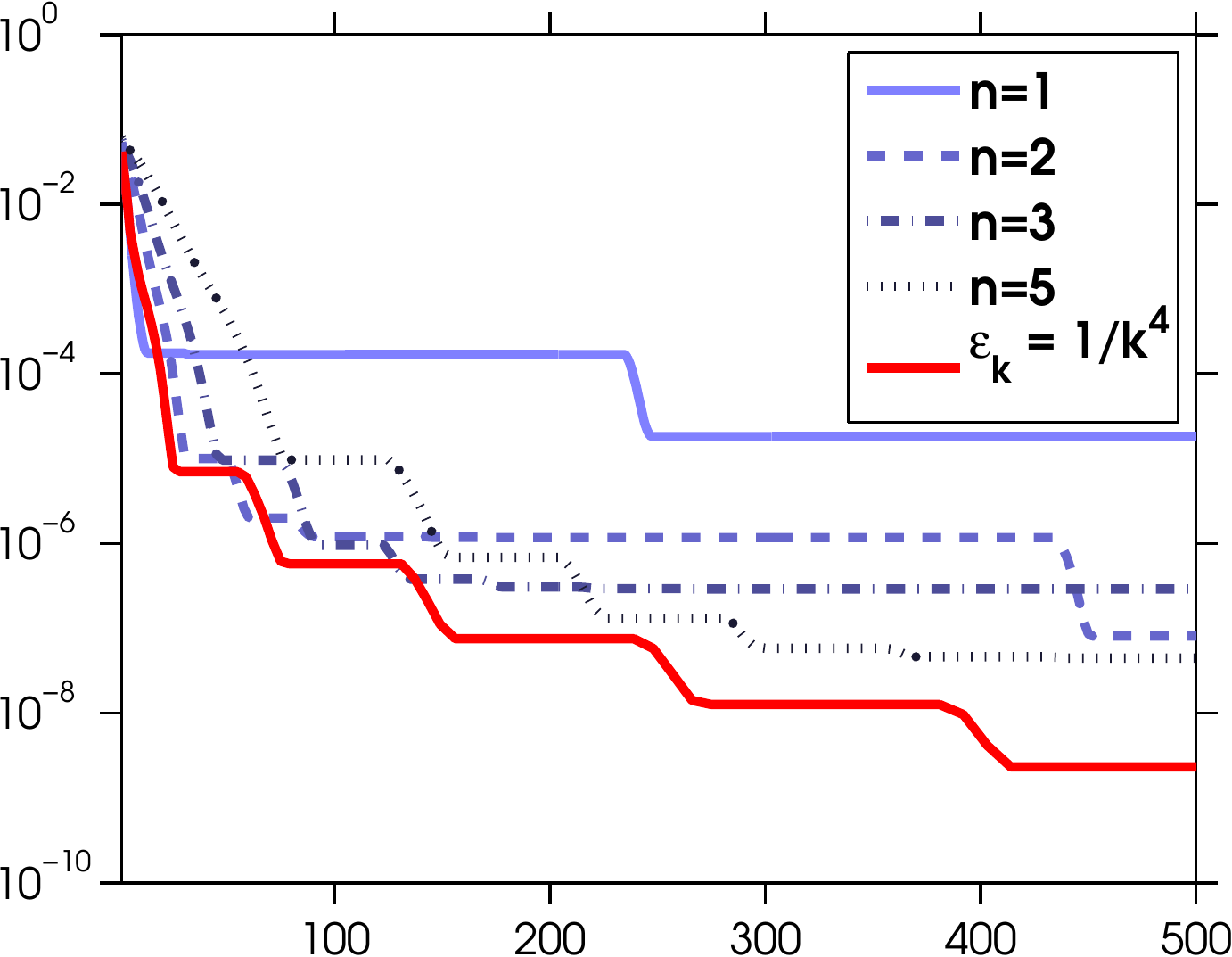}
\includegraphics[width=.32\columnwidth]{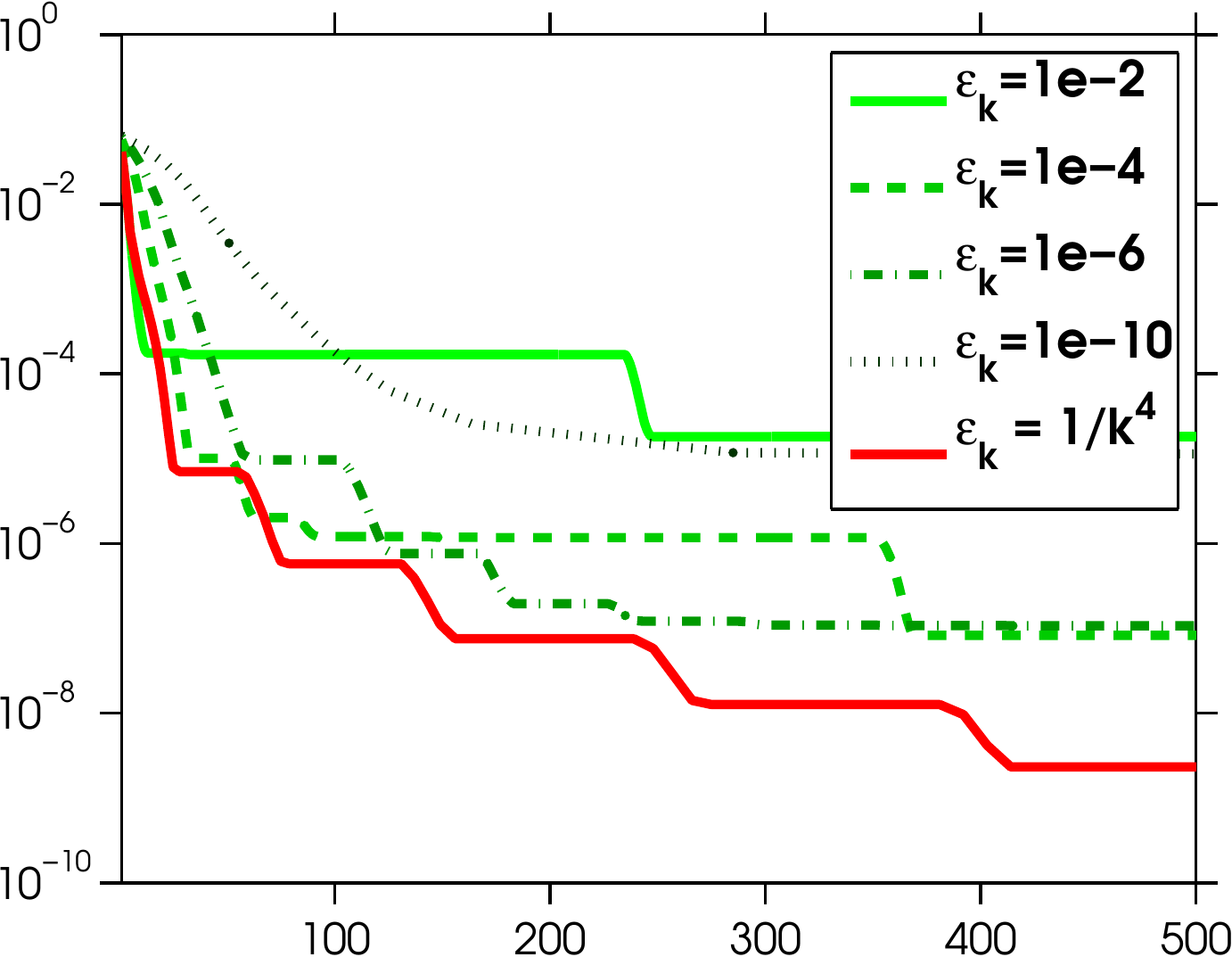}
\caption{Objective function against number of proximal iterations for the \emph{accelerated} proximal-gradient method with different strategies for terminating the approximate proximity calculation. From top to bottom we have the \emph{9\_Tumors}, \emph{Brain\_Tumor1}, \emph{Leukemia1}, and \emph{SRBCT} data sets.}
\label{fig:nesterov}
\end{figure}

\section{Discussion}
\label{sec:discussion}

An alternative to inexact proximal methods for solving structured sparsity problems are smoothing methods~\cite{nesterov2005smooth} and alternating direction methods~\cite{combettes2009proximal}.  However,  a major disadvantage of both these approaches is that the iterates are not sparse, so they can not take advantage of the sparsity of the problem when running the algorithm.  In contrast, the method proposed in this paper has the appealing property that it tends to generate sparse iterates.
Further, the accelerated smoothing method only has a convergence rate of $O(1/k)$, and the performance of alternating direction methods is often sensitive to the exact choice of their penalty parameter.  
On the other hand, while our analysis suggests using a sequence of errors like $O(1/k^\alpha)$ for $\alpha$ large enough, the practical performance of inexact proximal-gradients methods will be sensitive to the exact choice of this sequence.

Although we have illustrated the use of our results in the context of a structured sparsity problem, 
inexact proximal-gradient methods are also used in other applications such as total-variation~\cite{fadili-tip-10-tv,chen2010graph} and nuclear-norm~\cite{cai2008singular,ma2009fixed} regularization.  This work provides a theoretical justification for using inexact proximal-gradient methods in these and other applications, and suggests some guidelines for practioners that do not want to lose the appealing convergence rates of these methods.
Further, although our experiments and much of our discussion focus on errors in the calculation of the proximity operator, our analysis also allows for an error in the calculation of the gradient.  This may also be useful in a variety of contexts.  
For example, errors in the calculation of the gradient arise when fitting undirected graphical models and using an iterative method to approximate the gradient of the log-partition function~\cite{wainwright2003tree}.  Other examples include using a reduced set of training examples within kernel methods~\cite{kivinen2004online} or subsampling to solve semidefinite programming problems~\cite{alexandre2009subsampling}.

In our analysis, we assume that the smoothness constant $L$ is known, 
but it would be interesting to extend methods for estimating $L$ in the exact case~\cite{nesterov2007gradient} to the case of inexact algorithms.  
In the context of accelerated methods for strongly convex optimization, our analysis also assumes that $\mu$ is known, and it would be interesting to explore variants that do not make this assumption. 
We also note that if the basic proximal-gradient method is given knowledge of $\mu$, then our analysis can be modified to obtain a faster linear convergence rate of $(1-\gamma)/(1+\gamma)$ instead of $(1-\gamma)$ for strongly-convex optimization using a step size of $2/(\mu+L)$, see Theorem 2.1.15 of~\cite{nesterov2004introductory}.
Finally, we note that there has been recent interest in inexact proximal Newton-like methods~\cite{schmidt2011newtonlike}, and it would be interesting to analyze the effect of errors on the convergence rates of these methods.

\subsection*{Acknowledgements}
Mark Schmidt, Nicolas Le Roux, and Francis Bach are supported by the European Research Council (SIERRA-ERC-239993).

\section*{Appendix: Proofs of the propositions}

We first prove a lemma which will be used for the propositions.
\begin{lemma}
\label{lemma:bound}
Assume that the nonnegative sequence $\{u_k\}$ satisfies the following recursion for all $k \geqslant 1$:
$$
u_k^2 \leqslant S_k + \sum_{i=1}^k \lambda_i u_i,
$$
with $\{S_k\}$ an increasing sequence, $S_0 \geqslant u_0^2$ and $\lambda_i \geqslant 0$ for all $i$. Then, for all $k \geqslant 1$, then
$$
u_k \leqslant \frac{1}{2} \sum_{i=1}^k \lambda_i + \left(
S_k + \left( \frac{1}{2} \sum_{i=1}^k \lambda_i \right)^2
\right)^{1/2}
$$
\end{lemma}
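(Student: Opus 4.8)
The plan is to reduce this second-order recursion to a single scalar quadratic inequality by passing to the running maximum of the sequence. Write $\Lambda_k = \sum_{i=1}^k \lambda_i$ for brevity, and define $v_k = \max_{1 \leqslant i \leqslant k} u_i$. A direct induction on $u_k$ is awkward because the hypothesis couples $u_k$ to all previous iterates through $\sum_{i=1}^k \lambda_i u_i$; the cleaner route is to bound every $u_i$ ($i \leqslant k$) by the same right-hand side and then take the maximum.

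Concretely, I would first observe that for each index $i$ with $1 \leqslant i \leqslant k$ the hypothesis gives $u_i^2 \leqslant S_i + \sum_{j=1}^i \lambda_j u_j$. Since $\{S_k\}$ is increasing we have $S_i \leqslant S_k$, and since each $\lambda_j u_j \geqslant 0$ (both factors are nonnegative) we have $\sum_{j=1}^i \lambda_j u_j \leqslant \sum_{j=1}^k \lambda_j u_j$. Hence $u_i^2 \leqslant S_k + \sum_{j=1}^k \lambda_j u_j$ holds uniformly in $i \leqslant k$. Taking the maximum over $i$ on the left, and bounding $u_j \leqslant v_k$ on the right, yields the scalar inequality
\[
v_k^2 \;\leqslant\; S_k + \Lambda_k\, v_k .
\]

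It then remains to solve this quadratic inequality. Rewriting it as $v_k^2 - \Lambda_k v_k - S_k \leqslant 0$ and using $v_k \geqslant 0$, I would conclude that $v_k$ is at most the larger root, namely
\[
v_k \;\leqslant\; \frac{\Lambda_k}{2} + \left( S_k + \Big(\tfrac{1}{2}\Lambda_k\Big)^2 \right)^{1/2}.
\]
Since $u_k \leqslant v_k$ by definition of the running maximum, this is exactly the claimed bound. The only genuinely delicate point — and what I would flag as the main obstacle — is the uniform-in-$i$ estimate in the second step: one must notice that monotonicity of $\{S_k\}$ together with nonnegativity of the increments $\lambda_j u_j$ lets every $u_i^2$ be controlled by the \emph{same} $k$-th right-hand side, so that passing to $v_k$ does not lose anything and the coupling collapses to the single term $\Lambda_k v_k$. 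Once that reduction is in place, the quadratic step is routine.
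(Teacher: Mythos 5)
Your proof is correct, and it takes a genuinely different route from the paper's. The paper argues by induction on $k$: it introduces the running maximum $v_{k-1}=\max\{u_1,\dots,u_{k-1}\}$, completes the square in $u_k$ to get $u_k \leqslant \frac{\lambda_k}{2}+\bigl(S_k+\frac{\lambda_k^2}{4}+v_{k-1}\sum_{i=1}^{k-1}\lambda_i\bigr)^{1/2}$, and then performs a case analysis comparing $v_{k-1}$ to the fixed point $v_{k-1}^\ast$ of the resulting one-step map, using that both terms in the ensuing maximum are increasing in $v_{k-1}$. You instead avoid induction entirely: your key observation that monotonicity of $\{S_k\}$ and nonnegativity of the $\lambda_j u_j$ let every $u_i^2$ ($i\leqslant k$) be dominated by the \emph{same} right-hand side $S_k+\sum_{j=1}^k\lambda_j u_j$ collapses the whole coupled system into the single scalar quadratic inequality $v_k^2\leqslant S_k+\Lambda_k v_k$, whose larger root is exactly the claimed bound (note $S_k\geqslant S_0\geqslant u_0^2\geqslant 0$, so the root formula is legitimate). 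Both arguments yield the identical constant, so nothing is lost; what your version buys is brevity and transparency — no induction hypothesis, no fixed-point case split — while the paper's inductive bookkeeping offers no extra generality here. The one point worth making explicit in a write-up is that $v_k^2=\max_{1\leqslant i\leqslant k}u_i^2$ precisely because the $u_i$ are nonnegative, which is where that hypothesis enters.
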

\begin{proof}
We prove the result by induction. It is true for $k=0$ (by assumption). We assume it is true for  $k-1$, and we denote by $v_{k-1} = \max \{u_1,\dots,u_{k-1} \}  $. From the recursion, we thus get
$$
(u_k - \lambda_k / 2)^2 \leqslant S_k + \frac{\lambda_k^2}{4} +  v_{k-1} \sum_{i=1}^{k-1} \lambda_i
$$
leading to
$$
u_k  \leqslant  \frac{\lambda_k}{2}  + \left( S_k + \frac{\lambda_k^2}{4} +  v_{k-1} \sum_{i=1}^{k-1} \lambda_i
\right)^{1/2}
$$
and thus
$$
v_{k} \leqslant \max \left\{v_{k-1}, \frac{\lambda_k}{2}  + \left( S_k + \frac{\lambda_k^2}{4} +  v_{k-1} \sum_{i=1}^{k-1} \lambda_i
\right)^{1/2} \right\}
$$
The two terms in the maximum are equal if $v_{k-1}^2 = S_k + v_{k-1} \sum_{i=1}^k \lambda_i$, i.e., for
$v_{k-1}^\ast = \frac{1}{2} \sum_{i=1}^k \lambda_i + \left(
S_k + \left( \frac{1}{2} \sum_{i=1}^k \lambda_i \right)^2
\right)^{1/2}$. If $v_{k-1} \leqslant v_{k-1}^\ast$, then $v_{k} \leqslant v_{k-1}^\ast$ since the two terms in the $\max$ are increasing functions of $v_{k-1}$. If
$v_{k-1} \geqslant v_{k-1}^\ast$, then $v_{k-1} \geqslant \frac{\lambda_k}{2}  + \left( S_k + \frac{\lambda_k^2}{4} +  v_{k-1} \sum_{i=1}^{k-1} \lambda_i
\right)^{1/2}$. Hence, $v_{k} \leqslant v_{k-1}$, and the induction hypotheses ensure that the property is satisfied for $k$.
\end{proof}

The following lemma will allow us to characterize the elements of the $\varepsilon_k$-subdifferential of h at $x_k$, $\partial_{  \varepsilon_k} h  (x_k)$. As a reminder, the $\varepsilon$-subdifferential of a convex function $a$ at $x$ is the set of vectors $y$ such that $a(t) - a(x) \geqslant y^\top (t - x) - \varepsilon$ for all $t$.
\begin{lemma}
\label{lemma:subdifferential}
If $x_i$ is an $\ve_i$-optimal solution to the proximal problem~\eqref{eq:prox} in the sense of~\eqref{eq:3}, then there exists $f_i$ such that $\| f_i \| \leqslant \sqrt{\frac{2\varepsilon_i}{L}}$ and
\[
L\left(y_{i-1} - x_i -    \frac{1}{L} ( g'(y_{i-1}) + e_{i}) - f_i\right)  \in \partial_{  \varepsilon_i} h  (x_i) \; .
\]
\end{lemma}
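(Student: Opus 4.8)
The plan is to introduce the \emph{exact} proximal solution as an intermediary: it carries a genuine subgradient of $h$, and the whole proof amounts to transferring that subgradient into an $\varepsilon_i$-subgradient at the inexact iterate $x_i$, paying a controlled price measured by the distance to the exact solution. First I would fix the abbreviation $y = y_{i-1} - \frac1L\big(g'(y_{i-1}) + e_i\big)$ and write $\phi(x) = \frac{L}{2}\norm{x-y}^2 + h(x)$, so that the $\varepsilon_i$-optimality condition~\eqref{eq:3} reads $\phi(x_i) \leqslant \varepsilon_i + \min_x \phi(x)$. Let $\bar x_i = \prox_L(y)$ be the exact minimizer of $\phi$. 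Since the quadratic term is differentiable, the Moreau--Rockafellar sum rule gives $0 \in \partial\phi(\bar x_i) = L(\bar x_i - y) + \partial h(\bar x_i)$, so that $s_i \defd L(y - \bar x_i) \in \partial h(\bar x_i)$.

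Next I would make the choice $f_i = \bar x_i - x_i$ and check that it reduces the claim to a clean statement. A direct computation, using $Ly = Ly_{i-1} - (g'(y_{i-1}) + e_i)$, gives
\[
L\Big(y_{i-1} - x_i - \tfrac1L\big(g'(y_{i-1}) + e_i\big) - f_i\Big) = L(y - x_i) - Lf_i = L(y - \bar x_i) = s_i,
\]
so the asserted membership is exactly $s_i \in \partial_{\varepsilon_i} h(x_i)$, while the norm bound becomes $\norm{f_i} = \norm{\bar x_i - x_i} \leqslant \sqrt{2\varepsilon_i/L}$.

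The crux is a single algebraic identity obtained by expanding the square about $\bar x_i$: since $L(\bar x_i - y) = -s_i$,
\[
\phi(x_i) - \phi(\bar x_i) = \Big[\,h(x_i) - h(\bar x_i) - s_i^\top(x_i - \bar x_i)\,\Big] + \frac{L}{2}\norm{x_i - \bar x_i}^2.
\]
Because $s_i \in \partial h(\bar x_i)$, the bracketed term is nonnegative, and the quadratic term is obviously nonnegative; as their sum is at most $\varepsilon_i$ by~\eqref{eq:3}, \emph{each} is individually at most $\varepsilon_i$. The quadratic bound $\frac{L}{2}\norm{x_i - \bar x_i}^2 \leqslant \varepsilon_i$ yields the norm estimate on $f_i$ directly (so no separate strong-convexity lemma is needed). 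For the subdifferential property I would combine the bracketed bound $h(x_i) - h(\bar x_i) - s_i^\top(x_i - \bar x_i) \leqslant \varepsilon_i$ with the exact subgradient inequality $h(t) \geqslant h(\bar x_i) + s_i^\top(t - \bar x_i)$; adding and subtracting $s_i^\top x_i$ and $h(x_i)$ gives $h(t) \geqslant h(x_i) + s_i^\top(t - x_i) - \varepsilon_i$ for all $t$, i.e.\ $s_i \in \partial_{\varepsilon_i} h(x_i)$.

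I do not anticipate a genuine obstacle here; the point to get right is the two-term decomposition above, which must split $\phi(x_i)-\phi(\bar x_i)$ into two separately nonnegative pieces so that one piece controls $\norm{f_i}$ and the other certifies the $\varepsilon_i$-subdifferential membership. The only care needed is in justifying the sum rule for the exact subgradient $s_i$ and in the bookkeeping that identifies the left-hand side of the lemma with $s_i$ under the choice $f_i = \bar x_i - x_i$.
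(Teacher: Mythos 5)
Your proof is correct, but it takes a genuinely different route from the paper's. The paper argues entirely inside $\varepsilon$-subdifferential calculus: it uses the sum rule $\partial_\varepsilon(a_1+a_2)(x)\subset\partial_\varepsilon a_1(x)+\partial_\varepsilon a_2(x)$ together with an explicit description of the $\varepsilon$-subdifferential of the quadratic $a_1(x)=\frac{L}{2}\norm{x-z}^2$ as $\{L(x-z)+Lf:\frac{L}{2}\norm{f}^2\leqslant\varepsilon\}$, and then reads off the existence of $f$ from the decomposition of $0\in\partial_\varepsilon(a_1+h)(x_i)$. You instead route everything through the exact minimizer $\bar x_i=\prox_L(y)$, take the concrete certificate $f_i=\bar x_i-x_i$, and split $\phi(x_i)-\phi(\bar x_i)$ into the Bregman-type term $h(x_i)-h(\bar x_i)-s_i^\top(x_i-\bar x_i)$ plus $\frac{L}{2}\norm{x_i-\bar x_i}^2$, each nonnegative and hence each at most $\varepsilon_i$; I checked the identity and the transfer of the subgradient inequality from $\bar x_i$ to $x_i$, and both are sound (the Moreau--Rockafellar sum rule you invoke is justified since the quadratic is finite and continuous everywhere). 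What your approach buys is self-containedness --- no prior facts about $\varepsilon$-subdifferentials are needed --- plus an explicit, geometrically meaningful choice of $f_i$ as the displacement from the exact proximal point, and in fact the slightly stronger conclusion that the two error terms sum to at most $\varepsilon_i$ rather than being bounded by $\varepsilon_i$ separately. What the paper's approach buys is brevity given the standard $\varepsilon$-subdifferential toolbox, and it makes transparent the comparison with the stronger assumption of Villa et al.\ (which corresponds to forcing $f=0$), a point the paper exploits in the remark following the lemma.
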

\begin{proof}
We first recall some properties of $\varepsilon$-subdifferentials (see, e.g.,~\cite[Section 4.3]{bertsekas2003convex} for more details).
By definition, $x$ is an   $\varepsilon$-minimizer  of a convex function $a$ if and only if
$
 a(x) \leqslant \inf_{y \in \rb^n} a(y)~+~\varepsilon
$.
This is equivalent to $0$ belonging to the $\varepsilon$-subdifferential $\partial_\varepsilon a(x)$. If $a = a_1+a_2$, where both $a_1$ and $a_2$ are convex, we have
$
\partial_\varepsilon a(x) \subset \partial_\varepsilon a_1(x) + \partial_\varepsilon a_2(x)
$.

If $a_1(x) = \frac{L}{2} \| x - z \|^2$, then 
\begin{align*}
\partial_\varepsilon a_1(x) &= \left\{ y \in \rb^n \left| \ \frac{L}{2}\left\| x - z - \frac{y}{L} \right\|^2 \leqslant \varepsilon \right.\right\}\\
& = \left\{ y \in \rb^n, y = Lx - Lz + Lf \left|  \ \frac{L}{2}\| f \|^2 \leqslant \varepsilon \right.\right\}\; .
\end{align*}

If $a_2 = h$ and $x$ is an $\varepsilon$-minimizer of $a_1 + a_2$, then $0$ belongs to $\partial_\varepsilon a(x)$. Since $\partial_\varepsilon a(x) \subset \partial_\varepsilon a_1(x) + \partial_\varepsilon a_2(x)$, we have that $0$ is the sum of an element of $\partial_\varepsilon a_1(x)$ and of an element of $\partial_\varepsilon h(x)$. Hence, there is an $f$ such that
\begin{equation}
 Lz - Lx - Lf \in \partial_\varepsilon h (x)   \mbox{ with } \| f\| \leqslant \sqrt{\frac{2\varepsilon}{L}}.
\label{eq:f_definition}
\end{equation}
Using $z = y_{i-1}- (1/L) (g'(y_{i-1}) + e_i)$ and $x = x_i$, this implies that there exists $f_i$ such that $\| f_i \| \leqslant \sqrt{\frac{2\varepsilon_i}{L}}$ and
$$
L\left(y_{i-1} - x_{i} -    \frac{1}{L} ( g'(y_{i-1}) + e_{i}) - f_i\right)  \in \partial_{  \varepsilon_i} h  (x_i) \; .
$$
\end{proof}

In~\cite[Definition 2.1]{villa2011accelerated}, Eq.~(\ref{eq:f_definition}) is replaced by $Lz - Lx \in \partial_\varepsilon h (x)$. Hence, their definition of an approximate solution is equivalent to ours but using $f = 0$. If we replace $\|f_i\|$ by 0 in the proof of Proposition~\ref{prop:accelerated_convex}, we get the $O(1/k^2)$ convergence rate using any sequence of errors $\{\varepsilon_k\}$ necessary to achieve the $O(1/k^2)$ rate in~\cite[Th.~4.4]{villa2011accelerated}. We can also make the same assumption on $f$ in Proposition~\ref{prop:convex} to achieve the optimal convergence rate with a decay of $\sqrt{\varepsilon_k}$ in $O\left(1/k^{0.5+\delta}\right)$ instead of $O\left(1/k^{1+\delta}\right)$.

\subsection{Basic proximal-gradient method with errors in the convex case}

We now give the proof of Proposition of~\ref{prop:convex}.
\begin{proof}
Since $x_k$ is an $\ve_k$-optimal solution to the proximal problem~\eqref{eq:prox} in the sense of~\eqref{eq:3}, we can use Lemma~\ref{lemma:subdifferential} to yield that there exists $f_k$ such that $\| f_k \| \leqslant \sqrt{\frac{2\varepsilon_k}{L}}$ and
$$
L\left(x_{k-1} - x_{k} -    \frac{1}{L} ( g'(x_{k-1}) + e_{k}) - f_k\right)  \in \partial_{  \varepsilon_i} h  (x_k) \; .
$$

We now bound $g(x_i)$ and $h(x_i)$ as follows:
\begin{align*}
g(x_{i})    & \leqslant g(x_{i-1}) + \left\langle g'(x_{i-1}),x_{i}-x_{i-1}\right\rangle + \frac{L}{2}\|x_{i}-x_{i-1}\|^2\\
            & \hspace*{.3cm}\mbox{ using L-Lipschitz gradient and the convexity of $g$,} \\
            & \leqslant g(x^*) + \left\langle g'(x_{i-1}), x_{i-1}-x^*\right\rangle  + \left\langle g'(x_{i-1}),x_{i}-x_{i-1}\right\rangle + \frac{L}{2}\|x_{i}-x_{i-1}\|^2\\
            & \hspace*{.3cm}\mbox{ using convexity of $g$.}
\end{align*}

Using the $\varepsilon_i$-subgradient, we have
\[
h(x_i) \leqslant h(x^*) - \left\langle  g'(x_{i-1})  + e_i + L ( x_{i} + f_i - x_{i-1} ) , x_{i} - x^*\right\rangle + \varepsilon_i \; .
\]

Adding the two together, we get:
\begin{align*}
f(x_{i}) & = g(x_{i}) + h(x_{i})\\
 & =  f(x^*)  + \frac{L}{2}\|x_{i}-x_{i-1}\|^2   - L \left\langle    x_{i} - x_{i-1} , x_{i} - x^*\right\rangle + \varepsilon_i
 -    \left\langle  e_i+ L f_i,  x_{i} - x^*\right\rangle   \\
 & =  f(x^*)  + \frac{L}{2}\left\langle x_{i}-x_{i-1}, x_{i}-x_{i-1} - 2   x_{i} + 2x^*\right\rangle + \varepsilon_i
 -    \left\langle  e_i + L f_i,  x_{i} - x^*\right\rangle   \\
  & =  f(x^*)  + \frac{L}{2}\left\langle x_{i}-x^\ast - (x_{i-1} - x^\ast), (x^\ast - x_i) + (x^\ast - x_{i-1})\right\rangle + \varepsilon_i
 -    \left\langle  e_i + L f_i,  x_{i} - x^*\right\rangle   \\
  & =  f(x^*)  - \frac{L}{2}\| x_{i}-x^\ast\|^2 +  \frac{L}{2}\| x_{i-1}-x^\ast\|^2 + \varepsilon_i
 -    \left\langle  e_i + L f_i,  x_{i} - x^*\right\rangle   \\
f(x_i) & \leqslant  f(x^*)  - \frac{L}{2}\| x_{i}-x^\ast\|^2 +  \frac{L}{2}\| x_{i-1}-x^\ast\|^2 + \varepsilon_i
 + (\|  e_i \|  + \sqrt{2L\varepsilon_i} ) \cdot \|  x_{i} - x^*\|\\
 & \hspace*{.3cm}\mbox{using Cauchy-Schwartz and $\|f_i\| \leqslant \sqrt{\frac{2\varepsilon_i}{L}}$.}
\end{align*}
Moving $f(x^\ast)$ on the other side and summing from $i=1$ to $k$, we get:
$$
\sum_{i=1}^k [f(x_{i}) - f(x^\ast)] \leqslant -\frac{L}{2}\| x_{k}-x^\ast\|^2 + \frac{L}{2}\| x_{0}-x^\ast\|^2 + \sum_{i=1}^{k} \varepsilon_i + \sum_{i=1}^{k}\left[(\|  e_i \| + \sqrt{2L\varepsilon_i}) \cdot \|  x_{i} - x^*\| \right] \; ,
$$
i.e.
\begin{align}
\sum_{i=1}^k [f(x_{i}) - f(x^\ast)] + \frac{L}{2}\| x_{k}-x^\ast\|^2 &\leqslant \frac{L}{2}\| x_{0}-x^\ast\|^2 + \sum_{i=1}^{k} \varepsilon_i + \sum_{i=1}^{k}\left[(\|  e_i \| + \sqrt{2L\varepsilon_i}) \cdot \|  x_{i} - x^*\| \right] \; .
\label{eq:bound_convex_normal_1}
\end{align}
Eq.~(\ref{eq:bound_convex_normal_1}) has two purposes. The first one is to bound the values of $\|  x_{i} - x^*\|$ using the recursive definition. Once we have a bound on these quantities, we shall be able to bound the function values using only $\| x_{0}-x^\ast\|$ and the values of the errors.

\subsubsection{Bounding $\|  x_{i} - x^*\|$}

We now need to bound the quantities $\|  x_{i} - x^*\|$ in terms of $\| x_{0}-x^\ast\|$, $e_i$ and $\varepsilon_i$. Dropping the first term in Eq.~(\ref{eq:bound_convex_normal_1}), which is positive due to the optimality of $f(x^*)$, we have:
\[
    \| x_{k}-x^\ast\|^2 \leqslant \| x_{0}-x^\ast\|^2 + \frac{2}{L}\sum_{i=1}^{k} \varepsilon_i + 2\sum_{i=1}^{k}\left[\left(\frac{\|  e_i \|}{L} + \sqrt{\frac{2\varepsilon_i}{L}}\right) \cdot \|  x_{i} - x^*\| \right]
\]

We now use Lemma~\ref{lemma:bound} (using $S_k = \| x_{0}-x^\ast\|^2 + \frac{2}{L}\sum_{i=1}^{k} \varepsilon_i$ and $\lambda_i = 2\left[\frac{\|  e_i \|}{L} + \sqrt{\frac{2\varepsilon_i}{L}}\right]$) to get
$$
 \| x_{k}-x^\ast\|  \leqslant
 \sum_{i=1}^k \left(\frac{\|  e_i \|}{L} +   \sqrt{\frac{2\varepsilon_i}{L}} \right) + \left(
  \| x_{0}-x^\ast\|^2 + \frac{2}{L} \sum_{i=1}^{k}
\varepsilon_i  + \left[  \sum_{i=1}^k \left(\frac{\|  e_i \|}{L} +   \sqrt{\frac{2\varepsilon_i}{L}} \right)   \right]^2
\right)^{1/2} \; .
$$

Denoting $A_k = \sum_{i=1}^k \left(\frac{\|  e_i \|}{L} +   \sqrt{\frac{2\varepsilon_i}{L}} \right)$
and $B_k = \sum_{i=1}^k \frac{\varepsilon_i}{L}$, we get
$$
 \| x_{k}-x^\ast\|  \leqslant
A_k+ \left(
  \| x_{0}-x^\ast\|^2 + 2 B_k  + A_k^2
\right)^{1/2} \; .
$$
Since $A_i$ and $B_i$ are increasing sequences ($\|e_i\|$ and $\varepsilon_i$ being positive), we have for $i \leqslant k$
\begin{align*}
\| x_i-x^\ast\|  &\leqslant A_i+ \left( \| x_{0}-x^\ast\|^2 + 2 B_i  + A_i^2 \right)^{1/2}\\
                &\leqslant A_k+ \left( \| x_{0}-x^\ast\|^2 + 2 B_k  + A_k^2 \right)^{1/2}\\
                &\leqslant A_k+ \| x_{0}-x^\ast\| + \sqrt{2B_k}   + A_k\\
                &\hspace*{.3cm}\mbox{using the positivity of $\| x_{0}-x^\ast\|^2$, $B_k$ and $A_k^2$.}
\end{align*}

\subsubsection{Bounding the function values}
Now that we have a common bound for all $\| x_i-x^\ast\|$ with $i \leqslant k$, we can upper-bound the right-hand side of Eq.~(\ref{eq:bound_convex_normal_1}) using only terms depending on $\| x_{0}-x^\ast\|$, $e_i$ and $\varepsilon_i$.

Indeed, discarding $\frac{L}{2}\| x_{k}-x^\ast\|^2$ which is positive, Eq.~(\ref{eq:bound_convex_normal_1}) becomes
\BEAS
\sum_{i=1}^k [f(x_{i}) - f(x^\ast)] & \leqslant & \frac{L}{2} \| x_0 -x^\ast\|^2 + L B_k + L A_k (A_k+ \| x_{0}-x^\ast\| + \sqrt{2B_k}   + A_k)  \\
& \leqslant & \frac{L}{2} \| x_0 -x^\ast\|^2 + L B_k + 2L A_k^2 + LA_k     \| x_{0}-x^\ast\| + LA_k \sqrt{2B_k}  \\
& \leqslant & \frac{L}{2} \left(\| x_0 -x^\ast\| + 2A_k + \sqrt{2B_k}  \right) ^2 .
\EEAS
Since $f$ is convex, we get
\begin{align*}
f\left(\frac{1}{k}\sum_{i=1}^kx_{i}\right) - f(x^\ast) &\leqslant \frac1k\sum_{i=1}^k [f(x_{i}) - f(x^\ast)]\\
            &\leqslant \frac{L}{2k} \left(\| x_0 -x^\ast\| + 2A_k + \sqrt{2B_k}  \right) ^2 .
\end{align*}
\end{proof}

\subsection{Accelerated proximal-gradient method with errors in the convex case}

We now give the proof of Proposition~\ref{prop:accelerated_convex}.
\begin{proof}
Defining
\begin{align*}
\theta_k & = 2/(k+1)\\
v_k & = x_{k-1} + \frac{1}{\theta_k}(x_k - x_{k-1})\; ,
\end{align*}
we can rewrite the update for $y_k$ as
\[
y_k = (1-\theta_{k+1})x_k + \theta_{k+1}v_k \; ,
\]
because
\begin{align*}
(1-\theta_{k+1})x_k + \theta_{k+1}v_k & = (1-\frac{2}{k+2})x_k + \frac{2}{k+2}[x_{k-1} + \frac{k+1}{2}(x_k-x_{k-1})]\\
& = x_k - \frac{2}{k+2}(x_k - x_{k-1}) + \frac{k+1}{k+2}(x_k - x_{k-1})\\
& = x_k - \frac{k-1}{k+2}(x_k - x_{k-1}) = y_k.
\end{align*}
Because $g'$ is Lipschitz and $g$ is convex, we get for any $z$ that
\begin{align*}
g(x_k) & \leqslant g(y_{k-1}) + \left\langle g'(y_{k-1}),x_k-y_{k-1}\right\rangle + \frac{L}{2}\|x_k-y_{k-1}\|^2\\
& \leqslant g(z) + \left\langle g'(y_{k-1}),y_{k-1}-z\right\rangle + \left\langle g'(y_{k-1}),x_k-y_{k-1}\right\rangle + \frac{L}{2}\|x_k-y_{k-1}\|^2 \; .
\end{align*}
Because $- [ g'(y_{k-1})  + e_k + L ( x_{k} + f_k -  y_{k-1} ) ] \in \partial_{\varepsilon_k} h(x_{k})
$, we have for any $z$ that
\begin{align*}
h(x_k) & \leqslant \varepsilon_{k} + h(z) + \left\langle L(y_{k-1} - x_k) - g'(y_{k-1}) -e_k + L f_k,x_k-z\right\rangle\\
& = \varepsilon_{k} +  h(z) + \left\langle g'(y_{k-1}),z - x_k\right\rangle + L\left\langle x_k - y_{k-1},z - x_k \right\rangle
+ \left\langle e_k + L f_k, z - x_k \right\rangle
\end{align*}
Adding these bounds together gives:
\begin{align*}
g(x_k) + h(x_k) = f(x_k) \leqslant \varepsilon_{k} + f(z) + L\left\langle x_k - y_{k-1},z - x_k \right\rangle + \frac{L}{2}\|x_k-y_{k-1}\|^2 + \left\langle e_k + L f_k, z - x_k \right\rangle
\end{align*}
Choosing $z = \theta_kx^* + (1-\theta_k)x_{k-1}$ gives
\begin{align}
f(x_k) & \leqslant \varepsilon_{k} +  f(\theta_kx^* + (1-\theta)x_{k-1}) + L\left\langle x_k -y_{k-1},\theta_kx^* + (1-\theta_k)x_{k-1} - x_k\right\rangle + \frac{L}{2}\|x_k-y_{k-1}\|^2  \nonumber\\
 & \hspace*{1cm} +  \left\langle e_k + Lf_k, \theta_kx^* + (1-\theta_k)x_{k-1} - x_k \right\rangle\nonumber\\
& \leqslant  \varepsilon_{k} +  \theta_kf(x^*) + (1-\theta_k)f(x_{k-1}) + L\left\langle x_k -y_{k-1},\theta_kx^* + (1-\theta_k)x_{k-1} - x_k\right\rangle + \frac{L}{2}\|x_k-y_{k-1}\|^2\nonumber
\\
 & \hspace*{1cm} +  \left\langle e_k + Lf_k, \theta_kx^* + (1-\theta_k)x_{k-1} - x_k\right\rangle\label{eq:bound_convex_accelerated_1}\\
    &\hspace*{1cm}\mbox{using the convexity of $f$ and the fact that $\theta_k$ is in $[0,1]$.}\nonumber
\end{align}

Since
\[
\theta_kx^* + (1-\theta_k)x_{k-1} - x_k  = \theta_k (x^* - v_k)
\]
and
\begin{align*}
x_k - y_{k-1} & = \theta_k v_k  + (1 - \theta_k)x_{k-1} - y_{k-1}\\
			& = \theta_k v_k - \theta_k v_{k-1} \; ,
\end{align*}
we have
\begin{align}
L\left\langle x_k -y_{k-1},\theta_kx^* + (1-\theta_k)x_{k-1} - x_k\right\rangle & = L \theta_k^2 \left\langle v_k - v_{k-1}, x^* - v_k \right\rangle\nonumber\\
& =  - L \theta_k^2 \|v_k - x^*\|^2 + L \theta_k^2 \left\langle v_k - x^*, v_{k-1} - x^*\right\rangle\label{eq:bound_convex_accelerated_2}\\
\frac{L}{2}\|x_k-y_{k-1}\|^2 & = \frac{L\theta_k^2}{2} \|v_k - v_{k-1}\|^2\nonumber\\
						& = \frac{L\theta_k^2}{2} \left(\|v_k - x^*\|^2 + \|v_{k-1}-x^*\|^2 - 2 \left\langle v_k - x^*, v_{k-1} - x^*\right\rangle \right)\label{eq:bound_convex_accelerated_3}\\
\left\langle e_k + Lf_k, \theta_kx^* + (1-\theta_k)x_{k-1} - x_k \right\rangle & = \theta_k\left\langle e_k + Lf_k, x^* - v_k \right\rangle \; .\nonumber
\end{align}
Summing Eq.~(\ref{eq:bound_convex_accelerated_2}) and~(\ref{eq:bound_convex_accelerated_3}), we get
\[
L\left\langle x_k -y_{k-1},\theta_kx^* + (1-\theta_k)x_{k-1} - x_k\right\rangle + \frac{L}{2}\|x_k-y_{k-1}\|^2  =\frac{L \theta_k^2}{2} \left( \|v_{k-1} - x^*\|^2 -  \|v_k - x^*\|^2\right)
\]

Moving all function values in Eq.~(\ref{eq:bound_convex_accelerated_1}) to the left-side, we then get
\begin{align*}
f(x_k) - \theta_kf(x^*) -  (1-\theta_k)f(x_{k-1}) 
 &\leqslant L \theta_k^2 \left( \|v_{k-1} - x^*\|^2 -  \|v_k - x^*\|^2\right) +  \varepsilon_{k} +  \theta_k \left\langle e_k + L f_k, x^* - v_k \right\rangle \; .
\end{align*}
Reordering the terms and dividing by $\theta_k^2$ gives
\[
\frac{1}{\theta_k^2}(f(x_k)-f(x^*)) + \frac{L}{2}\|v_k - x^*\|^2 \leqslant
 \frac{1-\theta_k}{\theta_k^2}(f(x_{k-1})-f(x^*)) + \frac{L}{2}\|v_{k-1}-x^*\|^2
+ \frac{\varepsilon_{k} }{\theta_k^2} + \frac{1}{\theta_k} \left\langle e_k + L f_k, x^* - v_k \right\rangle.
\]
Now we use that for all $k$ greater than or equal to 1,
\[
\frac{1-\theta_k}{\theta_k^2} \leqslant \frac{1}{\theta_{k-1}^2}
\]
to apply this recursively and obtain
\begin{align*}
\frac{1}{\theta_k^2}(f(x_k)-f(x^*)) + \frac{L}{2}\|v_k - x^*\|^2 &\leqslant \frac{1-\theta_0}{\theta_0^2}(f(x_0)-f(x^*)) + \frac{L}{2}\|v_0-x^*\|^2
+ \sum_{i=1}^{k} \frac{\varepsilon_i }{\theta_i^2}\\
&\hspace*{1cm}+ \sum_{i=1}^{k}  \frac{1}{\theta_i} (\| e_i \| + \sqrt{2L\varepsilon_i})  \cdot\| x^* - v_i \|
\end{align*}
using $\|f_i\| \leqslant \sqrt{\frac{2\varepsilon_i}{L}}$. Since $v_0 = x_0$ and $\theta_0=2$, we get
\beq
f(x_k) - f(x^*)  + \frac{L \theta_k^2}{2}\|v_k - x^*\|^2\leqslant \frac{L\theta_k^2}{2}\|x_0-x^*\|^2
+  \theta_k^2 \sum_{i=1}^{k} \frac{\varepsilon_i }{\theta_i^2}
+ \theta_k^2 \sum_{i=1}^{k}  \frac{1}{\theta_i} \left(\| e_i \| + \sqrt{2L\varepsilon_i}\right) \cdot\| x^* - v_i \| \; .\label{eq:bound_convex_accelerated_4}
\eeq

As in the previous proof, we will now use Eq.~(\ref{eq:bound_convex_accelerated_4}) to first bound the values of $\| v_i - x^*\|$ then, using these bounds, bound the function values.

\subsubsection{Bounding $\| v_i - x^*\|$}

We now need to bound the quantities $\|v_i - x^*\|$ in terms of $\|x_0-x^*\|$, $e_i$ and $\ve_i$.
\[
\|v_k - x^*\|^2\leqslant \|x_0-x^*\|^2 + \frac{2}{L} \sum_{i=1}^{k} \frac{\varepsilon_i }{\theta_i^2} + \sum_{i=1}^{k}  \frac{2}{\theta_i} \left(\frac{\| e_i \|}{L} + \sqrt{\frac{2\varepsilon_i}{L}}\right) \cdot\| x^* - v_i \| \; .
\]
Since $\theta_i = 2/(i+1)$, $\frac{1}{\theta_i} = \frac{i+1}{2} \leqslant i$ since $i \geqslant 1$. Thus, we have
\[
\|v_k - x^*\|^2\leqslant \|x_0-x^*\|^2 + \frac{2}{L} \sum_{i=1}^{k} i^2\varepsilon_i + \sum_{i=1}^{k}  2i\left(\frac{\| e_i \|}{L} + \sqrt{\frac{2\varepsilon_i}{L}}\right) \cdot\| x^* - v_i \| \; .
\]

From Lemma~\ref{lemma:bound} (using $S_k = \| x_{0}-x^\ast\|^2 + \frac{2}{L}\sum_{i=1}^{k} i^2\varepsilon_i$ and $\lambda_i = 2i \left[\frac{\| e_i \|}{L} + \sqrt{\frac{2\varepsilon_i}{L}}\right]$), and denoting $\widetilde{A}_k = \sum_{i=1}^k i\left(\frac{\|  e_i \|}{L} +   \sqrt{\frac{2\varepsilon_i}{L}} \right) $
and $\widetilde{B}_k = \sum_{i=1}^k \frac{i^2\varepsilon_i }{L}$, we get
\[
 \| v_{k}-x^\ast\|  \leqslant \widetilde{A}_k+ \left( \| x_{0}-x^\ast\|^2 + 2 \widetilde{B}_k  + \widetilde{A}_k^2
\right)^{1/2}\; .
\]
Since $\widetilde{A}_i$ and $\widetilde{B}_i$ are increasing sequences, we also have for $i \leqslant k$:
\begin{align*}
\| v_{i}-x^\ast\|  	&\leqslant \widetilde{A}_i+ \left( \| x_{0}-x^\ast\|^2 + 2 \widetilde{B}_i  + \widetilde{A}_i^2
\right)^{1/2}\\
				&\leqslant \| x_{0}-x^\ast\| + 2\widetilde{A}_i+ \widetilde{B}_i^{1/2}\sqrt{2}\\
				&\leqslant \| x_{0}-x^\ast\| + 2\widetilde{A}_k+ \widetilde{B}_k^{1/2}\sqrt{2} \; .
\end{align*}

\subsubsection{Bounding the function values}

Dropping $\frac{L \theta_k^2}{2}\|v_k - x^*\|^2$ in Eq.~(\ref{eq:bound_convex_accelerated_4}) (since it is positive), we thus have
\begin{align*}
f(x_k) - f(x^*) & \leqslant \frac{L\theta_k^2}{2}\left(\|x_0-x^*\|^2 + 2\widetilde{B}_k + 2\widetilde{A}_k\left[\| x_{0}-x^\ast\| + 2\widetilde{A}_k+ \sqrt{2\widetilde{B}_k}\right]\right)\\
& \leqslant \frac{L\theta_k^2}{2}\left(\|x_0-x^*\|^2 + 2\widetilde{B}_k + 2\widetilde{A}_k\| x_{0}-x^\ast\| + 4\widetilde{A}_k^2+ 2\widetilde{A}_k\sqrt{2\widetilde{B}_k}\right)\\
			& \leqslant \frac{L\theta_k^2}{2}\left(\|x_0-x^*\| + 2\widetilde{A}_k +\sqrt{2\widetilde{B}_k} \right)^2
\end{align*}

 and
 $$
 \frac{1}{\theta_k^2}(f(x_k)-f(x^*)) \leqslant
 \frac{L}{2}
\left(
\| x_0 -x^\ast\| + 2\widetilde{A}_k + \sqrt{2 \widetilde{B}_k}  \right) ^2 .
$$

\end{proof}

\subsection{Basic proximal-gradient method with errors in the strongly convex case}

Below is the proof of Proposition~\ref{prop:strongly_convex}

\begin{proof}
Again, there exists $f_i$ such that $\| f_i \| \leqslant \sqrt{\frac{2\varepsilon_i}{L}}$ and
$$
L\left(x_{i-1} - x_{i} -    \frac{1}{L} ( g'(x_{i-1}) + e_{i}) - f_i\right)  \in \partial_{  \varepsilon_i} h  (x_i) \; .
$$

Since $x^\ast$ is optimal, we have that $x^* = \prox_L\left(x^* - \frac{1}{L}g'(x^*)\right) \; .$

We first separate $f_k$, the error in the proximal, from the rest:
\begin{align*}
\|x_k - x^*\|^2 & = \left\|\prox_L\left(x_{k-1} - \frac{1}{L}g'(x_{k-1}) - \frac{1}{L}e_k\right)
+ f_k - \prox_L\left(x^* -
\frac{1}{L}g'(x^*)\right)\right\|^2\\
    & = \left\|\prox_L\left(x_{k-1} - \frac{1}{L}g'(x_{k-1}) - \frac{1}{L}e_k\right) - \prox_L\left(x^* -
\frac{1}{L}g'(x^*)\right)\right\|^2 + \|f_k\|^2\\
    &\hspace*{.3cm} + 2\left\langle f_k, \prox_L\left(x_{k-1} - \frac{1}{L}g'(x_{k-1}) -
\frac{1}{L}e_k\right) - \prox_L\left(x^* -
\frac{1}{L}g'(x^*)\right) \right\rangle\\
    & \leqslant \left\|\prox_L\left(x_{k-1} - \frac{1}{L}g'(x_{k-1}) - \frac{1}{L}e_k\right) - \prox_L\left(x^* -
\frac{1}{L}g'(x^*)\right)\right\|^2 + \frac{2\ve_k}{L}\\
    &\hspace*{.3cm} + 2\sqrt{\frac{2\ve_k}{L}}\left\|\prox_L\left(x_{k-1} - \frac{1}{L}g'(x_{k-1}) -
\frac{1}{L}e_k\right) - \prox_L\left(x^* -
\frac{1}{L}g'(x^*)\right)\right\|\\
    &\hspace*{.3cm}\mbox{using Cauchy-Schwartz and $\|f_k\| \leqslant \sqrt{\frac{2\ve_k}{L}}$}\\
    & \leqslant \left\|x_{k-1} - \frac{1}{L}g'(x_{k-1}) - \frac{1}{L}e_k - x^* +
\frac{1}{L}g'(x^*)\right\|^2 +  \frac{2\ve_k}{L}\\
    &\hspace*{.3cm} + 2\sqrt{\frac{2\ve_k}{L}}\left\|x_{k-1} - \frac{1}{L}g'(x_{k-1}) -
\frac{1}{L}e_k - x^* +\frac{1}{L} g'(x^*)\right\|\\
    &\hspace*{.3cm}\mbox{using the non-expansiveness of the
proximal}\\
    & \leqslant \left\|x_{k-1} - \frac{1}{L}g'(x_{k-1}) - \frac{1}{L}e_k - x^* +
\frac{1}{L}g'(x^*)\right\|^2 +  \frac{2\ve_k}{L}\\
    &\hspace*{.3cm} + 2\sqrt{\frac{2\ve_k}{L}}\left(\left\|x_{k-1} - x^* - \frac{1}{L}(g'(x_{k-1})-
g'(x^*))\right\| + \frac{\|e_k\|}{L}\right)\\
	&\hspace*{.3cm}\mbox{using the triangular inequality.}
\end{align*}
We continue this computation, but now separating $e_k$, the error in the gradient, from the rest:
\begin{align*}
 \|x_k - x^*\|^2   & = \|x_{k-1} - x^* - \frac{1}{L}(g'(x_{k-1})-
g'(x^*))\|^2 + \frac{\|e_k\|^2}{L^2}  - \frac{2}{L} \left\langle
e_k, x_{k-1} - x^* - \frac{1}{L}(g'(x_{k-1}) - \frac{1}{L}g'(x^*)) \right\rangle\\
    &\hspace*{.3cm}  +  \frac{2\ve_k}{L} + 2 \sqrt{\frac{2\ve_k}{L}} \left(\|x_{k-1} - x^* - \frac{1}{L}(g'(x_{k-1})
- g'(x^*))\| + \frac{\|e_k\|}{L}\right)\\
    & \leqslant \|x_{k-1} - x^* - \frac{1}{L}(g'(x_{k-1})-
g'(x^*))\|^2 + \frac{\|e_k\|^2}{L^2}  + \frac{2}{L} \|e_k\|
\|x_{k-1} - x^* - \frac{1}{L}(g'(x_{k-1}) - g'(x^*))\|\\
    &\hspace*{.3cm}  +  \frac{2\ve_k}{L} + 2 \sqrt{\frac{2\ve_k}{L}} \left(\|x_{k-1} - x^* - \frac{1}{L}(g'(x_{k-1})
-g'(x^*))\| + \frac{\|e_k\|}{L}\right)\\
    &\hspace*{.3cm}\mbox{using Cauchy-Schwartz}\\
    &\leqslant \|x_{k-1} - x^* - \frac{1}{L}(g'(x_{k-1})-
g'(x^*))\|^2 + \frac{\|e_k\|^2}{L^2}   +  \frac{2\ve_k}{L} +
\frac{2}{L}\sqrt{\frac{2\ve_k}{L}}\|e_k\| \\
    & + \left(\frac{2\|e_k\|}{L} + 2 \sqrt{\frac{2\ve_k}{L}}\right)\left\|x_{k-1} - x^* -
\frac{1}{L}(g'(x_{k-1}) - g'(x^*))\right\| \; .
\end{align*}
We now need to bound $\left\|x_{k-1} - x^* -
\frac{1}{L}(g'(x_{k-1}) - g'(x^*))\right\|$ to get the final result.
We have:
\begin{align*}
\|x_{k-1} - x^* - \frac{1}{L}(g'(x_{k-1}) -g'(x^*))\|^2 &=
\|x_{k-1} - x^*\|^2 + \frac{1}{L^2}\|g'(x_{k-1}) - g'(x^*)\|^2\\
&\hspace*{.3cm}- \frac{2}{L}\left\langle g'(x_{k-1}) - g'(x^*),x_{k-1}-x^*\right\rangle  \\
& \leqslant \|x_{k-1} - x^*\|^2 + \frac{1}{L^2}\|g'(x_{k-1}) -
g'(x^*)\|^2\\
&\hspace*{.3cm}- \frac{2}{L}\left(
\frac{1}{L+\mu}\|g'(x_{k-1})-g'(x^*)\|^2 +
\frac{L\mu}{L+\mu}\|x_{k-1}-x^*\|^2\right)\\
&\hspace*{.3cm}\mbox{using theorem 2.1.12 of~\cite{nesterov2004introductory}}\\
& = (1 - \frac{2\mu}{L+\mu})\|x_{k-1}-x^*\|^2 +
\frac{1}{L}\left(\frac{1}{L} - \frac{2}{L+\mu}\right)\|g'(x_{k-1}) -
g'(x^*)\|^2\\
&\leqslant (1 - \frac{2\mu}{L+\mu})\|x_{k-1}-x^*\|^2 +
\frac{\mu^2}{L}(\frac{1}{L}-\frac{2}{L+\mu}) \|x_{k-1}-x^*\|^2\\
&\hspace*{.3cm}\mbox{using the negativity of $\frac{1}{L} - \frac{2}{L + \mu}$ and the strong convexity of $g$}\\
& = \left(1 - \frac{\mu}{L}\right)^2\|x_{k-1}-x^*\|^2.
\end{align*}

Thus
\begin{eqnarray*}
 \|x_k - x^*\|^2   & \leqslant& \left(1 - \frac{\mu}{L}\right)^2 \|x_{k-1} -
x^*\|^2 + \frac{\|e_k\|^2}{L^2} +  \frac{2\ve_k}{L} + \frac{2}{L}\sqrt{\frac{2\ve_k}{L}}\|e_k\|\\
  &&\hspace*{.3cm}+ \left(\frac{2\|e_k\|}{L} +
2\sqrt{\frac{2\ve_k}{L}}\right)\left(1 - \frac{\mu}{L}\right)\|x_{k-1} - x^*\|\\
    & = &\left[\left(1 - \frac{\mu}{L}\right)\|x_{k-1} - x^*\| + \frac{\|e_k\|}{L} +
\sqrt{\frac{2\ve_k}{L}}\right]^2 \; .
\end{eqnarray*}
Taking the square root of both sides and applying the bound recursively yields
\[
	\|x_k - x^*\| \leqslant \left(1 - \frac{\mu}{L}\right)^k\|x_0 - x^*\| +
\sum_{i=1}^k \left(1 - \frac{\mu}{L}\right)^{k-i}
\left(\frac{\|e_i\|}{L} + \sqrt{\frac{2\ve_i}{L}}\right) \; .
\]

\end{proof}

\subsection{Accelerated proximal-gradient method with errors in the strongly convex case}
We now give the proof of Proposition~\ref{prop:accelerated_strongly_convex}.

\begin{proof}
We have (following~\cite{nesterov2004introductory})
\[
x_k = y_{k-1} - \frac{1}{L} g'(y_{k-1}) \; .
\]
We define
\begin{align*}
\alpha_k^2 &= (1 - \alpha_k)\alpha_{k-1}^2 + \frac{\mu}{L}\alpha_k\\
v_k & = x_{k-1} + \frac{1}{\alpha_{k-1}}(x_k - x_{k-1})\\
\theta_k & = \frac{\alpha_k - \frac{\mu}{L}}{1 - \frac{\mu}{L}}\\
y_k & = x_k + \theta_k (v_k - x_k) \; .
\end{align*}
If we choose $\alpha_0 = \sqrt{\gamma}$, then this yields
\begin{align*}
	y_k &= x_k + \frac{1-\sqrt{\gamma}}{1+\sqrt{\gamma}}(x_k - x_{k-1}) \; .
\end{align*}

We can bound $g(x_k)$ with
\begin{align*}
g(x_k) 	& \leqslant g(y_{k-1}) + \langle g'(y_{k-1}), x_k - y_{k-1} \rangle + \frac{L}{2} \|x_k - y_{k-1}\|^2\\
		& \hspace*{3cm} \mbox{using the convexity of $g$}\\
		& \leqslant g(z) + \langle g'(y_{k-1}), y_{k-1} - z\rangle + \langle g'(y_{k-1}), x_k - y_{k-1} \rangle + \frac{L}{2} \|x_k - y_{k-1}\|^2 - \frac{\mu}{2} \|y_{k-1} - z\|^2\\
		& \hspace*{3cm} \mbox{using the $\mu$-strong convexity of $g$.}
\end{align*}

Using Lemma~\ref{lemma:subdifferential}, we have that $- [ g'(y_{k-1})  + e_k + L ( x_{k} + f_k -  y_{k-1} ) ] \in \partial_{\varepsilon_k} h(x_{k})$. Hence, we have for any $z$ that
\begin{align*}
h(x_k) & \leqslant \varepsilon_k + h(z) + \left\langle L(y_{k-1} - x_k) - g'(y_{k-1})-e_k-Lf_k,x_k-z\right\rangle\\
& = \varepsilon_k + h(z) + \left\langle g'(y_{k-1}),z - x_k\right\rangle + L\left\langle x_k - y_{k-1},z - x_k \right\rangle+ \left\langle e_k+Lf_k,z - x_k\right\rangle
\end{align*}

Adding these two bounds, we get for any $z$
\begin{align*}
f(x_k) &\leqslant \varepsilon_k + f(z) +  L\left\langle x_k - y_{k-1},z - x_k \right\rangle + \frac{L}{2} \|x_k - y_{k-1}\|^2 - \frac{\mu}{2} \|y_{k-1} - z\|^2+ \left\langle e_k+Lf_k,z - x_k\right\rangle \; .\\
\end{align*}

Using $z = \alpha_{k-1}x^* + (1 - \alpha_{k-1})x_{k-1}$, we get
\begin{align*}
f(x_k) 	&\leqslant  \varepsilon_k + f(\alpha_{k-1}x^* + (1 - \alpha_{k-1})x_{k-1}) + L\left\langle x_k - y_{k-1},\alpha_{k-1}x^* + (1 - \alpha_{k-1})x_{k-1} - x_k \right\rangle\\
		& \hspace*{.6cm} + \frac{L}{2} \|x_k - y_{k-1}\|^2 - \frac{\mu}{2} \|y_{k-1} - \alpha_{k-1}x^* - (1 - \alpha_{k-1})x_{k-1}\|^2\\
		&\hspace*{.6cm} + \left\langle e_k+Lf_k,\alpha_{k-1}x^* + (1 - \alpha_{k-1})x_{k-1} - x_k\right\rangle\\
		& \leqslant \varepsilon_k + \alpha_{k-1} f(x^*) + (1 - \alpha_{k-1}) f(x_{k-1}) + L\left\langle x_k - y_{k-1},\alpha_{k-1}x^* + (1 - \alpha_{k-1})x_{k-1} - x_k \right\rangle\\
		&  \hspace*{.6cm} + \frac{L}{2} \|x_k - y_{k-1}\|^2 - \frac{\mu}{2} \|y_{k-1} - \alpha_{k-1}x^* - (1 - \alpha_{k-1})x_{k-1}\|^2 - \frac{\mu}{2} \alpha_{k-1}(1 - \alpha_{k-1}) \|x^* - x_{k-1}\|^2\\
		&  \hspace*{.6cm}+ \left\langle e_k+Lf_k,\alpha_{k-1}x^*+ (1 - \alpha_{k-1})x_{k-1} - x_k\right\rangle\\
		&\mbox{using the $\mu$-strong convexity of $f$.}
\end{align*}
We can replace $x_k - y_{k-1}$ using
\begin{align*}
x_k - y_{k-1} 	& = x_k - x_{k-1} - \theta_{k-1} (v_{k-1} - x_{k-1})\\
			& = \theta_{k-1} x_{k-1} + \frac{\theta_{k-1}}{\alpha_{k-1}}(x_k - x_{k-1}) + \left(1 - \frac{\theta_{k-1}}{\alpha_{k-1}}\right)(x_k - x_{k-1}) - \theta_{k-1} v_{k-1}\\
			& = \theta_{k-1} (v_k - v_{k-1}) + \left(1 - \frac{\theta_{k-1}}{\alpha_{k-1}}\right)(x_k - x_{k-1}) \; .
\end{align*}
We also have
\begin{align*}
(1 - \alpha_{k-1})x_{k-1} - x_k	& = -\alpha_{k-1} v_k\\
\alpha_{k-1}x^* + (1 - \alpha_{k-1})x_{k-1} - x_k &= \alpha_{k-1} (x^* - v_k) \; ,
\end{align*}
and
\begin{align*}
y_{k-1} - \alpha_{k-1}x^* - (1 - \alpha_{k-1})x_{k-1}& = y_{k-1} - \alpha_{k-1} (x^* - v_k) - x_k\\
		& = \alpha_{k-1} (v_k - x^*) - \theta_{k-1} (v_k - v_{k-1}) - \left(1 - \frac{\theta_{k-1}}{\alpha_{k-1}}\right)(x_k - x_{k-1})
\end{align*}
Thus,
\begin{align*}
f(x_k) 	&  \leqslant \varepsilon_k + \alpha_{k-1} f(x^*) + (1 - \alpha_{k-1}) f(x_{k-1})\\
		& \hspace*{1cm}  - L \theta_{k-1}\alpha_{k-1} \langle v_k - v_{k-1}, v_k - x^*\rangle - L(\alpha_{k-1} - \theta_{k-1}) \langle x_k - x_{k-1}, v_k - x^*\rangle\\
		& \hspace*{1cm} + \frac{L\theta_{k-1}^2}{2}\|v_k - v_{k-1}\|^2 + \frac{L\left(1 - \frac{\theta_{k-1}}{\alpha_{k-1}}\right)^2}{2}\| x_k - x_{k-1}\|^2\\
		& \hspace*{1cm} + L \theta_{k-1} \left(1 - \frac{\theta_{k-1}}{\alpha_{k-1}}\right) \langle v_k - v_{k-1}, x_k - x_{k-1}\rangle\\
		& \hspace*{1cm} - \frac{\mu\alpha_{k-1}^2}{2}\| v_k - x^*\|^2 - \frac{\mu\theta_{k-1}^2}{2}\|v_k - v_{k-1}\|^2 - \frac{\mu\left(1 - \frac{\theta_{k-1}}{\alpha_{k-1}}\right)^2}{2} \|x_k - x_{k-1}\|^2\\
		& \hspace*{1cm} + \mu\alpha_{k-1}\theta_{k-1} \langle v_k - x^*, v_k - v_{k-1}\rangle + \mu(\alpha_{k-1} - \theta_{k-1}) \langle v_k - x^*, x_k - x_{k-1}\rangle\\
		& \hspace*{1cm}  - \mu\theta_{k-1}\left(1 - \frac{\theta_{k-1}}{\alpha_{k-1}}\right) \langle v_k - v_{k-1}, x_k - x_{k-1}\rangle - \frac{\mu}{2} \alpha_{k-1}(1 - \alpha_{k-1}) \|x^* - x_{k-1}\|^2\\
		 & \hspace*{1cm}+ \alpha_{k-1}\left\langle e_k+Lf_k,x^* - v_k\right\rangle \; .
\end{align*}
To avoid unnecessary clutter, we shall denote $E_k$ the additional term induced by the errors, i.e.
\[
E_k = \varepsilon_k +  \alpha_{k-1}\left\langle e_k+Lf_k,x^* - v_k\right\rangle \; .
\]
Before reordering the terms together, we shall also replace all instances of $v_k - v_{k-1}$ with $v_k - x^* - (v_{k-1} - x^*)$:
\begin{align*}
f(x_k) 	&  \leqslant E_k + \alpha_{k-1} f(x^*) + (1 - \alpha_{k-1}) f(x_{k-1})  - L \theta_{k-1}\alpha_{k-1} \|v_k - x^*\|^2 +L \theta_{k-1}\alpha_{k-1} \langle v_{k-1} - x^*, v_k - x^*\rangle\\
		& \hspace*{1cm}   - L(\alpha_{k-1} - \theta_{k-1}) \langle x_k - x_{k-1}, v_k - x^*\rangle\\
		& \hspace*{1cm} + \frac{(L - \mu)\theta_{k-1}^2}{2}  \|v_k - x^*\|^2 + \frac{(L - \mu)\theta_{k-1}^2}{2} \|v_{k-1} - x^*\|^2 - (L - \mu)\theta_{k-1}^2\langle v_k - x^*, v_{k-1} - x^* \rangle\\
		& \hspace*{1cm} + \frac{(L - \mu)\left(1 - \frac{\theta_{k-1}}{\alpha_{k-1}}\right)^2}{2}\| x_k - x_{k-1}\|^2\\
		& \hspace*{1cm} + L \theta_{k-1} \left(1 - \frac{\theta_{k-1}}{\alpha_{k-1}}\right) \langle v_k - x^*, x_k - x_{k-1}\rangle - L \theta_{k-1} \left(1 - \frac{\theta_{k-1}}{\alpha_{k-1}}\right) \langle v_{k-1} - x^*, x_k - x_{k-1}\rangle\\
		& \hspace*{1cm} - \frac{\mu\alpha_{k-1}^2}{2}\| v_k - x^*\|^2\\
		& \hspace*{1cm} + \mu\alpha_{k-1}\theta_{k-1} \|v_k - x^*\|^2 - \mu\alpha_{k-1}\theta_{k-1} \langle v_k - x^*, v_{k-1}- x^*\rangle\\
		&\hspace*{1cm} + \mu(\alpha_{k-1} - \theta_{k-1}) \langle v_k - x^*, x_k - x_{k-1}\rangle\\
		& \hspace*{1cm}  - \mu\theta_{k-1}\left(1 - \frac{\theta_{k-1}}{\alpha_{k-1}}\right) \langle v_k - x^*, x_k - x_{k-1}\rangle + \mu\theta_{k-1}\left(1 - \frac{\theta_{k-1}}{\alpha_{k-1}}\right) \langle v_{k-1} - x^*, x_k - x_{k-1}\rangle\\
		& \hspace*{1cm}  - \frac{\mu}{2} \alpha_{k-1}(1 - \alpha_{k-1}) \|x^* - x_{k-1}\|^2 \; .
\end{align*}
With a bit of well-needed cleaning, this becomes
\begin{align*}
f(x_k) 	&  \leqslant E_k + \alpha_{k-1} f(x^*) + (1 - \alpha_{k-1}) f(x_{k-1})\\
		& \hspace*{1cm}+ \left[\frac{L- \mu}{2}(\theta_{k-1} - \alpha_{k-1})^2 - \frac{L\alpha_{k-1}^2}{2}\right]\|v_k - x^*\|^2\\
		& \hspace*{1cm}+ \frac{(L - \mu)\theta_{k-1}^2}{2} \|v_{k-1} - x^*\|^2\\
		& \hspace*{1cm}+ (L-\mu) \theta_{k-1}(\alpha_{k-1}- \theta_{k-1})\langle v_{k-1} - x^*, v_k - x^*\rangle\\
		& \hspace*{1cm}+ (L-\mu) (\theta_{k-1} - \alpha_{k-1}) \left(1 - \frac{\theta_{k-1}}{\alpha_{k-1}}\right)\langle x_k - x_{k-1}, v_k - x^*\rangle\\
		& \hspace*{1cm}- (L - \mu)\theta_{k-1}\left(1 - \frac{\theta_{k-1}}{\alpha_{k-1}}\right)\langle v_{k-1} - x^*, x_k - x_{k-1}\rangle\\
		& \hspace*{1cm}+ \frac{(L - \mu)\left(1 - \frac{\theta_{k-1}}{\alpha_{k-1}}\right)^2}{2}\| x_k - x_{k-1}\|^2\\
		& \hspace*{1cm} - \frac{\mu}{2} \alpha_{k-1}(1 - \alpha_{k-1}) \|x^* - x_{k-1}\|^2 \; .
\end{align*}
We can rewrite $x_k - x_{k-1}$ using
\begin{align*}
x_k - x_{k-1} 	& = \alpha_{k-1} (v_k - x_{k-1})\\
			& = \alpha_{k-1}(v_k - x^*)-\alpha_{k-1}(x_{k-1} - x^*)\; .
\end{align*}
We may now compute the coefficients for the following terms: $\|v_k - x^*\|^2$, $\|v_{k-1} - x^*\|^2$, $\langle v_{k-1} - x^*, v_k - x^*\rangle$, $\langle x_{k-1} - x^*, v_k - x^*\rangle$, $\langle x_{k-1} - x^*, v_{k-1} - x^*\rangle$ and $\|x^* - x_{k-1}\|^2$.

For $\|v_k - x^*\|^2$, we have
\begin{align*}
\underbrace{\frac{L- \mu}{2}(\theta_{k-1} - \alpha_{k-1})^2 - \frac{L\alpha_{k-1}^2}{2}}_{\mbox{$\|v_k - x^*\|^2$ term}} - \underbrace{(L-\mu) (\theta_{k-1} - \alpha_{k-1})^2}_{\mbox{$\langle x_k - x_{k-1}, v_k - x^*\rangle$ term}} + \underbrace{\frac{(L - \mu)(\theta_{k-1} - \alpha_{k-1})^2}{2}}_{\mbox{$\| x_k - x_{k-1}\|^2$ term}} = - \frac{L\alpha_{k-1}^2}{2} \; .
\end{align*}

For $\|v_{k-1} - x^*\|^2$, there is only one term and we keep $\frac{(L - \mu)\theta_{k-1}^2}{2}$.

For $\langle v_{k-1} - x^*, v_k - x^*\rangle$, we get
\[
\underbrace{(L-\mu) \theta_{k-1}(\alpha_{k-1}- \theta_{k-1})}_{\mbox{$\langle v_{k-1} - x^*, v_k - x^*\rangle$ term}} - \underbrace{(L - \mu)\theta_{k-1}\left(\alpha_{k-1} - \theta_{k-1}\right)}_{\mbox{$\langle v_{k-1} - x^*, x_k - x_{k-1}\rangle$ term}} = 0 \; .
\]

For $\langle x_{k-1} - x^*, v_k - x^*\rangle$, we get
\[
\underbrace{(L-\mu) (\theta_{k-1} - \alpha_{k-1})^2}_{\mbox{$\langle x_k - x_{k-1}, v_k - x^*\rangle$ term}} - \underbrace{(L - \mu)(\theta_{k-1} - \alpha_{k-1})^2}_{\mbox{$\| x_k - x_{k-1}\|^2$ term}} = 0 \; .
\]

For $\langle x_{k-1} - x^*, v_{k-1} - x^*\rangle$, we get
\[
\underbrace{(L - \mu)\theta_{k-1}\left(\alpha_{k-1} - \theta_{k-1}\right)}_{\mbox{$\langle v_{k-1} - x^*, x_k - x_{k-1}\rangle$ term}} = (L - \mu)\theta_{k-1}\left(\alpha_{k-1} - \theta_{k-1}\right) \; .
\]

For $\|x^* - x_{k-1}\|^2$, we get
\[
\underbrace{ \frac{(L - \mu)(\theta_{k-1} - \alpha_{k-1})^2}{2}}_{\mbox{$\| x_k - x_{k-1}\|^2$ term}} - \underbrace{\frac{\mu}{2} \alpha_{k-1}(1 - \alpha_{k-1})}_{\mbox{$\|x^* - x_{k-1}\|^2$ term}} = - \frac{\mu}{2} \theta_{k-1}(1 - \alpha_{k-1}) \; .
\]

Hence, we have
\begin{align*}
f(x_k) 	&  \leqslant E_k + \alpha_{k-1} f(x^*) + (1 - \alpha_{k-1}) f(x_{k-1})\\
		& \hspace*{1cm}- \frac{L\alpha_{k-1}^2}{2}\|v_k - x^*\|^2\\
		& \hspace*{1cm}+ \frac{(L - \mu)\theta_{k-1}^2}{2} \|v_{k-1} - x^*\|^2\\
		& \hspace*{1cm}+ (L - \mu)\theta_{k-1}\left(\alpha_{k-1} - \theta_{k-1}\right)\langle v_{k-1} - x^*, x_{k-1} - x^*\rangle\\
		& \hspace*{1cm} - \frac{\mu}{2} \theta_{k-1}(1 - \alpha_{k-1}) \|x_{k-1} - x^*\|^2\\
		& \hspace*{1cm} - \frac{\theta_{k-1}(L - \mu)^2(\theta_{k-1} - \alpha_{k-1})^2 }{2\mu(1 -\alpha_{k-1})}\|v_{k-1} - x^*\|^2\\
		& \hspace*{1cm} + \frac{\theta_{k-1}(L - \mu)^2(\theta_{k-1} - \alpha_{k-1})^2 }{2\mu(1 -\alpha_{k-1})}\|v_{k-1} - x^*\|^2 \; ,
\end{align*}
the last two lines allowing us to complete the square. We may now factor it to get
\begin{align*}
f(x_k) 	&  \leqslant E_k + \alpha_{k-1} f(x^*) + (1 - \alpha_{k-1}) f(x_{k-1})\\
		& \hspace*{1cm}- \frac{L\alpha_{k-1}^2}{2}\|v_k - x^*\|^2\\
		& \hspace*{1cm}+ \frac{(L - \mu)\theta_{k-1}^2}{2} \|v_{k-1} - x^*\|^2\\
		& \hspace*{1cm} - \frac{\mu}{2} \theta_{k-1}(1 - \alpha_{k-1}) \left\|x_{k-1} - x^* - \frac{(L-\mu)(\alpha_{k-1} - \theta_{k-1})}{\mu(1 -\alpha_{k-1})}(v_{k-1} - x^*)\right\|^2\\
		& \hspace*{1cm} + \frac{\theta_{k-1}(L - \mu)^2(\theta_{k-1} - \alpha_{k-1})^2 }{2\mu(1 - \alpha_{k-1})}\|v_{k-1} - x^*\|^2 \; .
\end{align*}
Discarding the term depending on $x_{k-1} - x^*$ and regrouping the terms depending on $\|v_{k-1} - x^*\|^2$, we have
\begin{align*}
f(x_k) 	&  \leqslant E_k + \alpha_{k-1} f(x^*) + (1 - \alpha_{k-1}) f(x_{k-1})\\
		& \hspace*{1cm}- \frac{L\alpha_{k-1}^2}{2}\|v_k - x^*\|^2\\
		& \hspace*{1cm}+ \frac{(L\alpha_{k-1} - \mu)\alpha_{k-1}}{2} \|v_{k-1} - x^*\|^2 \; .
\end{align*}
Reordering the terms, we have
\beq
f(x_k) - f(x^*) + \frac{L\alpha_{k-1}^2}{2}\|v_k - x^*\|^2 \leqslant (1 - \alpha_{k-1})\left(f(x_{k-1}) - f(x^*)\right) + \frac{(L\alpha_{k-1} - \mu)\alpha_{k-1}}{2} \|v_{k-1} - x^*\|^2 + E_k \; .
\label{eq:rate_nesterov_strongly_convex}
\eeq
We can rewrite Eq.~(\ref{eq:rate_nesterov_strongly_convex}) as
\begin{align*}
f(x_k) - f(x^*) + \frac{L\alpha_{k-1}^2}{2}\|v_k - x^*\|^2 \leqslant (1 - \alpha_{k-1})	&\left(f(x_{k-1}) - f(x^*) + \frac{L\alpha_{k-2}^2}{2}\|v_{k-1} - x^*\|^2\right)\\
		&+ \frac{L\alpha_{k-1}^2 - \mu\alpha_{k-1} - (1 - \alpha_{k-1})L\alpha_{k-2}^2}{2}\|v_{k-1} - x^*\|^2\\
		& + E_k \; .
\label{eq:rate_nesterov_strongly_convex_fixed_alpha}
\end{align*}
Using
\[
\alpha_k = \sqrt{\frac{\mu}{L}}
\]
and denoting
\begin{align}
\delta_k & = f(x_k) - f(x^*) + \frac{\mu}{2}\|v_k - x^*\|^2 \; ,
\label{eq:delta_k_def}
\end{align}
we get the following recursion:
\begin{align}
\delta_k & \leqslant \left(1 - \sqrt{\frac{\mu}{L}}\right)\delta_{k-1} + E_k \; .
\end{align}
Applying this relationship recursively, we get
\begin{align}
\delta_k & \leqslant \left(1 - \sqrt{\frac{\mu}{L}}\right)^k \delta_0 + \sum_{t = 1}^{k} E_t\left(1 - \sqrt{\frac{\mu}{L}}\right)^{k-t} \; .
\label{eq:delta_bound}
\end{align}
Since $E_k = \varepsilon_k + \alpha_{k-1}\left\langle e_k+Lf_k,x^* - v_k\right\rangle$, we can bound it by
\begin{align}
E_k &\leqslant \varepsilon_k + \sqrt{\frac{\mu}{L}} \left(\|e_k\| + \sqrt{2L\varepsilon_k}\right)\|v_k - x^*\|
\label{eq:E_k}
\end{align}
using $\|f_k\| \leqslant \sqrt{\frac{2\varepsilon_k}{L}}$.
Plugging Eq.~(\ref{eq:E_k}) into Eq.~(\ref{eq:delta_bound}), we get
\begin{align}
\delta_k & \leqslant \left(1 - \sqrt{\frac{\mu}{L}}\right)^k \left(\delta_0 + \sum_{t = 1}^{k} \left(\varepsilon_t + \sqrt{\frac{\mu}{L}} \left(\|e_t\| + \sqrt{2L\varepsilon_t}\right)\|v_t - x^*\|\right)\left(1 - \sqrt{\frac{\mu}{L}}\right)^{-t}\right) \; .
\label{eq:full_delta_bound}
\end{align}

Again, we shall use Eq.~(\ref{eq:delta_bound}) to first bound the values of $\|v_i - x^*\|$, then the function values themselves.

\subsubsection{Bounding $\|v_i - x^*\|$}

We will now use Lemma~\ref{lemma:bound} to bound the value of $\|v_k - x^*\|$. Since $\|v_k - x^*\|^2$ is bounded by $\frac{2\delta_k}{\mu}$ (using Eq.~(\ref{eq:delta_k_def})), we can use Eq.~(\ref{eq:delta_bound}) to get
\begin{align*}
\|v_k - x^*\|^2 &\leqslant \frac{2}{\mu} \left(1 - \sqrt{\frac{\mu}{L}}\right)^k \left[\delta_0 + \sum_{t=1}^k \varepsilon_t \left(1 - \sqrt{\frac{\mu}{L}}\right)^{-t}\right]\\
				&\hspace*{1cm} + \frac{2}{\sqrt{L\mu}} \sum_{t = 1}^k \left(\|e_t\| + \sqrt{2L\varepsilon_t}\right)\left(1 - \sqrt{\frac{\mu}{L}}\right)^{k-t}\|v_t - x^*\| \; .
\end{align*}
Multiplying both sides by $\left(1 - \sqrt{\frac{\mu}{L}}\right)^{-k}$ yields
\begin{align}
\left(1 - \sqrt{\frac{\mu}{L}}\right)^{-k}\|v_k - x^*\|^2 &\leqslant \frac{2}{\mu} \left[\delta_0 + \sum_{t=1}^k \varepsilon_t \left(1 - \sqrt{\frac{\mu}{L}}\right)^{-t}\right]\nonumber\\
				&\hspace*{1cm} + \frac{2}{\sqrt{L\mu}} \sum_{t = 1}^k \left(\|e_t\| + \sqrt{2L\varepsilon_t}\right)\left(1 - \sqrt{\frac{\mu}{L}}\right)^{-t/2}\left(1 - \sqrt{\frac{\mu}{L}}\right)^{-t/2}\|v_t - x^*\| \; .
\end{align}
Using Lemma~\ref{lemma:bound} with $\displaystyle S_k = \frac{2}{\mu} \left[\delta_0 + \sum_{t=1}^k \varepsilon_t \left(1 - \sqrt{\frac{\mu}{L}}\right)^{-t}\right]$ and $\lambda_i = \frac{2}{\sqrt{L\mu}} \left(\|e_i\| + \sqrt{2L\varepsilon_i}\right)\left(1 - \sqrt{\frac{\mu}{L}}\right)^{-i/2}$, we get
\begin{align}
\left(1 - \sqrt{\frac{\mu}{L}}\right)^{-k/2}\|v_k - x^*\| & \leqslant \frac{1}{2} \sum_{t=1}^k \lambda_t + \left(
\frac{2}{\mu}\delta_0 + \frac{2}{\mu}\widehat{B}_k + \left( \frac{1}{2} \sum_{t=1}^k \lambda_t \right)^2
\right)^{1/2} \; .
\end{align}
with
\begin{align*}
\widehat{B}_k &= \sum_{t=1}^k \varepsilon_t \left(1 - \sqrt{\frac{\mu}{L}}\right)^{-t}
\end{align*}
Since $\widehat{B}_t$ is an increasing sequence and the $\lambda_t$ are positive, we have for $i \leqslant k$
\begin{align*}
\left(1 - \sqrt{\frac{\mu}{L}}\right)^{-i/2}\|v_i - x^*\| &\leqslant \frac{1}{2} \sum_{t=1}^i  \lambda_t + \left(\frac{2}{\mu}\delta_0 +  \frac{2}{\mu}\widehat{B}_i + \left( \frac{1}{2} \sum_{t=1}^i  \lambda_t \right)^2\right)^{1/2}\\
			& \leqslant \frac{1}{2} \sum_{t=1}^i  \lambda_t + \sqrt{\frac{2\delta_0}{\mu}} +\sqrt{\frac{2\widehat{B}_i}{\mu}} + \frac{1}{2} \sum_{t=1}^i  \lambda_t\\
			& = \sum_{t=1}^k  \lambda_t + \sqrt{\frac{2\delta_0}{\mu}} +\sqrt{\frac{2\widehat{B}_k}{\mu}}\; .
\end{align*}
Hence,
\begin{align}
\|v_i - x^*\| &\leqslant \left(1 - \sqrt{\frac{\mu}{L}}\right)^{i/2}\left(\sum_{t=1}^k  \lambda_t + \sqrt{\frac{2\delta_0}{\mu}} +\sqrt{\frac{2\widehat{B}_k}{\mu}}\right)\; .
\label{eq:v_i_bound}
\end{align}

\subsubsection{Bounding the function values}

Denoting $\widehat{A}_k = \frac{\mu}{2}\sum_{t=1}^k  \lambda_t = \sqrt{\frac{\mu}{L}} \sum_t\left(\|e_t\| + \sqrt{2L\varepsilon_t}\right)\left(1 - \sqrt{\frac{\mu}{L}}\right)^{-t/2}$, we obtain after plugging Eq.~(\ref{eq:v_i_bound}) into Eq.~(\ref{eq:full_delta_bound})
\begin{align*}
\delta_k & \leqslant  \left(1 - \sqrt{\frac{\mu}{L}}\right)^k\left[\delta_0 + \sum_{t=1}^k \varepsilon_t \left(1 - \sqrt{\frac{\mu}{L}}\right)^{-t}\right]\\
				&\hspace*{1cm} + \sqrt{\frac{\mu}{L}} \sum_{t = 1}^k \left(\|e_t\| + \sqrt{2L\varepsilon_t}\right)\left(1 - \sqrt{\frac{\mu}{L}}\right)^{k-t/2}\left(\frac{2\widehat{A}_k}{\mu} +\sqrt{\frac{2\delta_0}{\mu}} +\sqrt{\frac{2\widehat{B}_k}{\mu}}\right)\\
	& = \left(1 - \sqrt{\frac{\mu}{L}}\right)^k \left(\delta_0 + \widehat{B}_k + \frac{\mu}{2}\sum_{t=1}^k \lambda_t \left(\frac{2\widehat{A}_k}{\mu} +\sqrt{\frac{2\delta_0}{\mu}} +\sqrt{\frac{2\widehat{B}_k}{\mu}}\right)\right)\\
& = \left(1 - \sqrt{\frac{\mu}{L}}\right)^k \left(\delta_0 + \widehat{B}_k + \widehat{A}_k \left(\frac{2\widehat{A}_k}{\mu} +\sqrt{\frac{2\delta_0}{\mu}} +\sqrt{\frac{2\widehat{B}_k}{\mu}}\right)\right)\\
		& \leqslant \left(1 - \sqrt{\frac{\mu}{L}}\right)^k \left(\sqrt{\delta_0} + \widehat{A}_k\sqrt{\frac{2}{\mu}} + \sqrt{\widehat{B}_k}\right)^2 \; .
\end{align*}
Using the $L$-Lipschitz gradient of $f$ and the fact that $v_0 = x_0$, we have
\begin{align*}
\sqrt{\delta_0} 	& = \sqrt{f(x_0) - f(x^*) + \frac{\mu}{2}\|v_0 - x^*\|^2}\\
				& \leqslant \sqrt{2(f(x_0) - f(x^*))}\; .
\end{align*}
Hence, discarding the term $\|v_k - x^*\|^2$ of $\delta_k$, we have
\begin{align}	
	f(x_k) - f(x^*) & \leqslant \left(1 - \sqrt{\frac{\mu}{L}}\right)^k \left(\sqrt{2(f(x_0) - f(x^*))} + \widehat{A}_k\sqrt{\frac{2}{\mu}} + \sqrt{\widehat{B}_k}\right)^2 \; .
\end{align}
\end{proof}

\small
\bibliographystyle{unsrt}
\bibliography{inexact_prox}

\end{document}